%% file: root.tex
\documentclass[journal]{IEEEtran}  
\usepackage{amsmath} 
\usepackage{amsthm}
\theoremstyle{definition}

\newtheorem{assumption}{Assumption}
\usepackage{amssymb}  
\usepackage{bm}
\usepackage{graphicx}
\usepackage{algorithm}
\usepackage{algpseudocode}  
\usepackage{booktabs}
\makeatletter
\let\IEEEorigmakecaption\@makecaption
\makeatother
\usepackage[labelsep=period]{caption}
\usepackage{subcaption}

\usepackage{float}
\usepackage{diagbox}
\usepackage{multirow}
\usepackage{booktabs}
\usepackage{hyperref}
\usepackage{url}
\usepackage{amsthm}
\newtheorem{lemma}{Lemma}
\usepackage{svg}

\usepackage{xcolor} 
\usepackage{tabularx}
\usepackage[noprefix]{nomencl} 
\usepackage{pifont}
\makeatletter
\AtBeginDocument{%
  \let\@makecaption\IEEEorigmakecaption
  \setlength\abovecaptionskip{0.5\baselineskip}%
  \setlength\belowcaptionskip{0pt}%
}
\makeatother

\usepackage{cite}

\ifCLASSINFOpdf
\else
\fi
\begin{document}
%
\title{Simultaneous Contact Selection and Planning for Contact-Rich Manipulation with Cascaded Optimization}

\author{Zhe Zhang$^\dagger$, Xingrong Diao$^\dagger$, Haoxiang Liang, Han Yang, Bi-Ke Zhu, Dandan Zhang, Jiankun Wang, \textit{Senior Member}, \textit{IEEE} %
\thanks{$^\dagger$ denotes equal contribution.}}

\markboth{Journal of \LaTeX\ Class Files,~Vol.~18, No.~9, September~2020}%
{Simultaneous Contact Selection and Planning for Contact-Rich Manipulation with Cascaded Optimization}

\maketitle

\begin{abstract}
We propose an optimization-based framework for robust contact-rich manipulation. Recent contact-implicit methods enable online hybrid planning across contact modes, allowing closed-loop manipulation for a given target state and contact location sequence of the robot and object. However, most existing approaches lack the ability to autonomously reason and generate diverse contact location sequences and manipulation trajectories, i.e., active contact location selection, which limits their applicability to relatively simple tasks. Active contact location selection is challenging due to complementarity in contact dynamics and the sparse gradients, making the design of a unified framework for contact selection and planning difficult. To address these challenges, we introduce Simultaneous Contact Selection and Planning (SCSP), a cascaded optimization framework comprising Contact Selection Optimization (CSO) and Contact Planning Optimization (CPO). CSO leverages a surrogate contact model and discrete-continuous optimization to efficiently resolve the nonsmoothness and coupling in contact selection, enabling online global searching of optimal contact locations. CPO performs prior-guided contact planning by evaluating the reference contact locations produced by CSO and generating corresponding manipulation trajectories in real time for redundant manipulators. Extensive simulations and real-world experiments demonstrate that SCSP produces diverse manipulation behaviors and robust control under inaccurate dynamics and perceptual noise. We further validate the generalization of the framework on challenging manipulation tasks.

Project website: \href{https://sites.google.com/view/scsp-robot}{https://sites.google.com/view/scsp-robot}.
\end{abstract}

\begin{IEEEkeywords}
Contact-Rich Manipulation, Contact Selection,  Contact-Implicit Trajectory Optimization
\end{IEEEkeywords}

\section*{Nomenclature}
\addcontentsline{toc}{section}{Nomenclature}
\begin{IEEEdescription}[\IEEEusemathlabelsep\IEEEsetlabelwidth{$V_1,V_2,V_3$}]
\item[$p, \boldsymbol{p}$] A 3D point and its position vector. 
\item[$\theta, \boldsymbol{\psi}$] Angle and quaternion vector.
\item[$\mathcal{O}, \partial \mathcal{O}$] Object geometry and surface. 
\item[$\mathcal{R}, \partial \mathcal{R}$] Robot geometry and surface. 

\item[$h$] Time interval.
\item[$\boldsymbol{\lambda}, \boldsymbol{\beta}$] Contact force vector and contact impulse vector. 
\item[$\boldsymbol{\phi}$] Contact distance vector.
\item[$\boldsymbol{J}$] Contact Jacobian. 
\item[$\boldsymbol{n}, \boldsymbol{t}_i$] Normal vector and $i$-th tangental vector.
\item[{$\boldsymbol{x} = [ \boldsymbol{p}, \boldsymbol{\psi} ]$}] Pose vector consists of position and orientation.
\item[$\boldsymbol{q}, \dot{\boldsymbol{q}}, \ddot{\boldsymbol{q}}$] Robot configuration, velocity and acceleration.
\item[$\boldsymbol{s}$] System state consists of object pose $\boldsymbol{x}_o$ and  $\boldsymbol{q}$. 
\item[$\boldsymbol{v}$] Velocity vector in Cartesian space.
\end{IEEEdescription}
%

\input{introduction}

\input{preliminary}

\input{method}

\input{experiment}

\input{generalize}

\input{discussion}
\input{conclusion}

\bibliographystyle{IEEEtran}

\bibliography{reference}

\appendix

\input{appendix}

\end{document}

%% file: introduction.tex
\section{Introduction}
\label{sec:intro}

\IEEEPARstart{C}{ontact}-Rich manipulation requires robots to intentionally make and break contact with objects to drive them toward desired states. Specifically, the robot needs to plan the contact sequence consisting of approaching trajectory, contact locations, contact forces, and transitions between different contact modes (i.e., contact, separation, sliding, and sticking). Such contact planning is challenging due to the combinatorial complexity of contact modes and the inherent nonsmoothness arising from the hybrid nature of contact dynamics. Early works adopted hierarchical frameworks by separating continuous-state planning and discrete contact mode searching \cite{zito2012two, wu2020r3t, haustein2015kinodynamic, chen2021trajectotree, cheng2022contact}, or utilizing mixed-integer programs to handle the hybrid nature of contact \cite{aceituno2020global, marcucci2017approximate, hogan2020feedback}. However, this separation often suffers from poor scalability caused by the explosion of contact trajectories. Therefore, these methods offer limited manipulation strategies and poor robustness. 

Recent advances have been achieved through both learning-based methods and model-based methods. Learning-based methods \cite{kim2023pre, lianglearning, zhou2023learning, zhang2023learning, zhou2023hacman, cho2024corn, jiang2024hacman++, le2025enhancing, lyu2025dywa} can capture the nonsmooth nature of contact dynamics through offline reinforcement learning (RL), thus efficiently handle complex contact-rich manipulation tasks that are difficult for model-based methods. However, they are often limited to generalization across
diverse object types \cite{kim2023pre, lianglearning, zhou2023learning} or new task settings \cite{zhou2023hacman, jiang2024hacman++, le2025enhancing, cho2024corn}. This constitutes a major bottleneck in developing foundation models for manipulation. DyWA \cite{lyu2025dywa} proposes a world-action model architecture and shows impressive results in 6D non-prehensile manipulation with generalization across diverse object geometries. However, a generalizable framework remains to be explored.

\begin{figure}[!t]
 \centerline{\includegraphics[width=\columnwidth]{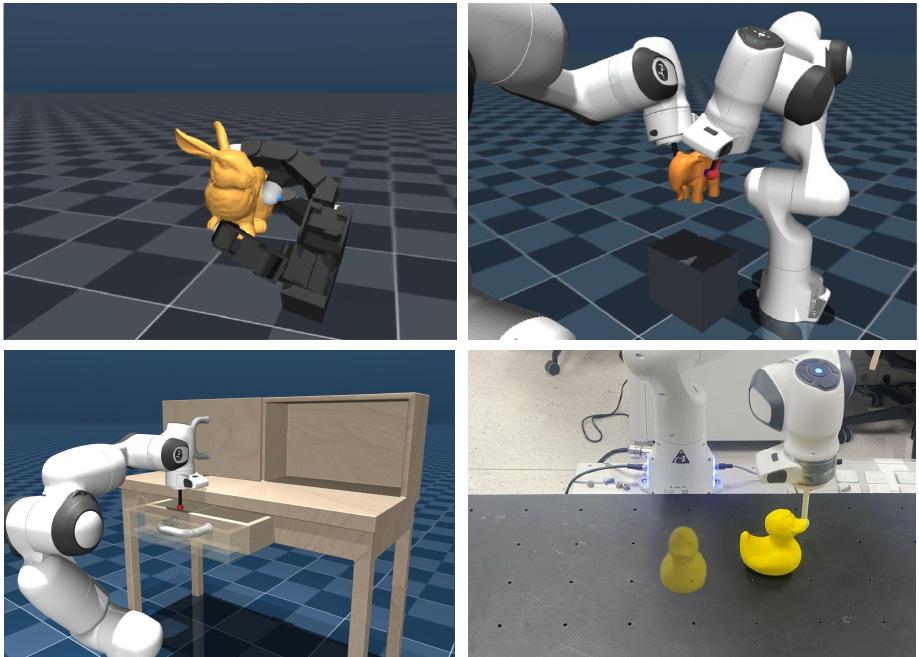}}
\caption{SCSP enables autonomous reasoning for contact sequence selection and online generation of diverse manipulation trajectories, allowing it to be applied to a variety of challenging contact-rich manipulation tasks.}
\label{fig:scsp_fig1}
\end{figure}
The representative model-based method is contact-implicit trajectory optimization (CITO), which was first applied to locomotion \cite{posa2014direct, wensing2023optimization, li2025surprising} and later extended to manipulation. We mainly focus on Contact-Implicit Model Predictive Control (CIMPC), which solves the contact planning problem online. These methods leverage time-stepping contact dynamics \cite{stewart2000implicit, anitescu2006optimization} and incorporate a convexified contact model within trajectory optimization, enabling hybrid contact planning in a unified framework. CIMPC can compute control inputs autonomously based solely on the object's current pose, goal pose, and dynamics information to establish and break contact with the object. Existing methods primarily focus on investigating different approximation formulations and solution strategies. For instance, C3 \cite{aydinoglu2024consensus} locally linearizes the LCS model and solves it using various solvers. CQDC \cite{pang2023global} proposed a convexified contact model significantly reducing computational complexity, and subsequent works \cite{suh2025dexterous}\cite{jiang2025robust} have built upon this idea to design even more efficient algorithms. Furthermore, CF-MPC \cite{jin2024complementarity} derives a complementarity-free model to improve computational efficiency. Although such online hybrid planning has increased manipulation diversity and enabled these methods to accomplish more complex tasks, the lack of active contact selection remains a key bottleneck to their performance. These methods are largely limited to manipulating objects with simple geometries or operating under accurate dynamics and perceptions. Additionally, CIMPC is prone to converging to local optima when the robot's initial guess is poor.

Active contact selection is crucial in complex contact-rich manipulation, as it enables the generation of sufficiently diverse manipulation strategies to avoid local optima. However, most existing model-based methods lack this capability. Current approaches either rely on hand-crafted cost functions to guide contact or utilize a given reference trajectory as a contact prior for planning. This constrains their manipulation trajectories to a narrow distribution, making them prone to failure when facing out-of-distribution (OOD) tasks. \cite{suh2025dexterous, jiang2025robust} rely on high-quality prior knowledge of contact locations and are unable to perform a global search for the optimal contact location. \cite{li2025surprising} developed a novel optimization formulation named CRISP to achieve global planning of contact force and contact location, but it depends on explicit geometric representations of objects and is limited to 2D pushing tasks involving simple geometries. IDTO \cite{kurtz2026inverse} removes the shooting phase from CIMPC and reformulates the problem as a trajectory optimization and inverse dynamics problem, enabling real-time computation on highly redundant robots. However, this method is limited to simple regular objects to ensure gradient continuity during optimization. STOCS-3D \cite{zhang2025simultaneous} enables contact planning for arbitrarily complex geometries by formulating CITO as a semi-infinite problem, but it is restricted to offline computation and has not been validated on robot manipulation tasks. \cite{venkatesh2025approximating, bui2025push} achieve 6D nonprehensile manipulation by sampling points across the object’s surface and evaluating parallel CIMPC instances to determine the optimal contact location, but they suffer from prohibitive computational costs. \cite{pan2025spider} uses a reference trajectory as prior guidance for contact selection, but it is unable to achieve autonomous reasoning. \cite{xie2026touch} explores the use of an RL policy to generate the optimal contact location and contact planning parameters, enabling online contact selection and planning. This work inspiring for our approach, but it still lacks validation of generalization across different topologies. Designing a unified planning framework for simultaneous contact selection and contact planning is challenging. The complementarity inherent in contact dynamics introduces nonsmoothness into contact selection. Moreover, since the choice of contact locations depends on the object's state evolution at the time of contact, hybrid planning across contact-free and contact-rich stages renders contact selection a sparse-gradient optimization problem. As a result, a generative framework for unified contact selection and planning remains absent.

\begin{table*}[ht]
\centering
\caption{Comparison of Contact Planning Methods.}
\label{tab:comparison}
\small
\begin{tabularx}{\textwidth}{l c c c c c c}
\toprule
\textbf{Methods} & \textbf{Perception} & \textbf{Contact Selection} & \textbf{6D Manip. Exp.} & \textbf{Real-Time} & \textbf{Arbitrary Obj. Shape} & \textbf{Generalization} \\
\midrule
C3~\cite{aydinoglu2024consensus}        & Exact & \ding{55}  & \ding{55} & \ding{51} & \ding{55} & \ding{51} \\
CRISP~\cite{li2025surprising} & Exact & \ding{51} & \ding{55} & \ding{51} & \ding{55} & \ding{55} \\
IDTO~\cite{kurtz2026inverse}        & Exact & \ding{55}  & \ding{51} & \ding{51} & \ding{55} & \ding{51} \\
CQDCs~\cite{pang2023global}  & Exact & \ding{55}  & \ding{51} & \ding{51} & \ding{55} & \ding{51} \\
CF-MPC~\cite{jin2024complementarity} & Exact & \ding{55} & \ding{51} & \ding{51} & \ding{51} & \ding{51} \\
STOCS-3D~\cite{zhang2025simultaneous} & Exact & \ding{51}  & \ding{51} & \ding{55} & \ding{51} & \ding{51} \\
DyWA~\cite{lyu2025dywa}  & Vision & \ding{51}  & \ding{51} & \ding{51} & \ding{51} & \ding{55} \\
Sampling~\cite{venkatesh2025approximating}  & Vision & \ding{51}  & \ding{51} & \ding{51} & \ding{51} & Not validated \\
\midrule
\textbf{SCSP} & Vision & \ding{51} & \ding{51} & \ding{51} & \ding{51} & \ding{51} \\
\bottomrule
\end{tabularx}
\end{table*}
In this work, we propose a novel cascaded optimization framework, Simultaneous Contact Selection and Planning (SCSP), for complex contact-rich manipulation. SCSP achieves superior dexterity and diversity in manipulation strategies compared to traditional model-based methods, while maintaining robustness under inaccurate model and perception noise. Our key insight is that contact-rich manipulation naturally decomposes into two stages: Contact Selection Optimization (CSO), which globally searches for optimal contact locations by fast simulating state transitions under contact forces, and Contact Planning Optimization (CPO), which performs hybrid planning of manipulation trajectories, including approaching, contact, and contact-switching. In CSO, we analyze why existing CIMPC formulation struggle with active contact selection. The challenge stems from the nonsmoothness caused by complementarity in contact dynamics and the inherent sparsity of gradients in the optimization landscape. To overcome these challenges and enable online selection of contact locations, we discretize the problem by sampling candidate points on the object surface, reformulating CSO as a discrete-continuous optimization that resolves the nonsmoothness. Furthermore, we derive a Surrogate Contact Model (SCM) to approximate the original contact dynamics, alleviating complex coupling and complementarity while significantly improving computational efficiency. This reduces the contact selection problem to an efficient Mixed-Integer Quadratic Programming (MIQP) formulation, where each single-step QP can be solved at sub-millisecond timescales. In CPO, we introduce a prior-guided contact planning formulation. First, we employ a ranking strategy to comprehensively evaluate and selectively execute the reference contact locations predicted by CSO. Then, we design a simple yet effective objective that unifies hybrid planning of contact location switching, approach, and contact within a single optimization. Finally, we propose two efficient solvers for CPO, which are applicable to robots with diverse kinematic configurations.

Comprehensive comparison with celebrated contact-rich manipulation methods is provided in Table \ref{tab:comparison}. SCSP is a novel framework that does not rely on exact perception, enables global contact selection, has been validated in 6D manipulation experiments, supports real-time planning, can manipulate objects with arbitrary geometries, and generalizes to different tasks as shown in Fig. \ref{fig:scsp_fig1}.

The main contributions are:
\begin{itemize}
    \item A novel cascaded optimization framework for contact-rich manipulation is presented, comprising a Contact Selection Optimization (CSO) module and a Contact Planning Optimization (CPO) module. The framework addresses the challenges of integrating active contact selection into CIMPC methods, enabling greater diversity in manipulation strategies and improved robustness under inaccurate dynamics and perceptual noise. Moreover, the method can be generalized to the manipulation of arbitrarily complex object geometries, a variety of robot configurations, and diverse contact-rich tasks.

    \item An efficient contact selection algorithm is proposed. We transform the nonsmooth contact selection optimization into a discrete–continuous problem using a mesh sampling and exchange method. A surrogate contact model is then employed to simplify the contact dynamics, reducing the original problem to a tractable MIQP and enabling fast global search for the optimal contact locations.

    \item A novel prior-guided contact planning formulation is presented. A ranking strategy is employed to evaluate the quality of contact priors, establishing a more expressive connection between prior guidance and contact planning. Additionally, an effective objective is designed for online hybrid planning of the manipulation trajectory, covering approaching, contacting, and switching between contacts.

    \item Extensive simulations and real-world experiments are conducted to validate the superiority and robustness of our approach over existing methods. To the best of our knowledge, SCSP is the first model-based method to demonstrate real-world 6D nonprehensile manipulation of arbitrarily complex object geometries using only vision perception, without the assistance of markers or motion capture systems.
\end{itemize}

The proposed SCSP overcomes the theoretical limitations of existing contact-implicit methods. Although recent RL approaches have achieved success in many locomotion and manipulation tasks, our goal is to make diverse and robust contact planning more interpretable. Furthermore, inspired by the structure of SCSP, we aim to investigate the role of world models and their guidance in contact-rich planning in a principled manner.

The rest of the paper is organized as follows. Section \ref{sec:preliminaries} reviews the preliminaries. The CSO and CPO are introduced in Sections \ref{sec:cso} and \ref{sec:cpo}, respectively. Section \ref{sec:experiment} presents the experimental results, and section \ref{sec:generalize} demonstrates the generalization of the framework on other contact-rich manipulation tasks. Finally, discussion and limitations are presented in Section \ref{sec:discussion}, and conclusions are drawn in Section \ref{sec:conclusion}.

%% file: preliminary.tex
\section{Preliminaries}
\label{sec:preliminaries}

We begin with a brief overview to help readers build the preliminary background and better understand the remainder of this article. Section \ref{preliminaries:contact_dynamics} introduces time-stepping contact dynamics and the associated complementarity problem. Section \ref{preliminaries:contact_planning} presents the mathematical definition of contact planning, and describes a representative solution, namely CIMPC. Section \ref{preliminaries:complementarity_free} reviews the recently proposed complementarity-free model, which significantly improves computational efficiency and provides inspiration for our work.

\subsection{Contact Dynamics}\label{preliminaries:contact_dynamics}

Contact dynamics provides a physical description of interactions among the robot, object, and environment. 
Here, we consider the commonly used Anitescu model \cite{anitescu2006optimization}, which is formulated as the following time-stepping equation: 
\begin{subequations}\label{equ.quasi_dyn}
\begin{gather}
\epsilon\boldsymbol{M}_o\boldsymbol{v}_o = 
    h\boldsymbol{\tau}_o + \sum\nolimits_{i=1}^{n_c}{\boldsymbol{J}}_{o,i}^{\top}\boldsymbol{\beta}_i,
    \label{equ.quasi_dyn_1} \\
    h\boldsymbol{K}_r(h\boldsymbol{v}_r - \boldsymbol{u}) = 
    h\boldsymbol{\tau}_r + \sum\nolimits_{i=1}^{n_c}{\boldsymbol{J}}_{r,i}^{\top}\boldsymbol{\beta}_i.
    \label{equ.quasi_dyn_2}
\end{gather}
\end{subequations}
where we typically simplify $\boldsymbol{\beta}_i$ as  $\boldsymbol{\beta}_i \approx h \boldsymbol{\lambda}_i$ within a short time interval $h$. $\epsilon\boldsymbol{M}_o$, $\boldsymbol{v}_o$, $\boldsymbol{\tau}_o$ and $\boldsymbol{J}_{o,i}$ are the regularized mass matrix, velocity change, non-contact force and contact jacobian of the object, $\boldsymbol{K}_r$, $\boldsymbol{v}_r$, $\boldsymbol{\tau}_r$ and $\boldsymbol{J}_{r,i}$ are the robot's stiffness matrix, velocity, non-contact torque and contact jacobian. $\boldsymbol{u}$ is the control inputs and is defined as $\boldsymbol{u}=\Delta \boldsymbol{q}$.

The above equations describe, from the perspective of momentum balance, how the robot influences the object’s state through contact, as well as how the object interacts with the environment. Here, we follow \cite{pang2021convex, jin2024complementarity} and consider the robot to be under impedance control \cite{hogan1984impedance}, modeling the robot–object interaction as a spring system. The contact constraints between contact forces (or contact impulses) and system motion are typically formulated as a cone complementarity formulation \cite{anitescu2010iterative} as follows:
\begin{equation}\label{equ.complementarity_constraint}
    \\ \mathcal{K}_i\ni \boldsymbol{\lambda}_i \perp \boldsymbol{J}_i\boldsymbol{v}+\frac{1}{h}\begin{bmatrix}
        {\phi_i}\\
        0\\
        0
    \end{bmatrix}\in \mathcal{K}_i^*, \quad 
    \forall i=\{1,...,n_c\},
\end{equation}
where $\mathcal{K}_i = \left\{ \boldsymbol{\lambda}_i \in \mathbb{R}^3 \,\middle|\, 
\mu_i \lambda^n_i \ge \left\| \lambda^d_i \right\| \right\}$ is the Coulomb frictional cone, and $\mathcal{K}_i^*$ is the dual cone as $\boldsymbol{v} \in 
\left\{ \boldsymbol{v} \,\middle|\, 
\boldsymbol{J}^n_i \boldsymbol{v} + \frac{\phi_i}{h} 
\ge 
\mu_i \left\| \boldsymbol{J}^d_i \boldsymbol{v} \right\| 
\right\}$.

Equations \eqref{equ.quasi_dyn} and \eqref{equ.complementarity_constraint} are proved to be the Karush-Kuhn-Tucker (KKT) \cite{avriel2003nonlinear} optimality conditions for the following primal optimization \cite{anitescu2006optimization}:
\begin{subequations}\label{equ.contact_model}
\begin{align}
\min_{\boldsymbol{v}} \quad 
& \frac{1}{2}h^2\boldsymbol{v}^\top \boldsymbol{Q}\boldsymbol{v}
- h \boldsymbol{v}^\top \boldsymbol{b}(\boldsymbol{u})
\label{equ.qs_model_obj} \\
\text{s.t.} \quad 
& \boldsymbol{J}_i\boldsymbol{v}
+ \frac{1}{h}
\begin{bmatrix}
\phi_i \\
0 \\
0
\end{bmatrix}
\in \mathcal{K}_i^*, 
\quad i \in \{1,\dots,n_c\},
\label{equ.contact_model_con}
\end{align}
\end{subequations}
where $\boldsymbol{Q} \in \mathbb{R}^{(n_o+n_{\text{DoF}}) \times (n_o+n_{\text{DoF}})}$ and $\boldsymbol{b}(\boldsymbol{u}) \in \mathbb{R}^{n_o+n_r}$ are 
\begin{equation}\label{equ.q_matrix}
\boldsymbol{Q}:=\begin{bmatrix}
    \epsilon \boldsymbol{M}_o/h^2 & \boldsymbol{0}\\
    \boldsymbol{0}& \boldsymbol{K}_r
\end{bmatrix},
\quad 
    \boldsymbol{b}(\boldsymbol{u}):=\begin{bmatrix}
        \boldsymbol{\tau}_o\\
        \boldsymbol{K}_r \boldsymbol{u}+\boldsymbol{\tau}_r
    \end{bmatrix},
\end{equation}
 respectively. Here $n_o, n_{\text{DoF}}$ are the dimensions of the object and robot states, and $n_c$ is the number of contacts. (\ref{equ.contact_model}) is a second-order cone program (SOCP) and is often approximated with a QP by reformulating (\ref{equ.contact_model_con}) with polyhedral cone constraint:
\begin{equation}
    (\boldsymbol{J}^n_i-\mu_i \boldsymbol{J}^d_{i,j}) \boldsymbol{v} + \frac{\phi_i}{h} \ge 0, i \in \left \{ 1,\dots n_c \right \}, j \in \left \{ 1,\dots n_d \right \}, 
\end{equation}
where $n_d$ is the number of friction cone facets. Solving \eqref{equ.contact_model} is equivalent to performing one step of physically consistent contact simulation. The solution $\boldsymbol{v}$ can then be used to integrate the current state $\boldsymbol{q}_t$ by $\boldsymbol{q}_{t+1}=\boldsymbol{q}_t\oplus h \boldsymbol{v}$, thereby performing a one-step contact simulation. \eqref{equ.contact_model} and its variants are widely used in physics simulators.

\subsection{Model-based Contact Planning}\label{preliminaries:contact_planning}

Contact planning refers to generating trajectories for approaching, making contact, and breaking contact with the object to manipulate it to a target state. Model-based contact planning methods integrate contact dynamics and solve trajectory optimization problems. As a representative contact planning method, CIMPC is formulated as the following optimization problem:
\begin{equation}
    \begin{aligned}
\min_{\boldsymbol{u}_{0:N-1}} \quad & 
 {\textstyle \sum_{i=0}^{N-1}}  \ell(\boldsymbol{x}_k, \boldsymbol{u}_k) + V_f(\boldsymbol{x}_N) \\
\text{s.t.}\quad & \boldsymbol{x}_{k+1} = f(\boldsymbol{x}_k, \boldsymbol{u}_k), \quad k=0,\dots,N-1, \\
& \boldsymbol{x}_k \in \mathcal{X}, \quad k=0,\dots,N, \\
& \boldsymbol{u}_k \in \mathcal{U}, \quad k=0,\dots,N-1.
\end{aligned}\label{eq:CIMPC_opt}
\end{equation}
in which $\boldsymbol{x}_k$ is the system state, including object pose $\boldsymbol{q}_k$ and robot configuration, $\boldsymbol{u} = \Delta \boldsymbol{q}$ is the displacement, and $f$ is the contact dynamics. The stage cost $\ell$ can take various forms and is typically used to guide the process of approaching the contact location. The terminal cost $V_f$ is usually defined as the error between the object pose and the desired pose:
\begin{equation}\label{equ:obj_pose_cost}
\ell_{\text{pose}}
= w_{\text{pos}} \left\| \boldsymbol{p}_{\text{obj}} - \boldsymbol{p}_{\text{goal}} \right\|_2^2
+ w_{\text{quat}} \left( 1 - \left( \boldsymbol{\psi}_{\text{obj}}^\top \boldsymbol{\psi}_{\text{goal}} \right)^2 \right).
\end{equation}

The hybrid nature of contact dynamics renders \eqref{eq:CIMPC_opt} a piecewise-smooth nonlinear complementarity problem (NCP), which existing optimization methods are unable to solve online. Although \eqref{equ.contact_model} simplifies the contact dynamics, directly using \eqref{equ.contact_model} as the model $f$ remains computationally too expensive for online planning. In contrast, CIMPC achieves a local approximation of the original contact dynamics via Taylor expansion or linearization. Although this sacrifices accuracy, it yields smooth contact models that are compatible with existing efficient solvers and enable real-time optimization \cite{pang2023global, suh2025dexterous,jiang2025robust}.

\subsection{Complementarity-free Model}\label{preliminaries:complementarity_free}

In the following, we introduce an efficient contact model proposed in \cite{jin2024complementarity}, which has provided significant inspiration for our approach. (\ref{equ.contact_model}) has the dual problem:
\begin{multline}\label{equ.dual_prob_relax}
        \max_{\boldsymbol{\beta}\geq \boldsymbol{0}}\quad  -\frac{1}{2h^2} \boldsymbol{\beta}^\top \left(
    \boldsymbol{\tilde{J}} \boldsymbol{Q}^{-1} \boldsymbol{\tilde{J}}^\top+\boldsymbol{R}
    \right) \boldsymbol{\beta}\\-\frac{1}{h}(\boldsymbol{\tilde{J}} \boldsymbol{Q}^{{-}1}\boldsymbol{b}+\boldsymbol{\tilde{\phi}})^\top\boldsymbol{\beta}-\frac{1}{2} \boldsymbol{b}^\top\boldsymbol{Q}^{{-}1}\boldsymbol{b}.
\end{multline} 
in which $\boldsymbol{R}$ is small regularization term which is commonly used to enhance numerical stability, and:
\begin{equation}
    \boldsymbol{\tilde{J}}=\begin{bmatrix}\boldsymbol{J}_1^n-\mu_1\boldsymbol{J}_{1,1}^t
 \\ \dots 
 \\ \boldsymbol{J}_1^n-\mu_1\boldsymbol{J}_{1,n_t}^t
 \\ \vdots 
 \\ \boldsymbol{J}_{n_c}^n-\mu_{n_c}\boldsymbol{J}_{{n_c},1}^t
 \\ \dots 
 \\ \boldsymbol{J}_{n_c}^n-\mu_{n_c}\boldsymbol{J}_{{n_c},n_t}^t

\end{bmatrix}, \boldsymbol{\tilde{\phi}}=\begin{bmatrix}\phi_1
 \\ \dots
 \\ \phi_1
 \\ \vdots 
 \\ \phi_{n_c}
 \\ \dots
 \\ \phi_{n_c}
\end{bmatrix}, \boldsymbol{\beta}=\begin{bmatrix} \beta_{1,1}
 \\ \dots 
 \\ \beta_{1,n_d}
 \\ \vdots 
 \\ \beta_{n_c,1}
 \\ \dots 
 \\ \beta_{n_c,n_d}
\end{bmatrix}.
\end{equation}
Here, $\boldsymbol{\beta}$ is not only a dual variable but also has a physical interpretation as the contact impulse. Specifically, the solution of (\ref{equ.dual_prob_relax}) satisfies the following complementarity constraints:  
\begin{equation}\label{equ.dual_prob_relax_sol}
        \boldsymbol{0}\leq  \boldsymbol{\beta} \perp
        \frac{1}{h} \left(
    \boldsymbol{\tilde{J}} \boldsymbol{Q}^{-1} \boldsymbol{\tilde{J}}^\top{+}\boldsymbol{R}
    \right)\boldsymbol{\beta} 
    +\left(\boldsymbol{\tilde{J}} \boldsymbol{Q}^{{-}1}\boldsymbol{b}+{\boldsymbol{\tilde{\phi}}}\right)\geq \boldsymbol{0}.
\end{equation}
Assuming there exists a positive-definite diagonal matrix that satisfies
$\boldsymbol{K}(\boldsymbol{q}) = (\boldsymbol{\tilde{J}} \boldsymbol{Q}^{-1} \boldsymbol{\tilde{J}}^\top{+}\boldsymbol{R})^{-1}$,
we get the closed-form solution of (\ref{equ.dual_prob_relax}) as:
\begin{equation}
    \boldsymbol{\beta}^+= \max\left(
    -h\boldsymbol{K}(\boldsymbol{q})\big(\boldsymbol{\tilde{J}} \boldsymbol{Q}^{{-}1}\boldsymbol{b}+{\boldsymbol{\tilde{\phi}}}\big), \boldsymbol{0}
    \right).
\end{equation}
So the complementary-free model is written as:
\begin{equation}\label{equ:cf_model}
    f_{\text{cf}}:\begin{cases}
    \boldsymbol{v}^+ ={\frac{1}{h}\boldsymbol{Q}^{-1}\boldsymbol{b}}+ 
\frac{1}{h}\boldsymbol{Q}^{-1}\boldsymbol{\tilde{J}}^\top \boldsymbol{\beta}^+,
     \\ 
    \boldsymbol{x}_{k+1}=\boldsymbol{x}_{k}+h(\boldsymbol{v}_{k}+\boldsymbol{v}^+).
\end{cases}
\end{equation}
where $\boldsymbol{\beta}$ is the contact impulse of all the contact points and $\boldsymbol{v}^+$ is the combination of object velocity $\boldsymbol{v}_o$ and robot velocity $\boldsymbol{v}_r$. As discussed in \cite{jin2024complementarity}, this assumption can be viewed as a relaxation of the cone constraint, which is experimentally validated to be acceptable and significantly improves computational efficiency.

%% file: method.tex
\begin{figure*}
 \centerline{\includegraphics[width=\textwidth]{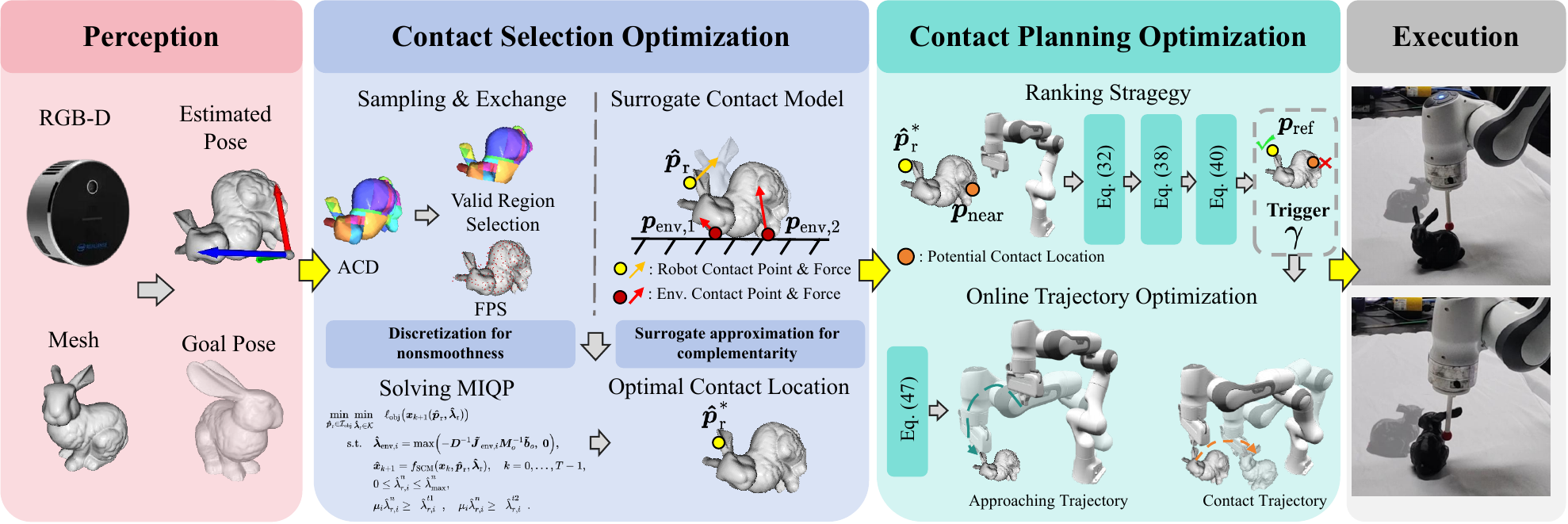}}
\caption{Framework structure of SCSP. SCSP consists of Contact Selection Optimization (CSO) and Contact Planning Optimization (CPO). CSO discretizes nonsmooth geometry through the Sampling \& Exchange method and introduces a surrogate contact model for efficient approximation of contact dynamics. By solving an MIQP, CSO online computes the globally optimal contact location $\boldsymbol{\hat{p}}_{\text{r}}^*$ as contact prior. The ranking strategy in CPO further evaluates whether the nearest point $\boldsymbol{p}_{\text{near}}$ can serve as a feasible suboptimal substitute for $\boldsymbol{\hat{p}}_{\text{r}}^*$ as the reference contact location, after which trajectory optimization is solved online to generate the approach and contact trajectories.}
\label{fig:framework}
\end{figure*}

\section{Contact Selection Optimization}
\label{sec:cso}

In this section, we present the detailed design of CSO. The CSO allows a global search for optimal contact locations on the object, which is crucial for contact-rich manipulation and enhances the capability to perform complex tasks.   
We begin by analyzing the key challenges of integrating contact selection into CIMPC, which limit the diversity of existing methods. We then demonstrate the main difficulties in contact selection optimization. Finally, we present the detailed design of our CSO module for online contact selection.

\subsection{Why Contact Selection is necessary?} 

Contact selection refers to choosing the contact locations $\left \{ \boldsymbol{p}_{\mathcal{O}, i} \right \} \in \partial \mathcal{O}$ and $\left \{ \boldsymbol{p}_{\mathcal{R}, i} \right \} \in \partial \mathcal{R}$ according to a given task specification. Since $\partial \mathcal{O}$ is a low-dimensional manifold embedded in Cartesian space, the set of robot–object contact states (i.e., the contact-rich stage) also lies on a low-dimensional manifold $\mathcal{S}_{\text{contact}}$ within the full state space $\mathcal{S}$. Task-related costs are defined only on $\mathcal{S}_{\text{contact}}$, whereas the remaining state space $\mathcal{S}_{\text{free}}$ is gradient-sparse. Moreover, CIMPC methods typically employ a short\footnote{For computational efficiency, these methods usually adopt a local approximation of the contact dynamics, which limits the valid region (i.e., the trust region) of the model and therefore necessitates a short planning horizon.} and fixed planning horizon with a fixed step size, which prevents reliable exploration of cost differences arising from different contact locations when starting from an arbitrary contact-free state $\boldsymbol{s}_k \in \mathcal{S}_{\text{free}}$. As a result, integrating contact selection into CIMPC, which must hybridly plan both contact-free and contact-rich trajectories, is challenging. Existing CIMPC methods often rely on manually designed costs or reference trajectories to guide contacts. Such designs effectively fix the contact locations and constrain the resulting manipulation trajectories to a narrow distribution, limiting the diversity of achievable behaviors. Moreover, the contact location is not a direct control variable in robot-centric CIMPC but is determined implicitly through nonsmooth collision checking, which further introduces complexity.

Rather than integrating contact selection into the CIMPC formulation, we study how to formulate contact selection as a separate optimization problem. As illustrated in Fig. \ref{fig:framework}, we first simplify the problem by ignoring the planning of the robot contact locations $\left \{ \boldsymbol{p}_{\mathcal{R}, i} \right \}$ and thereby obtain the following object-centric optimization problem:
\begin{equation}
    \begin{aligned}
        \min_{\boldsymbol{p}_{\text{r}} \in \partial \mathcal{O}, \boldsymbol{\lambda} \in \mathcal{K}} & 
        \ell_{\text{cso}}(\boldsymbol{x}_{k+1}) \\
        \text{s.t.} \quad 
        & \boldsymbol{x}_{k+1} = f(\boldsymbol{x}_k, \boldsymbol{p}_{\text{r}}, \boldsymbol{\lambda}),
        \quad k = 0, \dots, T-1.
    \end{aligned}\label{scsp}
\end{equation}
where $\mathcal{K}$ is the set of feasible contact forces and $\ell_{\text{cso}}(\cdot)$ is the objective of contact selection. $\boldsymbol{p}_{\text{r}} \in \mathbb{R}^{3 \times n_{\text{r}}}$ represents the locations where the robot applies contacts to the object, with $n_{\text{r}}$ denotes the number of robot contacts. \eqref{scsp} addresses the gradient sparsity issue arising from hybrid planning in the original contact-rich manipulation problem by defining CSO to solve for optimal contact locations and contact forces on $\mathcal{S}_{\text{contact}}$. The neglected process of the robot approaching $\mathcal{S}_{\text{contact}}$ from its current state is the primary source of the gradient sparsity. Note that \eqref{scsp} is hard to solve due to the coupling of contact location $\boldsymbol{p}_{\text{r}}$ and the contact force $\boldsymbol{\lambda}=\left [ \boldsymbol{\lambda}_{\text{env}}, \boldsymbol{\lambda}_{\text{r}} \right ]$. Therefore, we adopt a commonly used simplification for handling nonconvex optimization problems to reformulate (\ref{scsp}) as subproblems:
\begin{equation}
    \min_{\boldsymbol{p}_{\text{r}} \in \partial \mathcal{O}}
    \quad V(\boldsymbol{p}_{\text{r}})\label{equ:outer_loop},
\end{equation}
in which:
\begin{equation}
\begin{aligned}
V(\boldsymbol{p}_{\text{r}})
:= \min_{\boldsymbol{\lambda}_{\text{r}} \in \mathcal{K}} \quad &
\ell_{\text{cso}}(\boldsymbol{x}_{k+1}) \\
\text{s.t.} \quad
& \boldsymbol{x}_{k+1}
= f(\boldsymbol{x}_k, \boldsymbol{p}_{\text{r}}, \boldsymbol{\lambda}_{\text{r}}, \boldsymbol{\lambda}_{\text{env}}),
\quad k = 0, \dots, T-1.
\end{aligned}\label{equ:inner_loop}
\end{equation}
The physical interpretation of \eqref{equ:outer_loop} and \eqref{equ:inner_loop} is that a contact location is regarded as optimal if applying the best possible combination of contact force $\boldsymbol{\lambda}^*$ yields the minimum motion cost of the object. Despite the simplifications, \eqref{equ:outer_loop} remains a complex problem with nonsmooth constraints $\boldsymbol{p}_{\text{r}} \in \partial \mathcal{O}$ that depend on the object geometry. Meanwhile, \eqref{equ:inner_loop} is a complementary problem with coupled environment forces $\boldsymbol{\lambda}_{\text{env}}$ and robot contact forces $\boldsymbol{\lambda}_{\text{r}}$. The nonconvexity makes online CSO intractable. We next present the solutions to further simplify both \eqref{equ:outer_loop} and \eqref{equ:inner_loop} to improve computational efficiency.

\subsection{Sampling and Exchange Method}\label{sec:sampling&exchange}
The nonsmoothness on the object geometry leads to discontinuities in the gradient of \eqref{equ:outer_loop}. The inherently nonsmooth nature of the problem motivates us to adopt a sampling-based method. To simplify \eqref{equ:outer_loop}, we uniformly sample a set of candidate contact point set $\mathcal{I}_{\text{obj}} = \{\boldsymbol{p}_i\}_{i=1}^{n_s}$ on the object surface $\partial \mathcal{O}$, thus reformulate \eqref{equ:outer_loop} as:
\begin{equation}
    \min_{\boldsymbol{p}_{\text{r}} \in \mathcal{I}_{\text{obj}}}
    \quad V(\boldsymbol{p}_{\text{r}}),\label{equ:simplify_outer_loop}
\end{equation}
When the sampling is sufficiently uniform, \eqref{equ:simplify_outer_loop} provides a good approximation to \eqref{equ:outer_loop} and is not affected by discontinuous gradients caused by the geometry. To obtain a high quality candidate set, $\mathcal{I}_{\text{obj}}$ should satisfy the following conditions:
\begin{itemize}
    \item \textbf{Physical reachability}, which requires that candidate contact locations be reachable by the robot, i.e., within the robot's workspace and not occluded.
    \item \textbf{Contact stability}, which requires avoiding regions with high curvature where stable contact is difficult to apply.
    \item \textbf{Uniform distribution}, which means the sampled points are representative of approximating the original geometry.
\end{itemize}
These conditions require the sampling method to maintain uniform coverage while prioritizing manipulability over local geometric details. To address this, we propose an efficient sampling strategy as shown in Algorithm \ref{algorithm:sampling}. We first perform approximate convex decomposition (ACD) \cite{lien2007approximate} to obtain a convex approximation $\mathcal{F}$ of the original geometry $\mathcal{O}$. ACD decomposes the object into convex polygons, mitigating interference caused by local geometric details and reducing computational cost. Moreover, these polygons are typically larger and flatter, which implies greater contact stability and improved reachability.

Then we perform Farthest-Point Sampling (FPS) on the convex approximation to uniformly sample points on the polygons. Starting from an arbitrary point as a sample, the algorithm iteratively selects the points that maximize the minimal Euclidean distance to all previously chosen samples:
\begin{equation}
p'_{k+1} = \arg\max_{p_i \in \mathcal{F}} \min_{p'_j \in \mathcal{I}_{k}} \|\boldsymbol{p}_i - \boldsymbol{p}'_j\|^2, \label{equ:FPS}
\end{equation}
where $\mathcal{I}_{k}$ is the set of already selected samples. The sampling process is performed offline. Solving \eqref{equ:FPS} yields a uniformly distributed set of points $\mathcal{I}_{\text{obj}}$. For each sampled point $p'_j \in \mathcal{I}_{k}$, we also construct its local coordinate frame to obtain the tangential and normal vectors as local contact information. The local orthonormal frame is constructed as $(\boldsymbol{n}_i, \boldsymbol{t}_{1,i}, \boldsymbol{t}_{2,i})$ in which the normal vector $\boldsymbol{n}_i$ is obtained from the mesh or point-cloud normal estimate, and two orthogonal tangent directions are computed as:
\begin{equation}
\boldsymbol{t}_{1,i} = 
\frac{\boldsymbol{a} - (\boldsymbol{a} \cdot \boldsymbol{n}_i)\boldsymbol{n}_i}
{\|\boldsymbol{a} - (\boldsymbol{a} \cdot \boldsymbol{n}_i)\boldsymbol{n}_i\|}, \ 
\boldsymbol{t}_{2,i} = \boldsymbol{n}_i \times \boldsymbol{t}_{1,i},
\end{equation}
where $\boldsymbol{a}$ is an arbitrary fixed axis. We use this local geometric information to compute the Jacobian required for contact force optimization. Furthermore, a KD-tree $\mathcal{T}=\left \{ (\boldsymbol{p}_i,\boldsymbol{n}_i, \boldsymbol{t}_{1,i}, \boldsymbol{t}_{2,i}) \right \}^{n_s}_{i=1} $ is constructed over the set $\mathcal{I}_{\text{obj}}$ to enable efficient nearest-neighbor queries.

An online Valid Region Selection procedure is further applied to identify feasible and task-relevant surface regions, filtering out unreachable or low-quality points such as occluded regions, regions outside the robot workspace, and semantically unsuitable regions for manipulation. The valid region is maintained by applying a mask to $\mathcal{F}$ to obtain $\mathcal{F}_{\text{v}}$, and then selecting points on $\mathcal{F}_{\text{v}}$ to obtain $\mathcal{I}_{\text{avail}}$. To better illustrate the process, consider the toy example shown in Fig. \ref{fig:FPS}. First, following the procedure illustrated in Fig. \ref{fig:FPS}(a), we iteratively apply \eqref{equ:FPS} to obtain a set of uniformly distributed sampling points over the processed object mesh. We then partition the surface online into a set of valid regions and masked regions, and select the points that lie within the valid regions to form the candidate set $\mathcal{I}_{\text{avail}}$, as shown in Fig. \ref{fig:FPS}(b).

\begin{figure}[!t]
 \centerline{\includegraphics[width=\columnwidth]{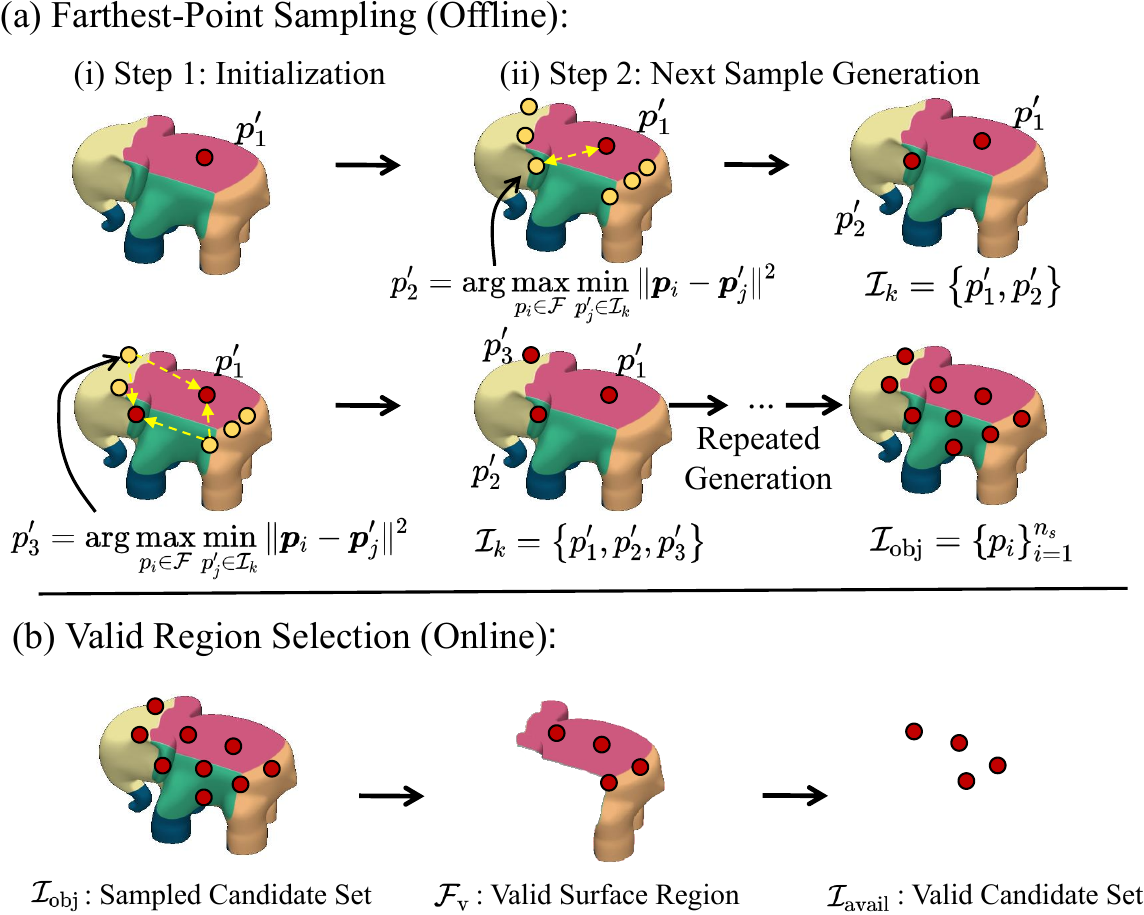}}
\caption{The sampling strategy in CSO. The candidate point set is sampled sequentially through: (a) Farthest-Point Sampling to uniformly sample points on the object, and (b) Valid Region Selection to select valid points for contact.}
\label{fig:FPS}
\end{figure}

\begin{algorithm}[t]
\caption{Sampling and Exchange Method}
\label{algorithm:sampling}
\begin{algorithmic}[1]
\Require Object geometry $\mathcal{O}$, sample count $K$, fixed axis $\boldsymbol{a}$
\Ensure $\mathcal{T}= \left \{ (\boldsymbol{p}_i,\boldsymbol{n}_i, \boldsymbol{t}_{1,i}, \boldsymbol{t}_{2,i}) \right \}^{n_s}_{i=1}$ and $\mathcal{I}_{\text{avail}}$.

\State $\mathcal{F} \gets \text{ACD}(\mathcal{O})$ \Comment{Convex approximation}
\State Initialize $\mathcal{I}_{k} \gets \{p'_1\}$ with an arbitrary seed in $\mathcal{F}$
\For{$k=1$ to $K-1$}
    \State $p'_{k+1} \gets \arg\max_{p'_i\in \mathcal{F}} \min_{p_j\in \mathcal{I}_{k}} \|\boldsymbol{p}'_i-\boldsymbol{p}'_j\|^2$
    \State $\mathcal{I}_{\text{obj}} \gets \mathcal{I}_{k} \cup \{p'_{k+1}\}$ \Comment{Obtain candidate set $\mathcal{I}_{\text{obj}}$}
\EndFor
\State $\mathcal{F}_{\text{v}} \gets \text{getValidRegion}(\mathcal{F})$ \Comment{Task-specific segment}
\ForAll{$p_i \in \mathcal{I}_{\text{obj}}$}
    \If{$p_i \notin \mathcal{F}_{\text{v}}$}
        \State Remove $p_i$ from $\mathcal{I}_{\text{obj}}$ \Comment{Valid region selection}
        \State \textbf{continue}
    \EndIf \Comment{Obtain $\mathcal{I}_{\text{avail}}$}
    \State Estimate normal $\boldsymbol{n}_i$ from $\mathcal{F}$ at $p_i$
    \State $\boldsymbol{t}_{1,i} \gets \dfrac{\boldsymbol{a}-(\boldsymbol{a}\cdot\boldsymbol{n}_i)\boldsymbol{n}_i}{\|\boldsymbol{a}-(\boldsymbol{a}\cdot\boldsymbol{n}_i)\boldsymbol{n}_i\|}$, \ $\boldsymbol{t}_{2,i} \gets \boldsymbol{n}_i \times \boldsymbol{t}_{1,i}$
\EndFor
\State \Return $\mathcal{T}=\left \{ (\boldsymbol{p}_i,\boldsymbol{n}_i, \boldsymbol{t}_{1,i}, \boldsymbol{t}_{2,i}) \right \}^{n_s}_{i=1}$
\end{algorithmic}
\end{algorithm}

\subsection{Surrogate Contact Model}

The sampling strategy transforms the original optimization into a discrete–continuous optimization problem. We further simplify \eqref{equ:inner_loop} to improve efficiency. Most existing contact models are computationally expensive for scenarios that require repeatedly solving \eqref{equ:inner_loop}. Moreover, the complementarity in contact dynamics and the coupling between $\boldsymbol{\lambda}_{\text{env}}$ and $\boldsymbol{\lambda}_{\text{r}}$  make the optimization difficult to converge, leading to physically inconsistent results. To address this issue and enable online contact selection, we present a surrogate contact model (SCM) as a local approximation of the original contact dynamics. The procedure of SCM is illustrated in Fig. \ref{fig:stick1}. The approximation relies on the following assumptions of the object-environment contact:

\begin{assumption}[Fixed Contact Set]
 The  object-environment contact location remains unchanged in each SCM rollout, i.e., no new contacts are created and existing contacts remain.\label{assumption:mode}
\end{assumption}

\begin{assumption}[Multi-Contact Decoupling]
 The contact between the object and the environment can be approximated by independent spring–damper systems, neglecting higher-order dynamic couplings between different contact locations. \label{assumption:decoupling}
\end{assumption}

We reformulate the original contact dynamics in (\ref{equ.quasi_dyn_1}) as:
\begin{equation}
    \epsilon\boldsymbol{M}_o\boldsymbol{v}_o = 
    h(\boldsymbol{\tau}_o+\sum\nolimits_{i=1}^{n_{\text{r}}}{\boldsymbol{J}}_{\text{r},i}^{\top}\boldsymbol{\lambda}_{\text{r},i} + \sum\nolimits_{j=1}^{n_{\text{env}}}{\boldsymbol{J}}_{\text{env},j}^{\top}\boldsymbol{\lambda}_{\text{env},j}),
    \label{eq:simplified}
\end{equation}
and we denote $\boldsymbol{\tilde{b}}_o = h(\boldsymbol{\tau}_o+\sum\nolimits_{i=1}^{n_{\text{r}}}{\boldsymbol{J}}_{\text{r},i}^{\top}\boldsymbol{\lambda}_{\text{r},i})$. Similar to (\ref{equ.dual_prob_relax}), the physically consistent environment contact force can be obtained by solving the following dual problem:
\begin{multline}
\max_{\boldsymbol{\lambda}_{\text{env}}\geq \boldsymbol{0}}\quad  
    -\frac{1}{2} \boldsymbol{\lambda}_{\text{env}}^\top 
    \left(
    \boldsymbol{\tilde{J}}_{\text{env}} \boldsymbol{M}_o^{-1} \boldsymbol{\tilde{J}}_{\text{env}}^\top
    +\boldsymbol{R}
    \right) 
    \boldsymbol{\lambda}_{\text{env}} \\
    -
    (\boldsymbol{\tilde{J}}_{\text{env}} \boldsymbol{M}_o^{-1}\boldsymbol{\tilde{b}}_o+\boldsymbol{\tilde{\phi}})^\top
    \boldsymbol{\lambda}_{\text{env}}
    -\frac{1}{2} 
    \boldsymbol{\tilde{b}}_o^\top
    \boldsymbol{M}_o^{-1}
    \boldsymbol{\tilde{b}}_o,
    \label{eq:dual_problem}
\end{multline}
and the solution satisfies the complementarity condition
\begin{equation}
\boldsymbol{0}\leq \boldsymbol{\lambda}_{\text{env}} \perp
\boldsymbol{W}_{\text{env}}
\boldsymbol{\lambda}_{\text{env}}
+\boldsymbol{g}_{\text{env}}
\geq \boldsymbol{0}.
\label{eq:env_complementarity}
\end{equation}
where
\begin{equation}
    \boldsymbol{W}_{\text{env}}=\left(
\boldsymbol{\tilde{J}}_{\text{env}} \boldsymbol{M}_o^{-1} \boldsymbol{\tilde{J}}_{\text{env}}^\top
+\boldsymbol{R}
\right), \ \boldsymbol{g}_{\text{env}}=\left(
\boldsymbol{\tilde{J}}_{\text{env}} \boldsymbol{M}_o^{-1}\boldsymbol{\tilde{b}}_o
+\boldsymbol{\tilde{\phi}}
\right).
\end{equation}
Here $\boldsymbol{W}_{\text{env}}$ is the Delassus matrix associated with object–environment contact. Since the contact force at each contact point is decomposed into one normal and two tangential components $
\boldsymbol{\lambda}_i = 
[\lambda_i^{n}, \lambda_i^{t_1}, \lambda_i^{t_2}]^\top \in \mathbb{R}^{3}
$, the Delassus matrix $\boldsymbol{W}_{\text{env}}$ admits a block structure
\begin{equation*}
    \boldsymbol{W}_{\text{env}} =
\begin{bmatrix}
\boldsymbol{W}_{11} & \boldsymbol{W}_{12} & \cdots & \boldsymbol{W}_{1n_{\text{env}}} \\
\boldsymbol{W}_{21} & \boldsymbol{W}_{22} & \cdots & \boldsymbol{W}_{2n_{\text{env}}} \\
\vdots & \vdots & \ddots & \vdots \\
\boldsymbol{W}_{n_{\text{env}}1} & \boldsymbol{W}_{n_{\text{env}}2} & \cdots & \boldsymbol{W}_{n_{\text{env}} n_{\text{env}}}
\end{bmatrix},
\end{equation*}
where each block $\boldsymbol{W}_{ij} \in \mathbb{R}^{3\times3}$ is
\begin{equation*}
\boldsymbol{W}_{ij}
=
\begin{bmatrix}
\boldsymbol{J}_i^{n} \\
\boldsymbol{J}_i^{t_1} \\
\boldsymbol{J}_i^{t_2}
\end{bmatrix}
\boldsymbol{M}_o^{-1}
\begin{bmatrix}
(\boldsymbol{J}_j^{n})^\top &
(\boldsymbol{J}_j^{t_1})^\top &
(\boldsymbol{J}_j^{t_2})^\top
\end{bmatrix}
+
\boldsymbol{R}_{ij}.
\end{equation*}
The diagonal blocks $\boldsymbol{W}_{ii}$ describe the local effective inertia at each contact, indicating how forces applied at contact $i$ affect motion along its own constraint directions. The off-diagonal blocks $\boldsymbol{W}_{ij}$ ($i \neq j$) represent dynamic coupling between different contacts. Under Assumption \ref{assumption:decoupling}, these inter-contact couplings are negligible. Therefore, $\boldsymbol{W}_{\text{env}}$ can be approximated by a block-diagonal matrix:
\begin{equation}
\boldsymbol{W}_{\text{env}} = \text{blockdiag}(\boldsymbol{W}_{11}, \boldsymbol{W}_{22}, \ldots, \boldsymbol{W}_{n_{\text{env}} n_{\text{env}}}),
\end{equation}
and \eqref{eq:env_complementarity} is decomposed to subproblems:
\begin{equation}
\boldsymbol{0} \leq \boldsymbol{\lambda}_{\text{env},i} \perp \boldsymbol{W}_{ii} \boldsymbol{\lambda}_{\text{env},i} + (\boldsymbol{\tilde{J}}_{\text{env},i} \boldsymbol{M}_o^{-1}\boldsymbol{\tilde{b}}_o + \boldsymbol{\tilde{\phi}}_i) \geq \boldsymbol{0}.
\label{equ:contact_i_complementarity}
\end{equation}
Furthermore, if $\boldsymbol{W}_{ii}$ admits the low-rank decomposition:
\begin{equation}
\boldsymbol{W}_{ii} = \boldsymbol{D} + \boldsymbol{E}, \label{equ:decomposation}
\end{equation}
where $\boldsymbol{D}$ is a diagonal matrix and
$\boldsymbol{E}$ is relatively small, \eqref{equ:contact_i_complementarity} has the approximated closed-form solution:
\begin{equation}
\boldsymbol{\hat{\lambda}}_{\text{env}, i}
=
\max
\left(
-
\boldsymbol{D}^{-1}
(\boldsymbol{\tilde{J}}_{\text{env},i} \boldsymbol{M}_o^{-1}\boldsymbol{\tilde{b}}_o + \boldsymbol{\tilde{\phi}}_i),
\;\boldsymbol{0}
\right).
\label{eq:quasi-closed_form_solution_nonsmooth}
\end{equation}
With \eqref{eq:quasi-closed_form_solution_nonsmooth}, we can obtain the following lemma:
\begin{lemma}
Under Assumption \ref{assumption:decoupling}, the environment contact force $\boldsymbol{\lambda}_{\text{env}}$ is an explicit function of $\boldsymbol{\lambda}_{\text{r}}$.
\end{lemma}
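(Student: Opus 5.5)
The plan is to make the dependence explicit by substituting the definition of $\boldsymbol{\tilde{b}}_o$ into the approximate closed-form solution \eqref{eq:quasi-closed_form_solution_nonsmooth}. The complementarity system \eqref{eq:env_complementarity} as a whole is an LCP in $\boldsymbol{\lambda}_{\text{env}}$ whose solution depends on $\boldsymbol{\lambda}_{\text{r}}$ only implicitly, because the off-diagonal blocks couple all contacts. I therefore start from Assumption~\ref{assumption:decoupling}. Under it, $\boldsymbol{W}_{\text{env}}$ is replaced by its block-diagonal part, so \eqref{eq:env_complementarity} splits into the independent per-contact problems \eqref{equ:contact_i_complementarity}. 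In each of these, the only data depending on the robot is the affine term $\boldsymbol{g}_i := \boldsymbol{\tilde{J}}_{\text{env},i}\boldsymbol{M}_o^{-1}\boldsymbol{\tilde{b}}_o + \boldsymbol{\tilde{\phi}}_i$.

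The key steps, in order, are as follows.
\begin{enumerate}
\item Expand $\boldsymbol{\tilde{b}}_o = h\boldsymbol{\tau}_o + h\sum_{i}\boldsymbol{J}_{\text{r},i}^\top\boldsymbol{\lambda}_{\text{r},i}$. This gives $\boldsymbol{g}_j = \boldsymbol{A}_j\boldsymbol{\lambda}_{\text{r}} + \boldsymbol{c}_j$ with $\boldsymbol{A}_j = h\boldsymbol{\tilde{J}}_{\text{env},j}\boldsymbol{M}_o^{-1}\boldsymbol{J}_{\text{r}}^\top$ and $\boldsymbol{c}_j = h\boldsymbol{\tilde{J}}_{\text{env},j}\boldsymbol{M}_o^{-1}\boldsymbol{\tau}_o + \boldsymbol{\tilde{\phi}}_j$. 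By Assumption~\ref{assumption:mode}, the environment contact set, the Jacobians and $\boldsymbol{\tilde{\phi}}$ are fixed during the rollout, so $\boldsymbol{A}_j$ and $\boldsymbol{c}_j$ do not depend on $\boldsymbol{\lambda}_{\text{r}}$.
\item Use the splitting \eqref{equ:decomposation} with $\boldsymbol{E}$ neglected. Each per-contact LCP then has a diagonal, positive definite matrix $\boldsymbol{D}$, so it decouples componentwise into scalar LCPs $0\le \lambda \perp d\lambda + g\ge 0$ with $d>0$. Each such scalar problem has the unique solution $\lambda=\max(-g/d,0)$, which recovers \eqref{eq:quasi-closed_form_solution_nonsmooth}.
\item Substitute step 1 into this solution. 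The result is $\boldsymbol{\hat{\lambda}}_{\text{env},j} = \max(-\boldsymbol{D}^{-1}(\boldsymbol{A}_j\boldsymbol{\lambda}_{\text{r}}+\boldsymbol{c}_j),\boldsymbol{0})$.
\item Stack over $j$. This gives $\boldsymbol{\lambda}_{\text{env}} = \max(-\boldsymbol{D}_{\text{env}}^{-1}(\boldsymbol{A}\boldsymbol{\lambda}_{\text{r}}+\boldsymbol{c}),\boldsymbol{0})$, an explicit, piecewise-affine, Lipschitz function of $\boldsymbol{\lambda}_{\text{r}}$. This establishes the lemma. It is also worth remarking that inserting this into \eqref{eq:simplified} removes the complementarity from the inner problem \eqref{equ:inner_loop}.
\end{enumerate}

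The main obstacle is justifying that $\boldsymbol{D}$ is invertible with positive diagonal, since uniqueness of the scalar solutions needs $d>0$. I would take $\boldsymbol{D}$ to be the diagonal of $\boldsymbol{W}_{ii}$. Because $\boldsymbol{M}_o^{-1}\succ0$ and $\boldsymbol{R}\succ0$, we have $\boldsymbol{W}_{ii}\succ0$, and hence its diagonal entries are strictly positive.

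A second subtlety is that the explicit formula is exact only for the approximated system in which $\boldsymbol{E}$ is dropped. I would therefore state the lemma as holding for the surrogate model, and note that this is the sense in which ``explicit'' is meant.
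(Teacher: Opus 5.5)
Your proposal is correct and follows the paper's route: the paper reads the lemma directly off the block-diagonal decoupling and the diagonal splitting that yields the closed form \eqref{eq:quasi-closed_form_solution_nonsmooth}, with $\boldsymbol{\tilde{b}}_o$ affine in $\boldsymbol{\lambda}_{\text{r}}$, and it takes $\boldsymbol{D}=\operatorname{diag}(\boldsymbol{W}_{ii})+\epsilon\boldsymbol{I}$, so invertibility does not depend on $\boldsymbol{R}\succ 0$. The one unnecessary ingredient is Assumption~\ref{assumption:mode} in your step 1: within a single step the Jacobians and gaps are evaluated at the given state $\boldsymbol{x}_k$ and so do not depend on $\boldsymbol{\lambda}_{\text{r}}$, and the paper reserves that assumption for the piecewise-linearity claim of Lemma~\ref{lemma2:piecewise-linear}.
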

To obtain a high-quality approximation, we decompose $\boldsymbol{W}_{ii}$ in \eqref{equ:decomposation} as:
\begin{equation}\label{equ:max_decomposition}
\boldsymbol{D} = \operatorname{diag}(\boldsymbol{W}_{ii}) + \epsilon \boldsymbol{I},
\end{equation}
where $\epsilon$ is a small regularization coefficient to ensure numerical stability. \eqref{equ:max_decomposition} gives the closest diagonal approximation to $\boldsymbol{W}_{ii}$ in the Frobenius norm\footnote{Proof see Appendix \ref{proof:closest_approximation}.}. However, it should be noted that \eqref{eq:quasi-closed_form_solution_nonsmooth} is incompatible with smooth NLP solvers. When the contact set changes within the CSO horizon, variations in $\boldsymbol{\tilde{J}}_{\text{env},i}$ and $\boldsymbol{\tilde{\phi}}_i$ introduce additional non-smoothness. To address this, we leverage Assumption \ref{assumption:mode} to further derive:
\begin{lemma}\label{lemma2:piecewise-linear}
The environment contact force $\boldsymbol{\lambda}_{\text{env}}$ is a piecewise linear function of $\boldsymbol{\lambda}_{\text{r}}$:
\begin{equation}
\boldsymbol{\hat{\lambda}}_{\text{env}, i}
=
\max
\left(
-
\boldsymbol{D}^{-1}
\boldsymbol{\tilde{J}}_{\text{env},i} \boldsymbol{M}_o^{-1}(\boldsymbol{\tau}_o+\sum\nolimits_{i=1}^{n_{\text{r}}}{\boldsymbol{J}}_{\text{r},i}^{\top}\boldsymbol{\hat{\lambda}}_{\text{r},i}),
\;\boldsymbol{0}
\right).
\label{eq:quasi-closed_form_solution}
\end{equation}
\end{lemma}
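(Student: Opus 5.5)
The plan starts from the approximate closed-form solution \eqref{eq:quasi-closed_form_solution_nonsmooth}, which comes from Assumption \ref{assumption:decoupling} and the diagonal splitting \eqref{equ:decomposation}--\eqref{equ:max_decomposition}. I will show that under Assumption \ref{assumption:mode} the argument of the $\max$ becomes a fixed affine map of $\boldsymbol{\lambda}_{\text{r}}$. The claimed form then follows, and piecewise linearity comes from composing that map with the componentwise $\max(\cdot,\boldsymbol{0})$.

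\textbf{Step 1 (freeze the data).} By Assumption \ref{assumption:mode}, the object--environment contact set is unchanged during an SCM rollout. So the environment Jacobians $\boldsymbol{\tilde{J}}_{\text{env},i}$, the signed distances $\boldsymbol{\tilde{\phi}}_i$, and hence the blocks $\boldsymbol{W}_{ii}$ and their diagonal parts $\boldsymbol{D}$ are constants, evaluated once at the current object pose. For the same reason the robot Jacobians $\boldsymbol{J}_{\text{r},i}$ are constant for a fixed candidate $\boldsymbol{p}_{\text{r}}\in\mathcal{I}_{\text{obj}}$. For contacts that persist, I will also argue that $\boldsymbol{\tilde{\phi}}_i=\boldsymbol{0}$, since an existing contact has zero gap (up to penetration tolerance). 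This is what removes the constant offset from \eqref{eq:quasi-closed_form_solution_nonsmooth}.

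\textbf{Step 2 (substitute the robot forces).} I insert the definition $\boldsymbol{\tilde{b}}_o=h(\boldsymbol{\tau}_o+\sum_i\boldsymbol{J}_{\text{r},i}^{\top}\boldsymbol{\lambda}_{\text{r},i})$ into \eqref{eq:quasi-closed_form_solution_nonsmooth}. The factor $h$ is absorbed either by the force/impulse convention $\boldsymbol{\beta}\approx h\boldsymbol{\lambda}$ from Section \ref{preliminaries:contact_dynamics} or by rescaling $\boldsymbol{D}$; I will track this carefully so that the result matches \eqref{eq:quasi-closed_form_solution}. The argument becomes $\boldsymbol{A}_i\boldsymbol{\lambda}_{\text{r}}+\boldsymbol{c}_i$, with constant matrix $\boldsymbol{A}_i=-\boldsymbol{D}^{-1}\boldsymbol{\tilde{J}}_{\text{env},i}\boldsymbol{M}_o^{-1}[\boldsymbol{J}_{\text{r},1}^{\top},\dots]$ and constant vector $\boldsymbol{c}_i=-\boldsymbol{D}^{-1}\boldsymbol{\tilde{J}}_{\text{env},i}\boldsymbol{M}_o^{-1}\boldsymbol{\tau}_o$.

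\textbf{Step 3 (piecewise linearity).} Each component of $\max(\boldsymbol{A}_i\boldsymbol{\lambda}_{\text{r}}+\boldsymbol{c}_i,\boldsymbol{0})$ is the maximum of an affine function and zero. It is therefore continuous and piecewise affine, with pieces separated by the hyperplanes $\{(\boldsymbol{A}_i\boldsymbol{\lambda}_{\text{r}}+\boldsymbol{c}_i)_k=0\}$. The pieces are finitely many, at most $2^{3n_{\text{env}}}$ sign patterns. Stacking the components over $i$ gives $\boldsymbol{\lambda}_{\text{env}}$ as a piecewise linear function of $\boldsymbol{\lambda}_{\text{r}}$; here ``linear'' means affine, because of the gravity term $\boldsymbol{\tau}_o$.

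\textbf{Main obstacle.} The delicate point is Step 1. I need to justify that Assumption \ref{assumption:mode} genuinely freezes $\boldsymbol{\tilde{J}}_{\text{env},i}$ and eliminates the $\boldsymbol{\tilde{\phi}}_i$ term across the whole rollout, rather than only at its first step, even though the object pose evolves. I will handle this by stating explicitly that the SCM linearizes about the current pose and holds the contact frames fixed within a rollout; this is exactly the content of the fixed-contact-set assumption. If the residual gap is not exactly zero, it only enters the constant vector $\boldsymbol{c}_i$, so piecewise linearity still holds.
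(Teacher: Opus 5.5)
Your proposal is correct and follows the paper's own argument: Assumption~\ref{assumption:mode} gives $\boldsymbol{\tilde{\phi}}_i=\boldsymbol{0}$ and makes $\boldsymbol{\tilde{J}}_{\text{env},i}$, $\boldsymbol{W}_{ii}$, $\boldsymbol{D}$ and $\boldsymbol{M}_o^{-1}$ constant, so the componentwise $\max$ of an affine map of $\boldsymbol{\lambda}_{\text{r}}$ is piecewise linear. Your extra bookkeeping is sound and goes slightly beyond the paper's terse proof: the positive factor $h$ coming from $\boldsymbol{\tilde{b}}_o$ is silently dropped by the paper but only rescales the formula, since $\max(ha,0)=h\max(a,0)$ for $h>0$, and the map is affine rather than linear because of $\boldsymbol{\tau}_o$.
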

\begin{proof}
    \renewcommand{\qedsymbol}{}
    Based on Assumption 1, we can obtain that $\phi \equiv 0$, leading to \eqref{eq:quasi-closed_form_solution}. Moreover, since the contact geometry set remains invariant over the time interval, $\boldsymbol{\tilde{J}}_{\text{env},i}$ is a constant matrix, thus $\boldsymbol{W}_{ii}$ is constant, and $\boldsymbol{D}$, $\boldsymbol{M}_o^{-1}$ are all constant matrices. Therefore, \eqref{eq:quasi-closed_form_solution} is piecewise linear.
\end{proof}
Substituting \eqref{eq:quasi-closed_form_solution} into \eqref{eq:simplified}, we obtain a lightweight explicit model:
\begin{equation}
    f_{\text{SCM}}:\begin{cases}
    \boldsymbol{\hat{v}}^+ =
\frac{1}{h}\boldsymbol{M}_o^{-1}\boldsymbol{\tilde{b}}_o
+
\frac{1}{h}\boldsymbol{M}_o^{-1}\sum_{i=1}^{n_\text{env}} \boldsymbol{\tilde{J}}_{\text{env},i}^\top
\boldsymbol{\hat{\lambda}}_{\text{env},i},
     \\ 
    \boldsymbol{\hat{x}}_{k+1}=
    \boldsymbol{x}_{k}
    +
    h(\boldsymbol{v}_{k}+\boldsymbol{\hat{v}}^+).
\end{cases} \label{equ:scm}
\end{equation}

This Surrogate Contact Model converts the original contact dynamics into a smooth and computationally efficient formulation, enabling \eqref{equ:inner_loop} to be solved efficiently using standard smooth optimization methods.

\begin{lemma}[SCM error bound]
    Consider the true object–environment contact complementarity system:
    \begin{equation}
        0 \le \boldsymbol{\lambda}_{\text{env}} \perp
\boldsymbol{W}_{\text{env}} \boldsymbol{\lambda}_{\text{env}} + \boldsymbol{g}_{\text{env}} \ge 0,
    \end{equation}
and the velocity change $\boldsymbol{v}^+$:
\begin{equation}
    f_{\text{real}}:\begin{cases}
    \boldsymbol{v}^+ =
\frac{1}{h}\boldsymbol{M}_o^{-1}\boldsymbol{\tilde{b}}_o
+
\frac{1}{h}\boldsymbol{M}_o^{-1}\sum_{i=1}^{n_\text{env}} \boldsymbol{\tilde{J}}_i^\top
\boldsymbol{\lambda}_{\text{env},i},
     \\ 
    \boldsymbol{x}_{k+1}=
    \boldsymbol{x}_{k}
    +
    h(\boldsymbol{v}_{k}+\boldsymbol{v}^+).
\end{cases} \label{equ:scm}
\end{equation}
The SCM  error $\left \| \boldsymbol{x}_{k+1}-\boldsymbol{\hat{x}}_{k+1} \right \|_2 $ is bounded.
\end{lemma}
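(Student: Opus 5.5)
Subtracting the two update laws reduces the state error to the error in environment contact force. Both models start from the same $\boldsymbol{x}_k$ and $\boldsymbol{v}_k$ and share the same term $\frac{1}{h}\boldsymbol{M}_o^{-1}\boldsymbol{\tilde{b}}_o$. Therefore
\begin{equation*}
\boldsymbol{x}_{k+1}-\boldsymbol{\hat{x}}_{k+1} = \boldsymbol{M}_o^{-1}\boldsymbol{\tilde{J}}_{\text{env}}^\top\left(\boldsymbol{\lambda}_{\text{env}}-\boldsymbol{\hat{\lambda}}_{\text{env}}\right),
\end{equation*}
where $\boldsymbol{\hat{\lambda}}_{\text{env}}$ is the stacked vector of the SCM forces \eqref{eq:quasi-closed_form_solution_nonsmooth}. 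This gives
\begin{equation*}
\|\boldsymbol{x}_{k+1}-\boldsymbol{\hat{x}}_{k+1}\|_2\le \|\boldsymbol{M}_o^{-1}\|_2\,\|\boldsymbol{\tilde{J}}_{\text{env}}\|_2\,\|\boldsymbol{\lambda}_{\text{env}}-\boldsymbol{\hat{\lambda}}_{\text{env}}\|_2 .
\end{equation*}
The prefactor is finite because $\boldsymbol{M}_o$ is positive definite and, under Assumption \ref{assumption:mode}, $\boldsymbol{\tilde{J}}_{\text{env}}$ is a fixed bounded matrix. It then remains to bound the force error.

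For the force error I would use the standard Lipschitz stability of LCP solutions. Both the true LCP with matrix $\boldsymbol{W}_{\text{env}}$ and the surrogate LCP with matrix $\boldsymbol{D}_{\text{env}}:=\text{blockdiag}(\boldsymbol{D}_1,\dots,\boldsymbol{D}_{n_{\text{env}}})$ have the same vector $\boldsymbol{g}_{\text{env}}$. The SCM force $\boldsymbol{\hat{\lambda}}_{\text{env}}=\max(-\boldsymbol{D}_{\text{env}}^{-1}\boldsymbol{g}_{\text{env}},\boldsymbol{0})$ is the exact solution of $0\le\boldsymbol{\lambda}\perp\boldsymbol{D}_{\text{env}}\boldsymbol{\lambda}+\boldsymbol{g}_{\text{env}}\ge 0$. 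Both matrices are positive definite: $\boldsymbol{W}_{\text{env}}$ because of the regularizer $\boldsymbol{R}$, and $\boldsymbol{D}_{\text{env}}$ because of $\epsilon\boldsymbol{I}$ in \eqref{equ:max_decomposition}.

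I would rewrite each LCP as the fixed-point equation $\boldsymbol{\lambda}=\Pi_{\ge 0}(\boldsymbol{\lambda}-\alpha(\boldsymbol{A}\boldsymbol{\lambda}+\boldsymbol{g}))$. Subtracting the two and using the monotonicity of the projection gives
\begin{equation*}
\|\boldsymbol{\lambda}_{\text{env}}-\boldsymbol{\hat{\lambda}}_{\text{env}}\|_2\le \frac{1}{\sigma_{\min}(\boldsymbol{W}_{\text{env}})}\,\|(\boldsymbol{W}_{\text{env}}-\boldsymbol{D}_{\text{env}})\boldsymbol{\hat{\lambda}}_{\text{env}}\|_2 .
\end{equation*}
Concretely, the variational inequalities at the two solutions combine to give $(\boldsymbol{\lambda}-\boldsymbol{\hat{\lambda}})^\top\boldsymbol{W}_{\text{env}}(\boldsymbol{\lambda}-\boldsymbol{\hat{\lambda}})\le(\boldsymbol{\lambda}-\boldsymbol{\hat{\lambda}})^\top(\boldsymbol{D}_{\text{env}}-\boldsymbol{W}_{\text{env}})\boldsymbol{\hat{\lambda}}$, and Cauchy–Schwarz then yields the inequality above. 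The difference $\boldsymbol{W}_{\text{env}}-\boldsymbol{D}_{\text{env}}$ consists of two parts: the off-diagonal coupling blocks $\boldsymbol{W}_{ij}$ ($i\neq j$) dropped under Assumption \ref{assumption:decoupling}, and the within-block residuals $\boldsymbol{E}_i$ from \eqref{equ:decomposation}.

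Finally, $\|\boldsymbol{\hat{\lambda}}_{\text{env}}\|_2\le\|\boldsymbol{D}_{\text{env}}^{-1}\|_2\|\boldsymbol{g}_{\text{env}}\|_2$, and $\|\boldsymbol{g}_{\text{env}}\|_2$ is bounded whenever the robot forces $\boldsymbol{\lambda}_{\text{r}}$ are bounded (they lie in $\mathcal{K}$ with bounded magnitude) and $\boldsymbol{\tilde{\phi}}$ is bounded. Chaining the three estimates gives an explicit bound
\begin{equation*}
\|\boldsymbol{x}_{k+1}-\boldsymbol{\hat{x}}_{k+1}\|_2\le \|\boldsymbol{M}_o^{-1}\|_2\|\boldsymbol{\tilde{J}}_{\text{env}}\|_2\,\frac{\|\boldsymbol{W}_{\text{env}}-\boldsymbol{D}_{\text{env}}\|_2}{\sigma_{\min}(\boldsymbol{W}_{\text{env}})\,\epsilon_{\min}}\,\|\boldsymbol{g}_{\text{env}}\|_2 ,
\end{equation*}
where $\epsilon_{\min}$ is the smallest diagonal entry of $\boldsymbol{D}_{\text{env}}$. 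The factor $\|\boldsymbol{W}_{\text{env}}-\boldsymbol{D}_{\text{env}}\|_2$ shows that the bound vanishes as the coupling and the residuals $\boldsymbol{E}_i$ vanish.

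I expect the main obstacle to be the LCP perturbation step. One has to check that the strong-monotonicity argument applies to $\boldsymbol{W}_{\text{env}}$, which is symmetric positive definite thanks to $\boldsymbol{R}$, and handle the asymmetry between the two problems carefully. Being explicit about the boundedness hypotheses on $\boldsymbol{\lambda}_{\text{r}}$ is also necessary, since without them the bound is not uniform.
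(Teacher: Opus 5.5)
Your proposal is correct and follows the same route as the paper's appendix proof. You subtract the two update laws so that the state error becomes $\boldsymbol{M}_o^{-1}\boldsymbol{\tilde{J}}_{\text{env}}^\top(\boldsymbol{\lambda}_{\text{env}}-\boldsymbol{\hat{\lambda}}_{\text{env}})$, bound the force error with an LCP perturbation estimate whose constant is inversely proportional to the smallest eigenvalue of $\boldsymbol{W}_{\text{env}}$, and chain the estimates. The difference is that the paper just cites ``standard perturbation theory for LCP'', while you prove that step from the two variational inequalities, and doing so makes explicit that the constant must scale with $\|\boldsymbol{\hat{\lambda}}_{\text{env}}\|_2\le\|\boldsymbol{D}_{\text{env}}^{-1}\|_2\|\boldsymbol{g}_{\text{env}}\|_2$, hence on bounded robot forces, a dependence the paper's $\sigma$ silently absorbs; your cancellation of $h$ is also correct, whereas the paper's constant $\boldsymbol{C}$ carries a spurious factor $h$.
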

\begin{proof}
\renewcommand{\qedsymbol}{}
See Appendix \ref{proof:lemma3}.
\end{proof}
\begin{figure}[!t]
 \centerline{\includegraphics[width=\columnwidth]{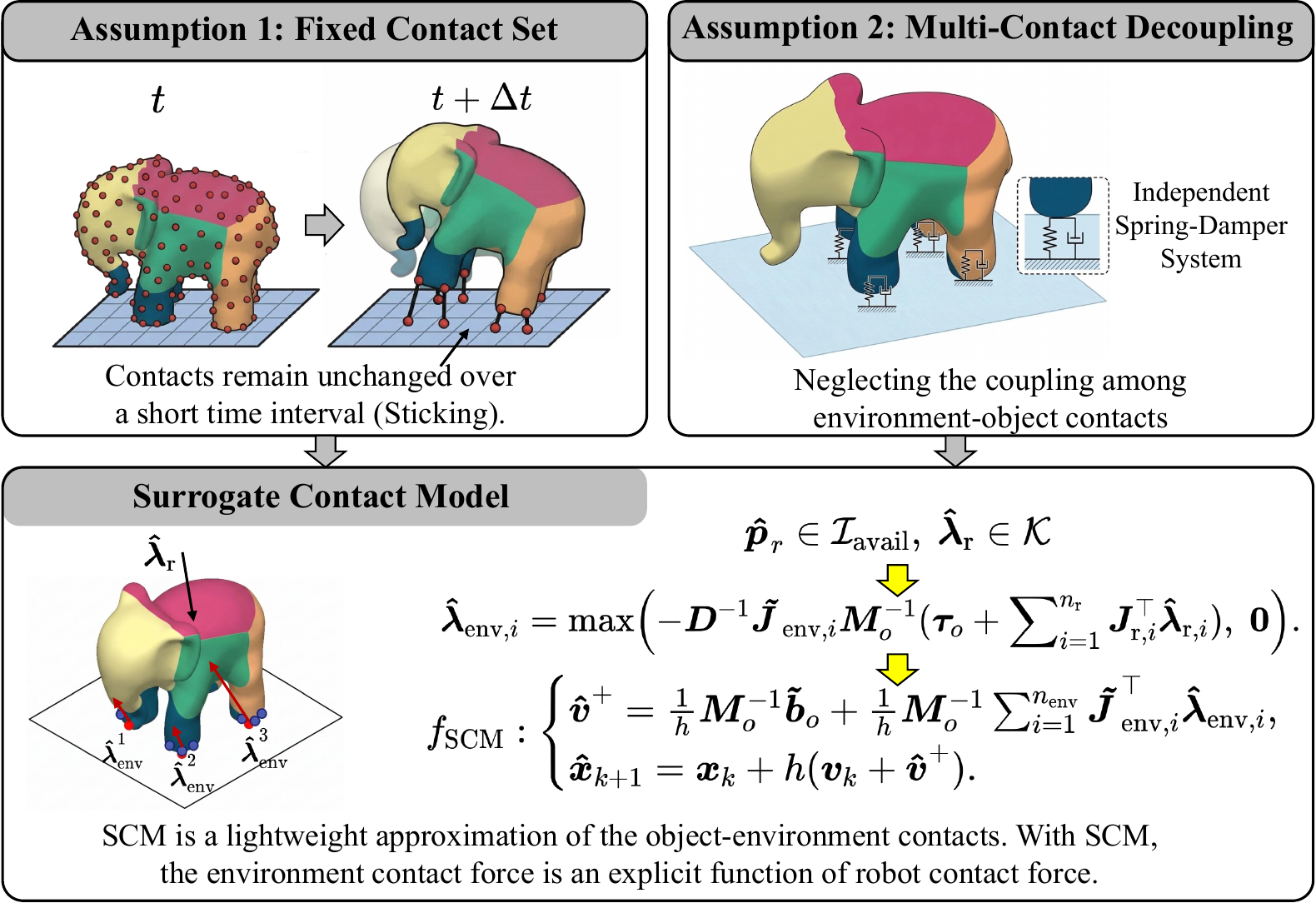}}
\caption{Illustration of SCM. SCM is a lightweight approximation of the object-environment contact, enabling fast contact selection optimization.}
\label{fig:stick1}
\end{figure}
\subsection{Formulation of CSO}\label{sec:cs_opt}

Integrating SCM into \eqref{equ:inner_loop}, we can obtain the formulation of CSO as: 
\begin{equation} 
\begin{aligned} 
\min_{\boldsymbol{\hat{p}}_{\text{r}} \in \mathcal{I}_{\text{avail}}}\min_{\boldsymbol{\hat{\lambda}}_{\text{r}} \in \mathcal{K}} & \quad 
\ell_{\text{cso}}\big(\boldsymbol{x}_{k+1}\big) \\ 
\text{s.t.} \quad
& \boldsymbol{\hat{\lambda}}_{\text{env}, i}
=
\max
\left(
-
\boldsymbol{D}^{-1}
\boldsymbol{\tilde{J}}_{\text{env},i} \boldsymbol{M}_o^{-1}\boldsymbol{\tilde{b}}_o,
\;\boldsymbol{0}
\right), \\
& \boldsymbol{\hat{x}}_{k+1} 
= f_{\text{SCM}}(\boldsymbol{x}_k, \boldsymbol{\hat{p}}_{\text{r}}, \boldsymbol{\hat{\lambda}}_{\text{r}}), 
\quad k = 0, \dots, T-1, \\
& \mu_i \hat{\lambda}^n_{r,i} \ge  \hat{\lambda}^{t1}_{r,i} , \ \mu_i \hat{\lambda}^n_{r,i} \ge  \hat{\lambda}^{t2}_{r,i}, \\
& 0 \le \hat{\lambda}^n_{r,i} \le \hat{\lambda}^n_{\text{max}}, \quad i = 0, \dots, n_{\text{r}}.
\end{aligned}\label{equ:csqp} 
\end{equation} 
By solving this discrete-continuous optimization problem, we can obtain the asymptotically optimal contact location $\boldsymbol{\hat{\lambda}}_r^*$ as guidance for the lower-level contact planning. Next, we discuss the details of CSO.
\subsubsection{Objective}
Despite the objective of CSO being to minimize the object pose cost as denoted in \eqref{equ:obj_pose_cost}, directly using $\ell_{\text{pose}}$ as the CSO objective $\ell_{\text{cso}}$ results in unstable contact strategies by blindly minimizing cost. In addition, the execution of the lower-level contact planning approach is inherently subject to delay, since the robot must physically approach and establish contact rather than instantaneously reaching the selected contact location. Therefore, the design of the CSO should explicitly consider the delay and avoid unstable behaviors that may cause frequent dynamic changes in the object pose. Therefore, we introduce the concept of an equivalence class of stable poses of an object as $\mathcal{X}_{\text{stab}}=\bigcup_{j=1}^{k}\mathcal{X}_{j}$,
where $\mathcal{X}_{j}$ denotes the $j$-th class of stable pose of the object and $\boldsymbol{x}_{\text{o}} \in \mathcal{X}_{j}$ is a statically stable pose at which the object can remain at rest on the table, with its center of mass projected within the support polygon. We define the transfer cost from the current object pose to the goal stable pose as:
\begin{equation}
\ell_{\text{stab}}(\boldsymbol{x}_o)
=
d(\boldsymbol{x}_o,\mathcal{X}_g),
\end{equation}
where $d(\cdot,\cdot)$ is a pose distance metric, $\mathcal{X}_j$ denotes the set of object poses associated with the $j$-th statically stable placement mode, and $\alpha>0$ is a weighting coefficient. We reformulate the CSO objective to prioritize reaching the goal pose set $\mathcal{X}_g$ from the current pose $\boldsymbol{x}_o$, and then minimize $\ell_{\text{pose}}$ in $\mathcal{X}_g$:
\begin{equation}\label{equ:objectice_obj}
\ell_{\text{cso}} =
\begin{cases}
\ell_{\text{stab}}, & \text{if } \boldsymbol{x}_o \notin \mathcal{X}_g, \\
\ell_{\text{pose}}, & \text{otherwise}.
\end{cases}
\end{equation}
The design of \eqref{equ:objectice_obj} effectively guides the gradient descent of CSO to select a more statically stable trajectory. Although this increases the control cost, it effectively prevents CSO from generating numerous unstable contact locations as priors, which would otherwise make the lower-level contact planning difficult to execute in practice.

\subsubsection{Solving CSO as MIQP}
The original contact dynamics in \eqref{equ:inner_loop} are non-differentiable because of the complementarity constraints and the non-smooth temporal evolution of the contact set, rendering the constraints non-differentiable. With the SCM, \eqref{equ:csqp} is formulated as an MIQP. The SCM gradients can be computed via: 
\begin{equation} 
    \begin{aligned} 
    \frac{\partial \boldsymbol{x}_{k+1}}{\partial \boldsymbol{\lambda}_{\text{r},i}} 
&=
\frac{\partial \boldsymbol{x}_{k+1}}{\partial \boldsymbol{v}^{+}} 
\left( 
\frac{\partial \boldsymbol{v}^{+}}{\partial \boldsymbol{b}} 
\frac{\partial \boldsymbol{b}}{\partial \boldsymbol{\lambda}_r} 
+
\frac{\partial \boldsymbol{v}^{+}}{\partial \boldsymbol{\lambda}_{\text{env}}} 
\frac{\partial \boldsymbol{\lambda}_{\text{env}}}{\partial \boldsymbol{\lambda}_r} 
\right) \\ 
&=\boldsymbol{M}_o^{-1}\!\left(h\boldsymbol{J}_{\text{r},i}^\top  
+\frac{1}{h} \sum_{j=0}^{n_\text{env}}\boldsymbol{\tilde{J}}_{\text{env},j}^\top
\frac{\partial \boldsymbol{\lambda}_{\text{env},j}}{\partial \boldsymbol{\lambda}_{\text{r},i}}\right), 
\end{aligned} 
\end{equation} 
where the gradient $\frac{\partial \boldsymbol{\lambda}_{\text{env},j}}{\partial \boldsymbol{\lambda}_{\text{r},i}}$ can be computed via automatic differentiation and satisfies Lipschitz continuity because \eqref{eq:quasi-closed_form_solution} is piecewise linear.  Moreover, the objective function $\ell_{\text{cso}}$ is second-order differentiable. Therefore, the inner loop of \eqref{equ:csqp} constitutes a QP that can be solved in sub-millisecond time, while the overall CSO forms an efficient MIQP compatible with state-of-the-art numerical solvers.

\section{Contact Planning Optimization}\label{sec:cpo}

In this section, we introduce the CPO module for contact planning. CPO accepts the optimal contact location predicted by CSO as reference, and generates approaching and contact trajectories online. We will discuss the problem of designing a prior-guided contact planning. Then we design a ranking strategy as an expressive connection between contact planning and contact prior from contact selection module. Finally, we propose a novel objective formulation for prior-guided contact trajectory generation, and present two solvers for CPO.

\subsection{Problem Statement}

Compared to CIMPC methods, CPO must additionally consider the guidance from the high-level module and generate the corresponding executable trajectory. Based on \eqref{eq:CIMPC_opt}, we define CPO as the single shooting problem:
\begin{equation}
    \begin{aligned}
\min_{\boldsymbol{u}_{0:N-1}} \quad & 
 {\textstyle \sum_{i=0}^{N-1}}  \ell_{\text{cpo}}(\boldsymbol{x}_k, \boldsymbol{u}_k, \boldsymbol{\hat{p}}^*_{\text{r}}) + V_f(\boldsymbol{x}_N) \\
\text{s.t.}\quad & \boldsymbol{x}_{k+1} = f(\boldsymbol{x}_k, \boldsymbol{u}_k), \quad k=0,\dots,N-1, \\
& \boldsymbol{x}_k \in \mathcal{X}, \quad k=0,\dots,N, \\
& \boldsymbol{u}_k \in \mathcal{U}, \quad k=0,\dots,N-1.
\end{aligned}\label{equ:origin_cpo_problem}
\end{equation}
Here $\boldsymbol{\hat{p}}^*_{\text{r}}$ is explicitly included in the objective function. We next investigate two key questions: in what form should contact prior guide contact planning, and how should a feasible approach and contact trajectory be defined mathematically?
\subsubsection{Contact Prior}
In the previous section, we discussed how CIMPC methods suffer from gradient sparsity and proposed CSO, which serves as a pointwise optimization on the object and performs a global search for optimal contact locations. But this raises another open question: in what form should the output of the object-centric contact selection guide be robot-centric contact planning? The most straightforward approach is full coupling, where the contact planning strictly follows and executes the guidance from the CSO. However, this fully coupled scheme reduces robustness in the presence of model inaccuracies: when discrepancies arise between high level planning and low level execution (which are often unavoidable in hierarchical frameworks), the entire framework may fail because of its limited adaptivity. Moreover, although reference contact locations as prior can alleviate the bottleneck of contact planning, optimality under the CSO criterion does not necessarily correspond to the highest probability of successful execution. Contact planning is a hybrid problem admitting multiple solution paths corresponding to different combinations of contact locations, forces, and modes. CSO provides only a single locally cost-minimizing option. To overcome this limitation, we introduce an evaluation module that assesses multiple candidate contact locations and selects the most suitable one as the reference for planning.

\subsubsection{Hybrid Planning of Manipulation Trajectory}\label{sec:feasible_contact_trajectory}

Existing CIMPC methods do not consider the planning of selectively approaching to the reference contact location and switching contact. We define the approaching trajectory in $\mathcal{S}_{\text{free}}$ for CPO to satisfy the following conditions:
\begin{itemize}
\item Except for the start and end points, the mid-points of the trajectory should be collision-free with the object.
\item Contact should occur at the trajectory's end point.

\item The contact force should be directed into the object's interior, i.e., $\lambda_{r,i}^n > 0, \ \forall i = 0, \dots, n_{\text{r}}$.
\end{itemize}
As shown in Fig. \ref{fig:approaching_contact}, the state space of robot-object contact represents a low-dimensional manifold within the overall system state space, while the remaining vast space is gradient-free. The CPO must not only plan trajectories for controlling the object on the contact-rich state manifold $\mathcal{S}_{\text{contact}}$, but also plan trajectories on the bridging manifold $\mathcal{S}_{\text{approach}}$ embedded in $\mathcal{S}_{\text{free}}$. $\mathcal{S}_{\text{approach}}$ must also satisfy the robot’s complex kinematic constraints, which makes it highly non-convex. For redundant robotic configurations, directly solving the nonconvex trajectory optimization online is challenging. We therefore construct an objective formulation for CPO to explicitly represent the processes of approaching, contacting, and switching contacts, enabling unified hybrid planning of the manipulation trajectory.

\subsection{Ranking Strategy of Contact Locations} \label{sec:cpo-ranking_strategy}

In addition to the optimal contact location $\boldsymbol{\hat{p}}^*_{\text{r}}$ from the CSO prediction, we always consider the nearest point $\boldsymbol{p}_{\text{near}}$ to the current robot configuration as a candidate contact location. We reformulate the problem \eqref{equ:origin_cpo_problem} as tracking the reference contact location $\boldsymbol{p}_{\text{ref}}$, where $\boldsymbol{p}_{\text{ref}}$ is generated in real time through a ranking strategy between $\boldsymbol{p}_{\text{near}}$ and $\boldsymbol{\hat{p}}^*_{\text{r}}$, i.e., $\boldsymbol{p}_{\text{ref}} \in \left \{ \boldsymbol{p}_{\text{near}}, \boldsymbol{\hat{p}}^*_{\text{r}} \right \} $. We define the update rule of $\boldsymbol{p}_{\text{ref}}$ to satisfy the following:
\begin{itemize}
    \item When $\boldsymbol{p}_{\text{near}}$ is acceptable under a certain metric, they should be preferentially selected as $\boldsymbol{p}_{\text{ref}}$.
    \item To ensure smooth operation, updates to $\boldsymbol{p}_{\text{ref}}$ should not occur too frequently.
\end{itemize}

The selection of $\boldsymbol{p}_{\text{near}}$ is implemented by searching on the prebuild KD-tree $\mathcal{T}$ with:
\begin{equation}
\boldsymbol{p}_{\text{near}} \;=\; \arg\min_{\boldsymbol{p}_i \in \mathcal{I}_{\text{avail}}}\,\lVert \boldsymbol{p}_{\text{ee}} - \boldsymbol{p}_i \rVert_2, \label{equ:nearest}
\end{equation}
and the local contact frame at the projected point is obtained from $\mathcal{T}$, which is then used to construct constraints during contact trajectory generation.  

The CSO cost of $\boldsymbol{p}_{\text{near}}$, denoted as $\ell_{\text{near}}$, is computed via \eqref{equ:csqp}. Let $\hat{\ell}_{\text{max}}$, $\hat{\ell}_{\text{min}}$ be the maximum and minimum cost of the points in the candidate set $\mathcal{I}_{\text{avail}}$, we define an improvement ratio as: 
\begin{equation}\label{equ:ranking1}
\rho = \frac{\ell_{\text{near}} - \hat{\ell}_{\text{min}}}{\max(\hat{\ell}_{\text{max}}-  \hat{\ell}_{\text{min}}, \varepsilon)},
\end{equation}
where $\varepsilon>0$ is a small constant for numerical stability and $\ell_{\text{near}}$ is the contact cost of $\boldsymbol{p}_{\text{near}}$. The improvement ratio measures $\ell_{\text{near}}$ on a normalized scale, which provides a balanced criterion of the quality of $\boldsymbol{p}_{\text{near}}$. Next, we define a trigger variable $\gamma \in\left \{ 0,1 \right \}$ to determine whether to switch the contact location as 
\begin{equation}\label{equ:kappa}
    \kappa_t = \mathbf{1}\!\left(\rho_t > \bar{\rho}\right),
\end{equation}
where $\bar{\rho}$ is a threshold and $\mathbf{1}$ is a boolean trigger function. When $\kappa_t=1$, it indicates that $\boldsymbol{p}_{\text{near}}$ is a sufficiently good suboptimal reference contact location where manipulation can be executed. If $\kappa_t = 0$, the reference contact location $\boldsymbol{p}_{\text{ref}}$ should continue to be set as $\boldsymbol{\hat{p}}^*_{\text{r}}$ until a sufficiently good $\boldsymbol{p}_{\text{near}}'$ is identified along the approaching path to $\boldsymbol{\hat{p}}^*_{\text{r}}$. The trigger variable $\kappa$ is defined to evaluate the feasibility of $\boldsymbol{p}_{\text{near}}$ and to control whether to establis or break of contact with the object.
To avoid frequent contact location switching, we do not switch it directly based on $\kappa$ but instead apply the following sliding window strategy:
\begin{equation}\label{equ:ranking}
\gamma_{t+1} =
\begin{cases}
1, & \kappa_t = 0 \ \wedge\ t_{(\kappa=1)} \ge T_1,\\
0, & \kappa_t = 1 \ \wedge\ t_{(\kappa=1)} \ge T_2 \ \wedge\ t_{\text{contact}} \ge T_3,\\
\gamma_t, & \text{otherwise}.
\end{cases}
\end{equation}
in which $t_{(\kappa=1)}$ is the total timestep that satisfies $\kappa = 1$. $\gamma$ subsequently controls the objective formulation in CPO.

The overall process is therefore: the robot repeatedly checks whether nearby contact is feasible at the current configuration. If true, $\boldsymbol{p}_{\text{near}}$ will be set as the new reference contact point $\boldsymbol{p}_{\text{ref}}$. Otherwise, the robot continues moving toward the optimal contact location $\boldsymbol{\hat{p}}^*_{\text{r}}$ predicted by CSO. The process is shown in Algorithm \ref{algo:ranking}. The ranking strategy provides a comprehensive evaluation of contact priors to determine their suitability for execution. It allows the generation of sub-optimal manipulation trajectories under feasible conditions, thereby further enhancing the diversity of manipulation strategies. This approach effectively improves robustness in complex contact-rich manipulation tasks.

\begin{algorithm}[t]
\caption{Ranking Strategy of CPO}
\label{algo:ranking}
\begin{algorithmic}[1]
\Require $\boldsymbol{\hat{p}}^*_r$ predicted by CSO, current robot configuration $\boldsymbol{q}_k$, threshold $\delta$
\Ensure Selected contact location $\boldsymbol{p}_{\text{ref}}$ and $\ell_{\text{lp}}$
\State Compute nearest point $\boldsymbol{p}_{\text{near}}$ to end-effector via \eqref{equ:nearest}
\State Evaluate contact cost $\ell_{\text{near}}$ via \eqref{equ:csqp}
\State Compute $\gamma$ via \eqref{equ:ranking1},  \eqref{equ:kappa}, \eqref{equ:ranking} \Comment{Ranking strategy}
\If{$\gamma =0$}
    \State $\boldsymbol{p}_{\text{ref}} = \boldsymbol{\hat{p}}^*_{\text{r}}$, $\ell_{\text{lp}}=\ell_{\text{lift}}$ \Comment{Lifting phase}
\Else
    \State $\boldsymbol{p}_{\text{ref}} \gets \boldsymbol{p}_{\text{near}}$, $\ell_{\text{lp}}=\ell_{\text{place}}$ \Comment{Placing phase}
\EndIf 
\end{algorithmic}
\end{algorithm}

\subsection{Formulation of CPO} 

Prior-guided contact planning should be capable of hybrid planning for the manipulation trajectory, including approaching, contacting, and switching between different contact locations. This process is incompatible with CIMPC formulation because it requires the robot to actively leave the $\mathcal{S}_{\text{contact}}$ and enter a gradient-free state space $\mathcal{S}_{\text{free}}$. To address this issue, we propose a simple but efficient design of the objective formulation that enables hybrid planning in a unified CIMPC framework. The proposed formulation is applicable to a wide range of task settings and compatible with different solvers.

As illustrated in Fig. \ref{fig:task_requirements}, we decompose the trajectory optimization into three stages. In the first stage, the end-effector is guided away from the object and moves around the object toward an intermediate waypoint $\boldsymbol{p}_{\text{lift}}$ or configuration $\boldsymbol{q}_{\text{lift}}$. This intermediate waypoint or configuration serves to simplify the computation of collision-free approaching trajectories for the robot.  In the second stage, the end-effector is driven to $\boldsymbol{p}_{\text{ref}}$ to establish contact. This design is inspired by footstep planning in legged robots \cite{acosta2025perceptive}. We next provide the detailed design of the relevant costs.

\subsubsection{Lifting Phase} In the lifting phase, the currently nearest contact point $\boldsymbol{p}_{\text{near}}$ is evaluated to be infeasible for contact. A hybrid potential is formed to guide the robot to break contact and move toward the region above the desired contact location. The potential consists of a repulsive potential from the object:
\begin{equation}
    \ell_{\text{obs}} = 
\begin{cases}
\log\left ( {\|\boldsymbol{p}_{\text{ee}} - \boldsymbol{p}_{\text{o}}\|^2 + \varepsilon} \right ) , & 
\text{if } \|\boldsymbol{p}_{\text{ee}} - \boldsymbol{p}_{\text{o}}\|^2 < \sigma, \\[8pt]
0, & \text{otherwise.}
\end{cases}
\end{equation}
and an attractive potential from lifting point $\boldsymbol{p}_{\text{lift}}$ or lifting pose $\boldsymbol{q}_{\text{lift}}$. The lifting point/pose satisfies:
$\left \| \boldsymbol{p}_{\text{lift}}-\hat{\boldsymbol{p}}_r^* \right \|_2  = d $ or $ \left \|\text{FK}({\boldsymbol{q}_{\text{lift}})}-\hat{\boldsymbol{p}}_r^*\right \|_2  = d$,
 which are responsible for attracting the robot to the location above $\hat{\boldsymbol{p}}_r^*$. The attractive potential is defined as:
\begin{equation}
    \ell_{\text{att}} = \|\boldsymbol{p}_{\text{ee}} - \boldsymbol{p}_{\text{lift}}\|^2 \quad \text{or} \quad \ell_{\text{att}} = \|\boldsymbol{q}_{\text{ee}} - \boldsymbol{q}_{\text{lift}}\|^2.
\end{equation}
The lift phase potential field cost is written as:
\begin{equation}
    \ell_{\text{lift}} = w_{\text{att}} \, \ell_{\text{att}} + w_{\text{obs}} \, \ell_{\text{obs}}.
\end{equation}
The potential field cost explicitly encodes the ``lifting" behavior, avoiding the impact of gradient-free regions on trajectory optimization during contact breaking.

\subsubsection{Placing Phase}
In the placing phase, when the robot’s end-effector reaches a position above $\hat{\boldsymbol{p}}_r^*$, or when $\boldsymbol{p}_{\text{near}}$ is evaluated as an acceptable contact location, the robot should make contact with the object. The placing phase is also achieved by a potential field cost:
\begin{equation}
    \ell_{\text{place}} = w_{\text{con}}  \ell_{\text{con}} + w_{\text{o}} \ell_{\text{o}},
\end{equation}
in which $\ell_{\text{con}}$ is the attracting potential from the reference contact point $\boldsymbol{p}_{\text{ref}}$, and $\ell_{\text{o}}$ is the attracting potential from the object center:
\begin{equation}
    \ell_{\text{con}} = \log \ \left(\|\boldsymbol{p}_{\text{ee}} - \boldsymbol{p}_{\text{o}}\|^2 + \varepsilon\right), \ \ell_{\text{o}} = \left \| \boldsymbol{p}_{\text{o}} - \boldsymbol{p}_{\text{ee}} \right \|^2.
\end{equation}
Relying solely on $\ell_{\text{con}}$ for the placing phase may fail to ensure actual contact between the robot and the object when the object mesh is inaccurate, as perception or projection errors can cause $\hat{\boldsymbol{p}}_r^*$ to deviate from the object surface. The introduction of $\ell_{\text{o}}$ mitigates this issue and makes the placement process more robust.

\subsubsection{Switching Parameter} The switching between lifting and placing behaviors is controlled by the trigger $\gamma$ in Section \ref{sec:cpo-ranking_strategy}. The overall lift-and-place cost is defined as:
\begin{equation}
    \ell_{\text{lp}} = (1-\gamma)  \ell_{\text{lift}} + \gamma \ell_{\text{place}},
\end{equation}
$\ell_{\text{lp}}$ serves as a part of the stage cost $\ell_{\text{cpo}}$ in \eqref{equ:origin_cpo_problem}. By defining the terminal cost $V_f$ the same as in \eqref{equ:obj_pose_cost}, we can obtain the formulation of CPO as:
\begin{equation}\label{equ:cpo_formulation}
    \begin{aligned}
\min_{\boldsymbol{u}_k} \quad & 
 {\textstyle \sum_{i=0}^{N-1}}  \left (w_{\text{lp}} \ell_{\text{lp}}(\boldsymbol{x}_k, \boldsymbol{q}_k, \boldsymbol{u}_k, \boldsymbol{p}_{\text{ref}}) +w_{u}\left \| \boldsymbol{u}_t \right \|^2 \right ) +w_{\text{cso}}\ell_{\text{cso}} \\
\text{s.t.}\quad & \boldsymbol{x}_{t+1} = f_{\text{cf}}(\boldsymbol{x}_t, \boldsymbol{u}_t), \quad k=0,\dots,N-1, \\
& \boldsymbol{x}_k \in \mathcal{X}, \quad k=0,\dots,N, \\
& \boldsymbol{u}_k \in \mathcal{U}, \quad k=0,\dots,N-1.
\end{aligned}
\end{equation}

\begin{figure}[!t]
 \centerline{\includegraphics[width=\columnwidth]{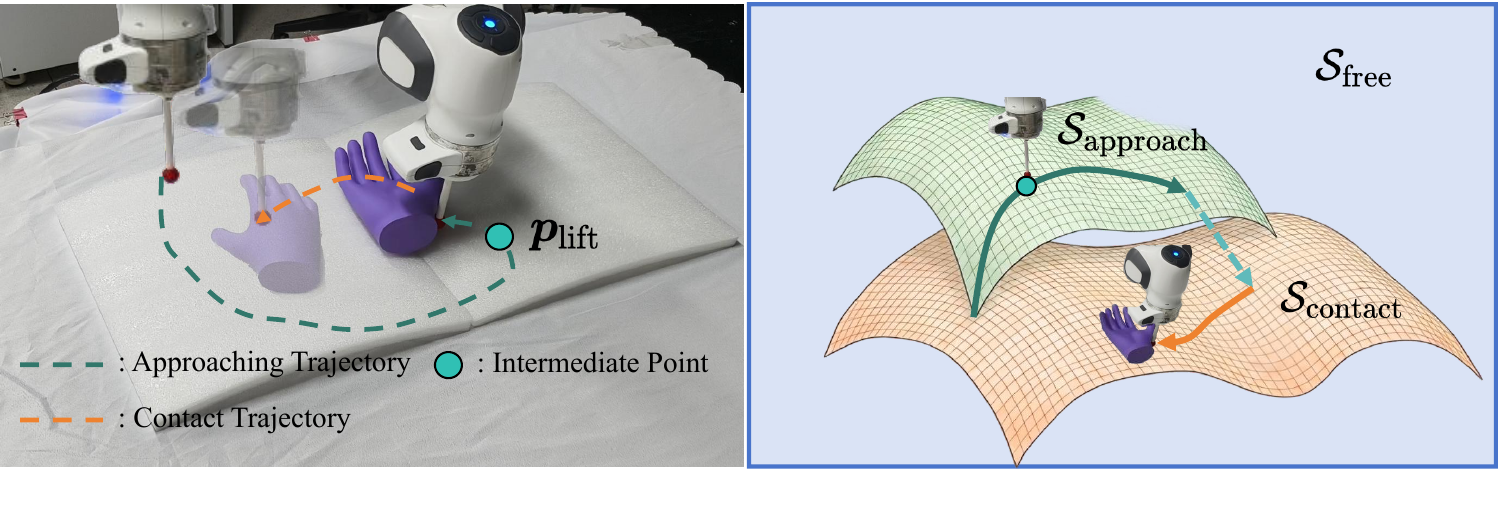}}
\caption{(a) Illustrative example of a feasible approach and contact trajectory. (b) Explanation of CPO in a geometric perspective. CPO can be characterized as operating over two manifolds: contact states in $\mathcal{S}_{\text{contact}}$ and  approaching trajectories on the manifold $\mathcal{S}_{\text{approach}}$ embedded in the contact-free state space $\mathcal{S}_{\text{free}}$. CPO must not only generate contact trajectories on $\mathcal{S}_{\text{contact}}$ but also address the gradient-sparse issue and generate trajectories on $\mathcal{S}_{\text{approach}}$.}
\label{fig:approaching_contact}
\end{figure}

\begin{figure}[!t]
 \centerline{\includegraphics[width=\columnwidth]{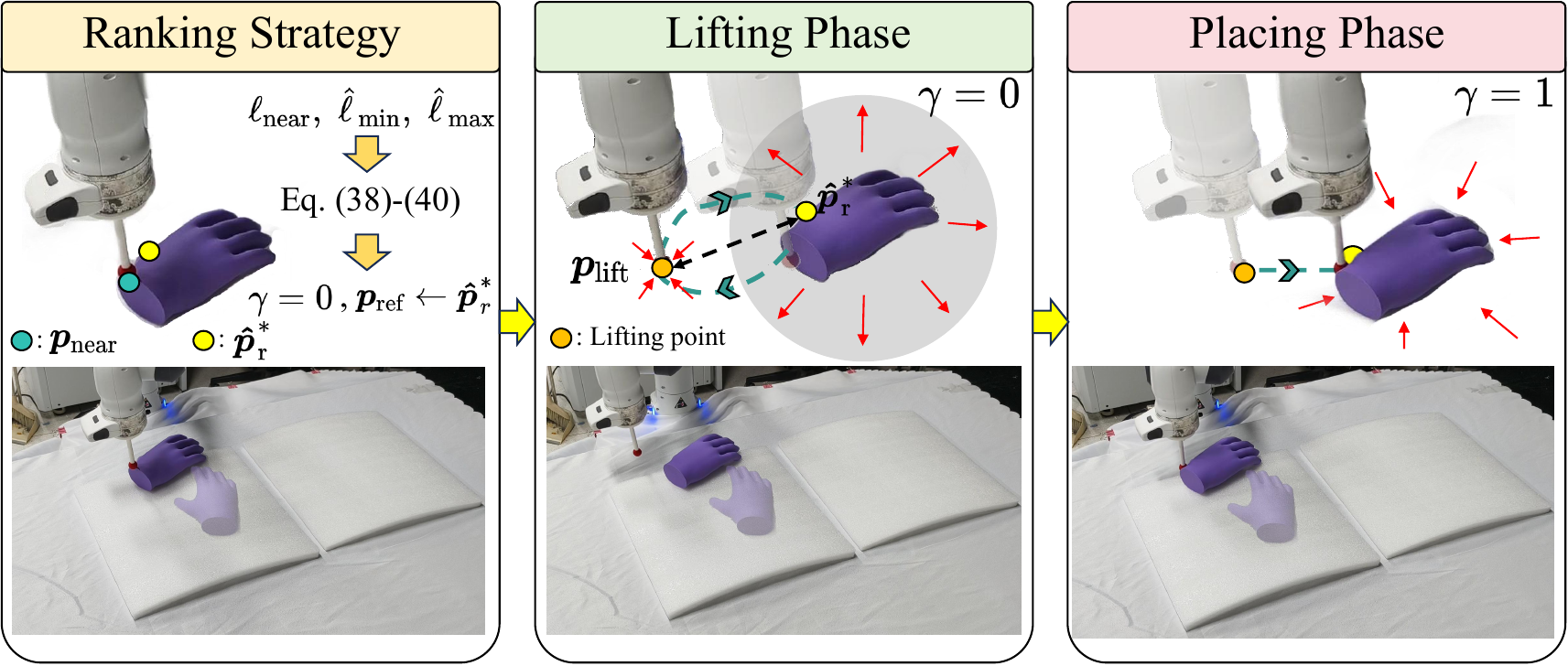}}
\caption{Illustration of the CPO-based contact planning module. (a) Ranking strategy. When $\boldsymbol{p}_{\text{near}}$ is evaluated infeasible, $\gamma$ turns to zero and $ \boldsymbol{p}_{\text{ref}} \gets \boldsymbol{\hat{p}}_r^*$. The evaluation runs continuously in real-time. (b) When $\gamma=0$,  the robot is guided by the potential field to $\boldsymbol{p}_{\text{lift}}$. (c) Once above $\hat{\boldsymbol{p}}_r^*$ or the nearest point $\boldsymbol{p}_{\text{ref}}$ to the robot is detected feasible via \eqref{equ:csqp}, $\gamma$ becomes 1 and an attractive potential is applied, guiding the robot to contact with the object.}
\label{fig:task_requirements}
\end{figure}
The objective formulation in \eqref{equ:cpo_formulation} preserves the favorable properties of the original CIMPC objective. Therefore, CPO can be interpreted as a prior-guided contact planning method. We state the following lemma.
\begin{lemma}[Compatibility of $\ell_{\text{lp}}$ and $\ell_{\text{cso}}$]
$\ell_{\text{lp}}$ and $\ell_{\text{cso}}$ are non-conflicting objectives in CPO: minimizing the lift-and-place objective does not obstruct first-order improvement of the object manipulation objective.
\end{lemma}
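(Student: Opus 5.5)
The plan is to make ``non-conflicting'' precise through gradients and to exploit the fact that the two objectives depend on different blocks of the decision variables once the phase is fixed. Write the CPO objective in \eqref{equ:cpo_formulation} as $F(\boldsymbol{u}) = w_{\text{lp}}L_{\text{lp}}(\boldsymbol{u}) + w_{\text{cso}}L_{\text{cso}}(\boldsymbol{u}) + w_u\|\boldsymbol{u}\|^2$, where $L_{\text{lp}}$ and $L_{\text{cso}}$ are the rolled-out costs under $f_{\text{cf}}$. I will show the following. At any $\boldsymbol{u}$ where $\nabla L_{\text{cso}}\neq 0$, there is a descent direction $\boldsymbol{d}$ with $\nabla L_{\text{cso}}^\top \boldsymbol{d}<0$ and $\nabla L_{\text{lp}}^\top \boldsymbol{d}\le 0$. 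Equivalently, it suffices that $\nabla L_{\text{lp}}^\top\nabla L_{\text{cso}}\ge 0$ in the relevant regime, or that the negative part of the inner product vanishes.

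The argument splits on the switching parameter $\gamma$ from \eqref{equ:ranking}, which is constant within one CPO solve.

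\emph{Case $\gamma=0$ (lifting).} Here $\ell_{\text{lp}}=\ell_{\text{lift}}$, which depends only on $\boldsymbol{p}_{\text{ee}}$ via $\ell_{\text{att}}$ and $\ell_{\text{obs}}$. The end effector is driven out of contact, since $\ell_{\text{obs}}$ repels it from the object. Along the rollout the contact impulse $\boldsymbol{\beta}^+$ in $f_{\text{cf}}$ is then zero, because $\phi>0$ makes the $\max(\cdot,0)$ inactive. As a result the object state, and hence $\ell_{\text{cso}}$, is locally independent of $\boldsymbol{u}$, so $\nabla L_{\text{cso}}=0$ and there is nothing to obstruct. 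Where the repulsion has not yet broken contact, I would use the block structure of $\boldsymbol{Q}$ in \eqref{equ.q_matrix} to separate object and robot velocity contributions, and argue that the conflict term is at most $O(\|\boldsymbol{\beta}^+\|)$.

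\emph{Case $\gamma=1$ (placing).} Here $\ell_{\text{lp}}=\ell_{\text{place}}$ attracts $\boldsymbol{p}_{\text{ee}}$ to $\boldsymbol{p}_{\text{ref}}$ and to the object center. The key step is that the ranking strategy only sets $\gamma=1$ when $\rho>\bar\rho$ in \eqref{equ:kappa}. By definition, \eqref{equ:csqp} then certifies that a force $\boldsymbol{\hat\lambda}_r\in\mathcal K$ with positive normal component applied at $\boldsymbol{p}_{\text{ref}}$ decreases $\ell_{\text{cso}}$ under $f_{\text{SCM}}$. Moving $\boldsymbol{p}_{\text{ee}}$ inward along $-\boldsymbol{n}$ at $\boldsymbol{p}_{\text{ref}}$ (the direction favored by $\ell_{\text{o}}$) increases the normal penetration. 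Through $\boldsymbol{\beta}^+ = \max(-h\boldsymbol{K}(\tilde{\boldsymbol J}\boldsymbol Q^{-1}\boldsymbol b+\tilde\phi),0)$ this produces exactly such a normal impulse. Differentiating the rollout with the chain rule shows $\nabla L_{\text{lp}}^\top \boldsymbol{d}\le0$ and $\nabla L_{\text{cso}}^\top\boldsymbol{d}<0$ for $\boldsymbol{d}$ pushing into the object at $\boldsymbol{p}_{\text{ref}}$. The gap between $f_{\text{SCM}}$ and $f_{\text{cf}}$ is handled by the SCM error bound lemma. Since $f_{\text{cf}}$ is piecewise linear, which follows from the piecewise-linear structure as in Lemma~\ref{lemma2:piecewise-linear}, the one-sided directional derivatives exist and the argument holds on each linear piece.

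The main obstacle is the placing case. The sign claim there relies on linking the SCM-based certificate (force at a sampled point) to the gradient of the $f_{\text{cf}}$ rollout with respect to robot displacement $\boldsymbol{u}$. That link needs a Jacobian transpose argument, $\boldsymbol{u}\mapsto\boldsymbol{J}_r\boldsymbol{Q}^{-1}$, together with the inner product of $\boldsymbol{J}_{\text{r}}^\top\boldsymbol{n}$ with the object-cost gradient. It may only hold approximately, up to the SCM error bound and with $\varepsilon$-small regularization. If an exact sign fails, I would weaken the claim to: the negative part of $\nabla L_{\text{lp}}^\top\nabla L_{\text{cso}}$ is bounded by the SCM error, so it is dominated once $\rho$ exceeds $\bar\rho$ by a margin.
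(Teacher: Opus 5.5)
Your case split on $\gamma$ and your two qualitative claims are the paper's proof. While lifting, the short horizon keeps the rollout off $\mathcal{S}_{\text{contact}}$, so the object cost contributes no gradient. While placing, $\boldsymbol{p}_{\text{ref}}$ is a point that CSO has endorsed as optimal or acceptably suboptimal. The paper treats that endorsement as the whole justification, closes with the convex-combination remark, and never derives a gradient sign. The gap is in your attempt to derive that sign, which you correctly call the key step; each link in it fails.

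\emph{The ranking test does not certify a decrease.} $\boldsymbol{\hat{\lambda}}_{\text{r}}=\boldsymbol{0}$ is feasible in \eqref{equ:csqp}, and the zero-force rollout does not depend on the contact location. So every candidate cost is at most the common zero-force value. A point where no admissible force helps attains that maximum and gets $\rho=1$ in \eqref{equ:ranking1} once the candidate spread exceeds $\varepsilon$, so it passes your test $\rho>\bar\rho$. Under the sign actually used in \eqref{equ:ranking} ($\gamma_{t+1}=1$ on $\kappa_t=0$), the test is passed with $\rho=0$ when all candidates tie at that value. The hysteresis branch also means $\gamma=1$ does not fix the current $\rho_t$.

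\emph{A certificate would not give your direction anyway.} Even a genuine certificate only says that \emph{some} force in $\mathcal{K}$ at $\boldsymbol{p}_{\text{ref}}$ lowers $\ell_{\text{cso}}$, typically one with a tangential part. The impulse produced by your particular $\boldsymbol{d}$ (an inward push, chosen because $\ell_{\text{o}}$ favors it) is set by $\boldsymbol{\beta}^+$ in $f_{\text{cf}}$ and need not be that force. So $\nabla L_{\text{cso}}^\top\boldsymbol{d}<0$ does not follow.

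\emph{Lemma~3 cannot transfer anything from $f_{\text{SCM}}$ to $f_{\text{cf}}$.} It compares SCM with the true object--environment LCP, takes $\boldsymbol{\lambda}_{\text{r}}$ rather than $\boldsymbol{u}$ as input, and bounds only $\|\boldsymbol{x}_{k+1}-\boldsymbol{\hat{x}}_{k+1}\|_2$. Uniformly close maps can have directional derivatives of opposite sign, since their $\max(\cdot,\boldsymbol{0})$ kinks sit in different places. Hence neither your sign claim nor your fallback bound on the negative part of $\nabla L_{\text{lp}}^\top\nabla L_{\text{cso}}$ follows, and nothing links a margin in $\rho$ to a lower bound on the positive part. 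Similarly, Lemma~\ref{lemma2:piecewise-linear} rests on Assumption~\ref{assumption:mode} ($\phi\equiv0$, frozen $\boldsymbol{\tilde{J}}_{\text{env}}$). The $N$-step $f_{\text{cf}}$ rollout has $\boldsymbol{\tilde{J}}(\boldsymbol{q})$, $\boldsymbol{\tilde{\phi}}(\boldsymbol{q})$ and $\boldsymbol{K}(\boldsymbol{q})$ varying, with robot contacts made and broken, so it is not piecewise linear in $\boldsymbol{u}_{0:N-1}$.

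The setup also needs repair. Your ``equivalently'' is wrong. By Motzkin's transposition theorem, a $\boldsymbol{d}$ with $\nabla L_{\text{cso}}^\top\boldsymbol{d}<0$ and $\nabla L_{\text{lp}}^\top\boldsymbol{d}\le0$ fails to exist only when $\nabla L_{\text{lp}}=-c\,\nabla L_{\text{cso}}$ for some $c>0$. That notion is nearly vacuous in the space of $\boldsymbol{u}_{0:N-1}$, while $\nabla L_{\text{lp}}^\top\nabla L_{\text{cso}}\ge0$ is much stronger, so you must decide which one the lemma asserts.

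Under the stronger notion, your lifting case breaks in exactly the subcase you defer. The switch to $\gamma=0$ in \eqref{equ:ranking} requires $t_{\text{contact}}\ge T_3$, so lifting typically starts with the end effector pressing on the object. Because $\rho$ is only a relative rank, pushing at the current contact may still lower $\ell_{\text{cso}}$. Retreating, which lowers $\ell_{\text{lp}}$, then raises $\ell_{\text{cso}}$ at first order, so $\nabla L_{\text{lp}}^\top\nabla L_{\text{cso}}<0$ can occur. An $O(\|\boldsymbol{\beta}^+\|)$ bound is not a sign argument, and the paper's ``usually unable to reach $\mathcal{S}_{\text{contact}}$'' glosses over the same point.

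To match the paper, state the CSO endorsement of $\boldsymbol{p}_{\text{ref}}$ and the contact-free lifting horizon as premises, rather than trying to derive them from \eqref{equ:kappa} and Lemma~3. An actual theorem would need two extra hypotheses that neither supplies:
\begin{enumerate}
\item the $f_{\text{cf}}$ impulse induced by the placing descent direction lies in the set of forces SCM certifies as improving;
\item a first-order (Jacobian) error bound holds between SCM and $f_{\text{cf}}$.
\end{enumerate}
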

\begin{proof}
    \renewcommand{\qedsymbol}{}
    See Appendix \ref{proof:lemma4}.
\end{proof}
\subsection{Solvers of CPO} 

Since CPO specifies only the problem structure without prescribing a particular solver, any reactive motion planner can be employed to generate a contact trajectory. Below, we present two kinds of solvers used in our experiments.

\subsubsection{Operational-Space Control}
\label{sec:osc_control}
We simplify the full-order contact planning problem to an end-effector point contact planning problem, avoiding high-DoF robot trajectory optimization. The resulting joint torques are generated by operational space control (OSC) \cite{nakanishi2008operational} to track the CPO output. Specifically, the state in \eqref{equ:cpo_formulation} is reduced to $\boldsymbol{s} \in \mathbb{R}^{10}$, consisting of the object position, object orientation represented by a quaternion, and the end-effector position $\boldsymbol{p}_{\text{ee}} \in \mathbb{R}^3$. Solving \eqref{equ:cpo_formulation} yields the control input $\boldsymbol{u} = \Delta \boldsymbol{p}_{\text{ee}}$.

Given the desired pose $(\boldsymbol{p}_d, \boldsymbol{R}_d)$ with $\boldsymbol{p}_d=\boldsymbol{p}_{\text{ee}}+\boldsymbol{u}$ and a preset $\boldsymbol{R}_d$, the position and orientation errors are computed as $\boldsymbol{e}_{\text{pos}} = \boldsymbol{u},
$ and $\boldsymbol{e}_{\text{R}} = \boldsymbol{R}_{\text{ee}}^\top \boldsymbol{R}_d$.
Converting $\boldsymbol{e}_{\text{R}}$ to quaternion form gives $\boldsymbol{e}_{\text{quat}} =
\begin{bmatrix}
\boldsymbol{\epsilon}_e, 
\eta_e
\end{bmatrix}^\top
\in \mathbb{R}^4$,
and after sign correction,
\begin{equation}
\boldsymbol{e}_{\text{quat}} \leftarrow
\begin{cases}
\boldsymbol{e}_{\text{quat}}, & \eta_e \ge 0,\\
-\boldsymbol{e}_{\text{quat}}, & \eta_e < 0,
\end{cases}
\qquad
\boldsymbol{e}_{\text{rot}} = -\boldsymbol{R}\boldsymbol{\epsilon}_e.
\end{equation}
By defining
$\boldsymbol{e}_{\text{pose}} =
\begin{bmatrix}
\boldsymbol{e}_{\text{pos}},
\boldsymbol{e}_{\text{rot}}
\end{bmatrix}^\top$,
the task-space and null-space torques are given by:
\begin{align}
\boldsymbol{\tau}_{\text{task}} &=
\boldsymbol{J}^{\top}(\boldsymbol{q})
\left(
-\boldsymbol{K}_x \boldsymbol{e}_{\text{pose}}
-\boldsymbol{D}_x \dot{\boldsymbol{x}}
\right), \\
\boldsymbol{\tau}_{\text{null}} &=
\left(
\boldsymbol{I} - \boldsymbol{J}^{\top}(\boldsymbol{J}^{\top})^{\dagger}
\right)
\left(
\boldsymbol{K}_{q}(\boldsymbol{q}_{\text{init}} - \boldsymbol{q})
- 2\sqrt{\boldsymbol{K}_{q}}\,\dot{\boldsymbol{q}}
\right),
\end{align}
and the final control torque is
$\boldsymbol{\tau} =
\boldsymbol{\tau}_{\text{task}}
+ \boldsymbol{\tau}_{\text{null}}
+ g(\boldsymbol{q})$. This controller is used in all simulations and real-world experiments in Section \ref{sec:experiment}, and is particularly suitable for tasks such as tabletop manipulation, where high orientation diversity is not critical.

\subsubsection{Sampling-based Motion Planner}

Another formulation is to directly solve the full-order planning problem of the robot. In this case, the state in \eqref{equ:cpo_formulation} is given by $\boldsymbol{x} \in \mathbb{R}^{7 + n_{\text{DoF}}}$, where $n_{\text{DoF}}$ denotes the number of robot DoFs. For this class of problems, we employ sampling-based MPC methods as the solver, such as MPPI \cite{williams2015model}. Recent studies  have demonstrated the impressive performance of MPPI in solving high-dimensional optimization problems for redundant robots, enabling real-time computation while avoiding issues related to gradient-based methods. The cost function is defined in the same manner as in the previous formulation, except that $\boldsymbol{p}_{\text{ee}}$ is computed via forward kinematics for the full-order problem.

%% file: experiment.tex
\section{Experiment: Nonprehensile Manipulation}
\label{sec:experiment}

In this section, we demonstrate the effectiveness of the SCSP framework through extensive experimental results. We first conduct nonprehensile manipulation experiments, which represent a complex contact-rich manipulation task with high demands on operational diversity and control robustness. The task is to adjust the pose of an object on the ground through nonprehensile manipulation, which consists of:
\begin{itemize}
    \item \textbf{On-ground rotation:} The robot needs to move the object to the target pose on the ground through pushing and rotating, which constitutes a 2D planar pushing task.
    \item \textbf{On-ground flipping:} The robot needs to move the object to the target pose on the ground through pushing and flipping, which constitutes a 6D manipulation task.
\end{itemize}
We select a set of complex-shaped objects from the ContactDB dataset \cite{brahmbhatt2019contactdb} for validation. The objects are initialized at random poses with 
$x_{\text{init}}, y_{\text{init}} \in [-0.3, 0.3]$ m, 
and Euler angles $(\phi_{\text{init}}, \theta_{\text{init}} , \psi_{\text{init}})$ 
uniformly sampled from $[-\pi, \pi]$ rad. We define the conditions of task success as: 
\begin{equation}
    \left\| \boldsymbol{p}_o - \boldsymbol{p}_{\text{goal}} \right\|
\le 0.02 \,\text{[m]}, \ 
1 - (\boldsymbol{\psi}_{\text{goal}}^\top \boldsymbol{\psi}_o)^2
\le 0.05, \label{eq:pose_success}
\end{equation}
A trial is considered successful if the conditions are satisfied within 2500 rollout steps. We evaluate the algorithm on different robot platforms, namely a fingertip\footnote{The ``fingertip'' is a floating point with a collision volume and is commonly used as a simplified setup for validating contact planning methods.} and the Franka Emika Panda, and compare different methods in terms of success rate, efficiency, and convergence error. This experiment demonstrates that SCSP can autonomously decide contact locations correctly and stably complete the task in complex multi-contact scenarios. 

Simulations are conducted in \texttt{MuJoCo} \cite{todorov2012mujoco} and  \texttt{IsaacGym} \cite{makoviychuk2021isaac} with a fixed integration time step $h=0.02 \text{s}$. For the sampling method in our CSO module, we use \texttt{CoACD} \cite{wei2022approximate} for convex decomposition of the objects and \texttt{trimesh} for mesh processing and local frame construction. The inner loop of CSO is implemented by solving the nonlinear SQP optimization using \texttt{CasADi} \cite{Andersson2019} with \texttt{SNOPT} \cite{gill2005snopt} solver, and the solver of CPO is \texttt{IPOPT} \cite{wachter2006implementation}. All the experiments are done on a desktop computer with the Intel i7-13620H processor and 32 GB RAM.

We applied certain randomization to the environment and object parameters. The friction coefficients of the object $\mu_{\text{o}}$, fingertip $\mu_{\text{r}}$, and ground $\mu_{\text{env}}$ are all set to 0.5 with random perturbations of $\pm 0.2$. The object mass is set to 0.1 kg and randomly scaled within a factor of 10. We set $\boldsymbol{M}_o=\text{diag}(50,50,50,0.05,0.05,0.05)$ and $\boldsymbol{K}_r=300 \boldsymbol{I}_7$, and the dynamic parameters used in SCM are not measured precisely, but are instead roughly set to their mean values. For CSO, we set $n_s=70$ and $\hat{\lambda}^n_{\text{max}}=0.2$. For CPO, we set $\bar{\rho}=0.75$, $T_1=5$, $T_2=10$, $T_3=15$, $\varepsilon=0.01$, $w_{\text{lp}}=1$, $w_{\text{pos}}=500$, $w_{\text{quat}}=5$, $w_\text{u}=50$, $w_{\text{obj}}=10$. We conducted parameter tuning via grid search over 100 randomly sampled trials, and the resulting parameters were fixed for all subsequent experiments. In addition, based on the definition of the object’s local coordinate frame and the planar environment in the experiments, the stable pose set $\mathcal{X}_g$ can be written as $\mathcal{X}_g=\left\{\boldsymbol{x}_o\in\mathcal{X}\;\middle|\;\boldsymbol{R}\left(\boldsymbol{x}_o\right)e_z=e_z\right\}
$ with $e_z=[0, 0, 1]^\top$. Therefore, $\ell_{\text{stab}}$ is defined as:
\begin{equation}
    \ell_{\text{stab}}=1-\bigl(\boldsymbol{R}(\boldsymbol{x}_o)e_z\bigr)^\top\bigl(\boldsymbol{R}(\boldsymbol{x}_{\text{goal}})e_z\bigr).
\end{equation}

From Table \ref{tab:comparison}, we select the methods that are applicable to 6D manipulation, operate in real time, and can handle arbitrary manipulable object geometries as baselines, including:
\begin{itemize}
    \item \textbf{DyWA} \cite{lyu2025dywa}:  An end-to-end 6D nonprehensile manipulation with generalization across diverse unseen objects. We follow the settings in their released source code by adding the test objects to the \texttt{IsaacGym}, and evaluate the performance of the single-view student policy. 
    \item \textbf{Sampling} \cite{venkatesh2025approximating}: A class of hierarchical frameworks  in which the upper layer selects a set of candidate contact points by sampling on the object envelope. The upper layer performs contact selection by evaluating the CIMPC cost and control cost in parallel, while the lower layer uses OSC to achieve approach and contact. To eliminate the effect of differences in contact modeling, we replace the original C3 model \cite{aydinoglu2024consensus} in \cite{venkatesh2025approximating} with $f_{\text{cf}}$.
    \item \textbf{I-MPC:} A CIMPC method based on a QP-based contact model. This method performs contact simulation by solving the KKT conditions of the contact model and implicitly plans the contact mode, force, and location by optimizing robot-centric control inputs. The stage cost is defined as the sum of a contact cost and a control cost $\ell = \left \| \boldsymbol{p}_{\text{obj}} - \boldsymbol{p}_{\text{ee}} \right \|^2 +\left \| \boldsymbol{u} \right \|^2$
and the terminal cost is defined the same as \eqref{equ:obj_pose_cost}. Although the contact planning formulation is unified and does not use hierarchical solving for different contact modes, the planning horizon is short to enable online computation. In addition, the method relies on explicitly defined contact costs to encourage robot–object interaction. As a result, it can only adjust the contact location to a  limited extent.
    \item \textbf{CF-MPC} \cite{jin2024complementarity}: A complementarity-free MPC method based on closed-form contact model \eqref{equ:cf_model}. The cost design is the same as in I-MPC, but it uses an improved contact model, achieving higher manipulation success rates and better computational efficiency. However, its underlying formulation remains unchanged, so its contact selection capability is still limited.
    \item \textbf{A-MPC:} We augment CF-MPC by adding an additional alignment cost into the stage cost:
\begin{equation}
\ell_{\text{align}}
=
-\frac{(  \vec{\mathbf{v}} _{\text{obj}}^\text{goal} )^\top
\vec{\mathbf{v}} _{\text{ee}}^\text{obj} + 1}{2},
\end{equation}
in which:
$
\vec{\mathbf{v}} _{\text{obj}}^\text{goal}=\frac{\boldsymbol{p}_{\text{goal}} - \boldsymbol{p}_{\text{obj}}}{\left \| \boldsymbol{p}_{\text{goal}} - \boldsymbol{p}_{\text{obj}} \right \| }, \ 
\vec{\mathbf{v}} _{\text{ee}}^\text{obj}=\frac{\boldsymbol{p}_{\text{obj}} - \boldsymbol{p}_{\text{ee}}}{\left \| \boldsymbol{p}_{\text{obj}} - \boldsymbol{p}_{\text{ee}} \right \| }
$.
 This cost is motivated by a simple intuition: when pushing an object toward a given direction, the optimal contact location often lies on the side opposite to the pushing direction, since the surface is more likely to contain the desired force within the friction cones. Although this heuristic may not hold for objects with complex geometries, it serves as a simple prior that can guide contact planning methods to escape local minima.
    \item \textbf{w/o RS:} A variant of SCSP without the Ranking Strategy in CPO. In this method, CSO and CPO are fully connected, which means that CPO completely follows and executes the contact locations predicted by CSO. This ablation demonstrates that the ranking strategy is a crucial factor for the robustness of SCSP. 
\end{itemize}

\subsection{Fingertip Manipulation in Simulation}\label{sec:fingertip_exps}

We first conduct fingertip manipulation to validate the critical role of contact selection in contact-rich manipulation. Since DyWA is an offline-trained policy, it is not applicable to the fingertip experiments and is therefore omitted. By comparing with other model-based methods, we demonstrate that explicitly introducing a contact selection module is effective for handling complex manipulation tasks.

\begin{figure}[!t]
 \centerline{\includegraphics[width=\columnwidth]{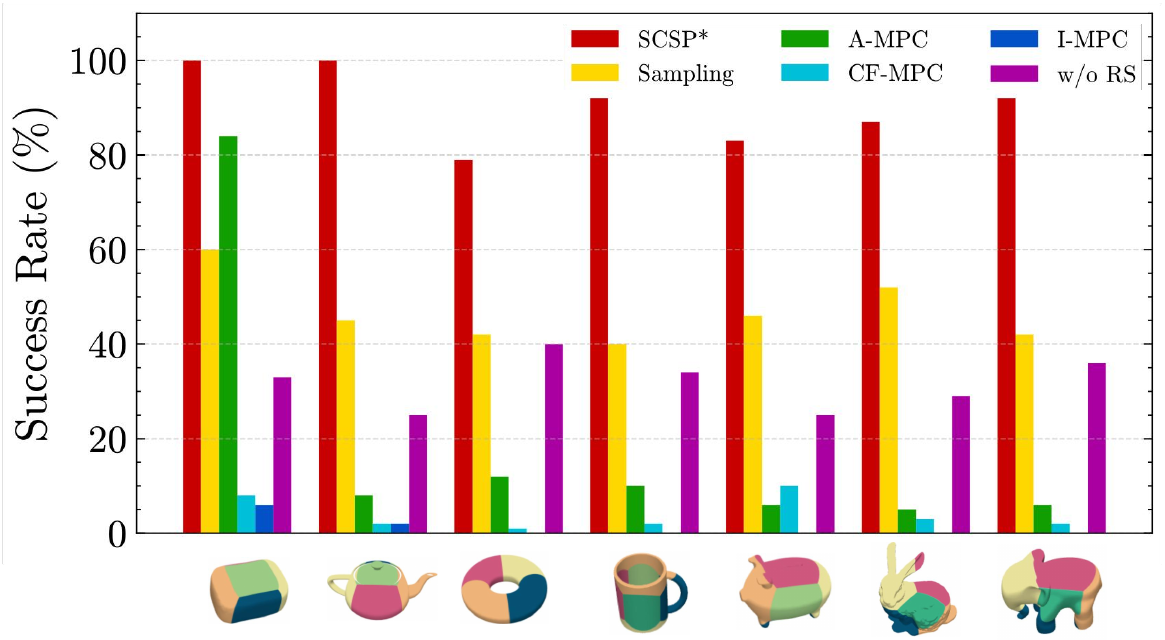}}
\caption{Experimental results of fingertip manipulation. SCSP achieves a substantially higher success rate than the other methods.}
\label{fig:results_fingertip}
\end{figure}

The experimental results are shown in Fig. \ref{fig:results_fingertip}, \ref{fig:simulation_data} and Table \ref{tab:all_data}. Compared with the baseline methods, SCSP achieves a significantly higher success rate, shorter execution time, and lower final position and quaternion errors. Our observations indicate that most of the successful cases of I-MPC and CF-MPC correspond to a good robot initial state, where the methods can succeed by applying contact forces at a fixed contact location. When the fingertip initial position is poor, they exhibit almost no adjustment capability. A-MPC achieves a significant improvement in success rate, demonstrating that even a coarse contact selection prior can substantially mitigate the local optimum problem in contact planning. Sampling achieves further improvement but at the cost of lower computational efficiency. These experimental results highlight the importance of active contact selection in complex tasks.

\subsection{Robot Manipulation in Simulation}\label{sec:robot_manipulation}

We further evaluate SCSP in redundant robot manipulation tasks. Compared with floating-based fingertip, contact problems for redundant robots introduce complex kinematic constraints. In the experiments, we demonstrate the superiority of SCSP and validate the soundness of our framework.

\begin{figure}[!t]
 \centerline{\includegraphics[width=\columnwidth]{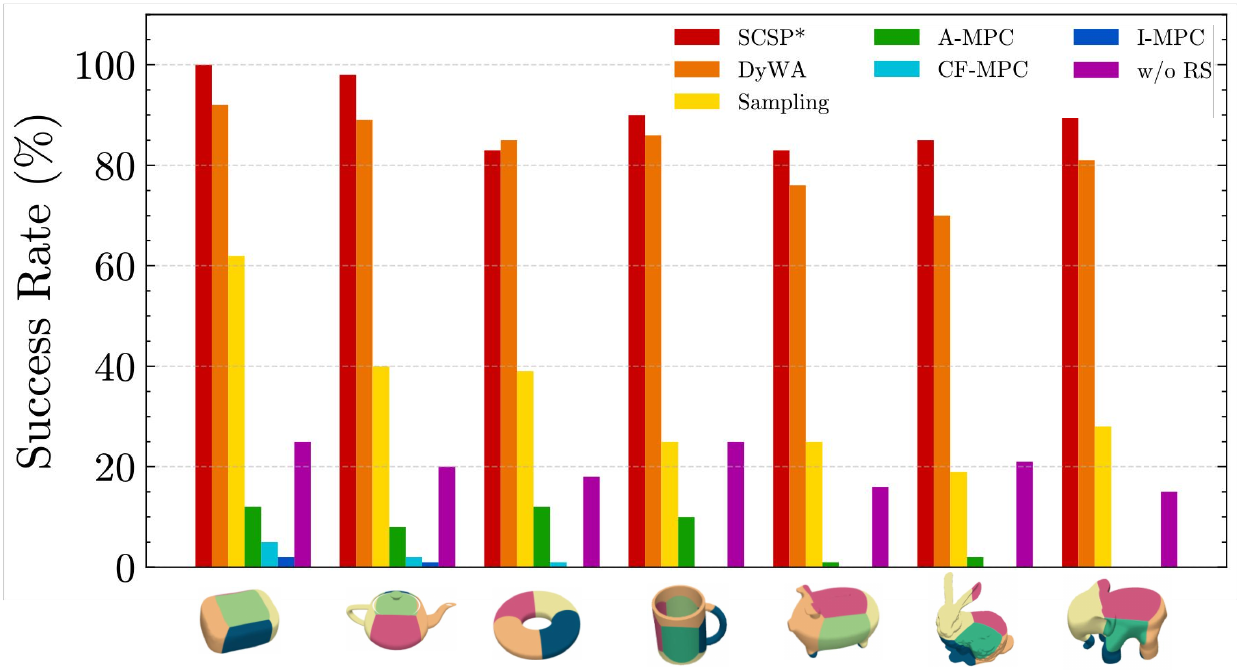}}
\caption{Experimental results of robot manipulation in simulation. SCSP significantly outperforms other model-based methods and exceeds the performance of well-trained learning-based methods.}
\label{fig:results_panda}
\end{figure}

Due to the strong dependencies of DyWA’s source code, which are difficult to transfer, we follow its setup and conduct all experiments in \texttt{IsaacGym} for consistency, with other settings similar to Section \ref{sec:fingertip_exps}. All methods use our link-mounted sphere as the end-effector to replicate point contacts. Since DyWA uses the Franka gripper, we adjust the link length so that the sphere’s center coincides with the gripper’s center. The cartesian stiffness $\boldsymbol{K}_x=\text{diag}(1000,1000,1000,50,50,50)$ and $\boldsymbol{D}_x=\sqrt{2\boldsymbol{K}_x}$. All methods use the same settings as in the fingertip experiments, except that $\boldsymbol{u} \in \mathbb{R}^7$ in I-MPC, CF-MPC and A-MPC. The $\boldsymbol{p}_{\text{ee}}$ is obtained via differentiable forward kinematics.

\begin{table*}[t]
  \begin{center}
  \caption{
  {Comparison of nonprehensile manipulation tasks of the fingertip experiment and the robot experiment (200 trials).}
}
 \label{tab:all_data}
  
  \resizebox{\linewidth}{!}{
  \begin{tabular}{c|c|cc|cc|cc|cc|cc}
    \toprule[1pt]
    \multirow{2}{*}{\textbf{Platform}}
    & \multirow{2}{*}{\textbf{Method}}
    & \multicolumn{2}{c|}{\textbf{Success Rate}}
    & \multicolumn{2}{c|}{\textbf{Execution Time} (s)}
    & \multicolumn{2}{c|}{\textbf{Planning Time} (s)}
    & \multicolumn{2}{c|}{\textbf{Obj. Pos. Error} (m)}
    & \multicolumn{2}{c}{\textbf{Obj. Quat. Error} (rad)} \\

    \cmidrule(lr){3-4} \cmidrule(lr){5-6} \cmidrule(lr){7-8} \cmidrule(lr){9-10} \cmidrule(lr){11-12}
     &  & Rot. & Flip. & Rot. & Flip. & Rot. & Flip. & Rot. & Flip. & Rot. & Flip. \\

    \midrule
    \multirow{6}{*}{\textbf{Fingertip}}

    & \textbf{SCSP*}
    & \textbf{0.97} & \textbf{0.85}
    & \textbf{10.09} & \textbf{15.68}
    & 0.023 & 0.023
    & \textbf{0.01} & \textbf{0.008}
    & \textbf{0.016} & \textbf{0.019} \\

    & \textbf{DyWA}
    & / & /
    & / & /
    & / & /
    & / & /
    & / & / \\

    & \textbf{Sampling}
    & 0.89 & 0.61
    & 18.53 & 18.56
    & 0.205 & 0.246
    & 0.015 & 0.016
    & 0.018 & 0.020 \\
    
    & \textbf{I-MPC}
    & 0.05 & 0.03
    & 19.62 & 24.89
    & 0.032 & 0.034
    & 0.016 & 0.017
    & 0.037 & 0.037 \\

    & \textbf{CF-MPC}
    & 0.07 & 0.05
    & 18.15 & 24.83
    & \textbf{0.006} & \textbf{0.006}
    & 0.018 & 0.012
    & 0.025 & 0.027 \\

    & \textbf{A-MPC}
    & 0.50 & 0.18
    & 12.21 & 19.72
    & 0.012  & 0.068
    & 0.015 & 0.016
    & 0.022 & 0.021 \\

    & \textbf{w/o RS}
    & 0.55 & 0.14
    & 19.42 & 23.78
    & 0.024 & 0.026
    & 0.019 & 0.019
    & 0.037 & 0.042 \\
    
    \midrule
    \multirow{7}{*}{\textbf{Franka}}

    & \textbf{SCSP*} 
    & \textbf{0.95} & \textbf{0.81}
    & \textbf{15.16} & \textbf{18.21}
    & 0.024 & \textbf{0.025}
    & \textbf{0.009} &\textbf{0.012}
    & \textbf{0.015} & \textbf{0.021} \\

    & \textbf{DyWA}
    & 0.86 & 0.69
    & 18.12 & 20.56
    & 0.031 & 0.031
    & 0.015 & 0.016
    & 0.019 & \textbf{0.021} \\
    
    & \textbf{Sampling}
    & 0.56 & 0.39
    & 18.97 & 19.68
    & 0.228 & 0.252
    & 0.014 & \textbf{0.012}
    & 0.016 & 0.022 \\

    & \textbf{I-MPC}
    & 0.01 & 0.00
    & 22.62 & /
    & 0.083 & 0.085
    & 0.018 & /
    & 0.036 & / \\

    & \textbf{CF-MPC}
    & 0.01 & 0.01
    & 21.18 & 22.56
    & 0.082 & 0.081
    & 0.016 & 0.018
    & 0.026 & 0.027 \\

    & \textbf{A-MPC}
    & 0.10 & 0.01
    & 18.32 & 23.72
    & \textbf{0.019}  & 0.028
    & 0.016 & 0.016
    & 0.021 & 0.020 \\

    & \textbf{w/o RS}
    & 0.36 & 0.18
    & 19.36 & 21.57
    & 0.023 & \textbf{0.025}
    & 0.019 & 0.018
    & 0.031 & 0.036 \\

    \bottomrule[1pt]
  \end{tabular}
  }
  \end{center}
  
\end{table*}

\begin{figure*}
 \centerline{\includegraphics[width=\textwidth]{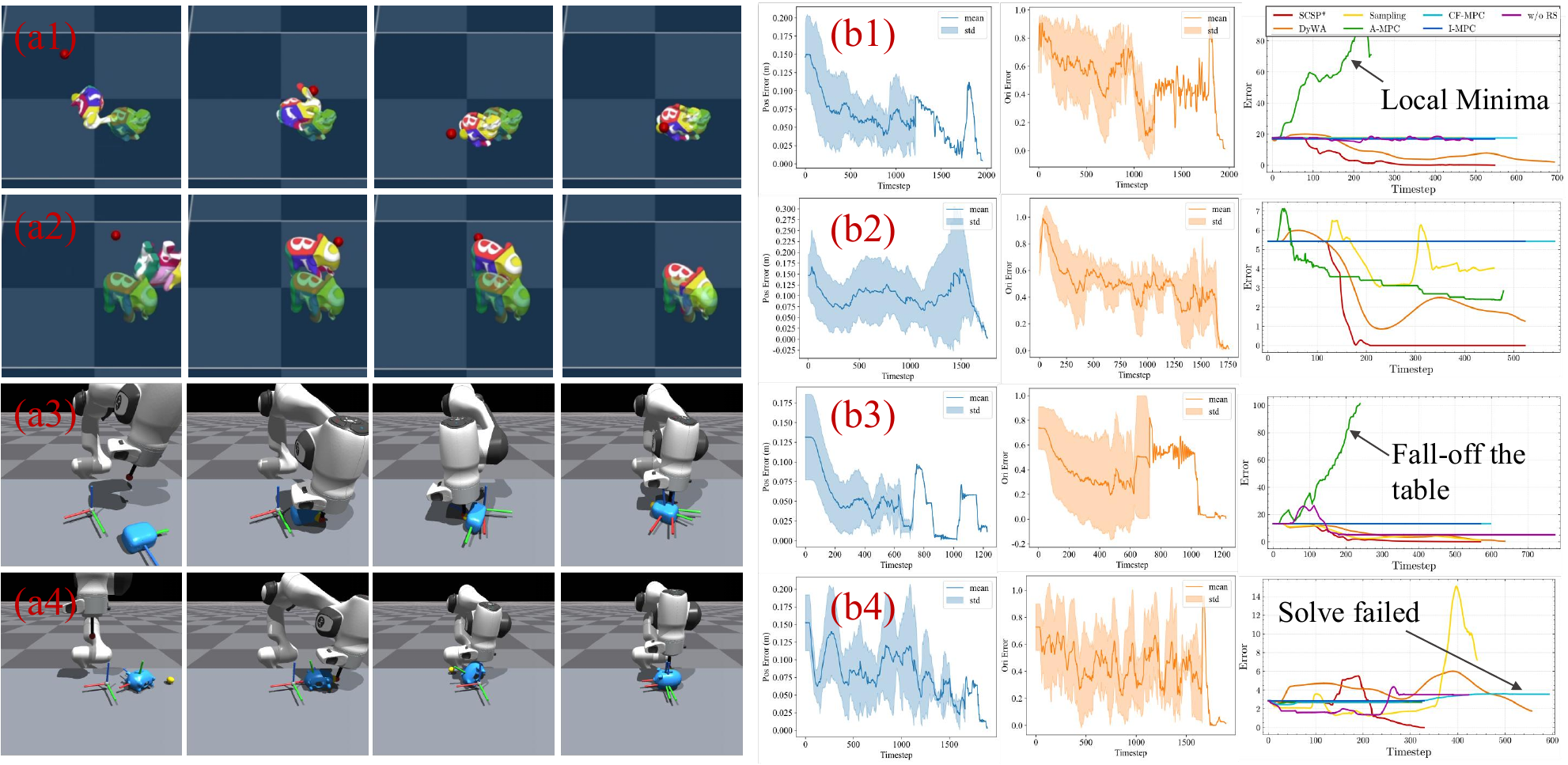}}
\caption{Nonprehensile manipulation experiments in simulation. (a1)-(a4): Snapshots of SCSP in the fingertip and robot experiments, where the goal pose is visualized either transparently or as a floating coordinate frame. (b1)-(b4): The left two panels show the mean and standard deviation curves of the position and orientation errors over steps across 10 SCSP trials. The right panel shows the pose error $\ell_{\text{pose}}$ of different methods during manipulation. The comparison methods either require longer execution time or fail due to local minima or solver failures.}\label{fig:simulation_data}
\end{figure*}

The experimental results are shown in Fig. \ref{fig:results_panda}, \ref{fig:simulation_data} and Table \ref{tab:all_data}. SCSP achieves a much higher success rate than other model-based algorithms and outperforms DyWA on all objects except the torus. Additionally, SCSP achieved the shortest execution time, demonstrating that SCSP can generate correct contact locations through CSO, thereby efficiently minimizing the pose cost of the object. SCSP also achieved the smallest convergence position error and quaternion error. I-MPC Furthermore, we observed that during experiments, DyWA tends to flip objects by pressing the edges of the top surface, whereas our method makes more comprehensive decisions regarding whether to lift the object by pushing the bottom or by pressing the top. Although many flipping tasks can be accomplished through pressing, for certain objects (such as elongated objects), the pressing approach requires applying substantial torque to the object and demands sufficiently high coefficients of friction. Based on our observations, the global contact selection mechanism in CSO enables SCSP to exhibit greater diversity in its manipulation behaviors compared with other methods, rather than converging to a single strategy. 

In addition, we conducted two ablation studies. CF-MPC and A-MPC can be regarded as ablations of the contact selection strategy. CF-MPC performs contact planning without contact selection, whereas A-MPC replaces the physical reasoning of CSO-based contact planning with a simple prior. Compared with SCSP, both methods show a substantial decrease in success rate, indicating that reasonable contact selection is crucial for  complex 6D nonprehensile manipulation tasks. The w/o RS variant serves as an ablation of the ranking strategy. We observed that, since w/o RS fully follows the high-frequency output of CSO, it frequently interrupts ongoing manipulation and switches contact points, resulting in a significant drop in success rate and an increase in execution time. This suggests that it is necessary to comprehensively evaluate the contact locations produced by CSO and to balance smooth execution against the global optimality of the selected contact location.

We further validated the impact of different numbers of sampled candidate points on performance. As shown in Table \ref{tab:cso_sampling}, when the number of sampled points reaches 70, nearly the highest success rate is achieved with relatively low SCSP computation time. As the number of sampled points continues to increase, the success rate plateaus and computational efficiency deteriorates. This indicates that increasing the number of sampled points does not necessarily improve performance. There exists an optimal convergence value. For nonprehensile manipulation tasks, 70 to 100 sampled points are sufficient to characterize the geometric features of object geometries.

\begin{table}[t]
\centering
\caption{Evaluation of the effect of the number of candidate points}
\label{tab:cso_sampling}
\begin{tabular}{lccccc}
\toprule
\textbf{Sampling numbers in CSO} & \textbf{5} & \textbf{10} & \textbf{70} & \textbf{120} & \textbf{500} \\
\midrule
\textbf{Avg. Success Rate}      & 0.25  & 0.59 & \textbf{0.88} & \textbf{0.88} & 0.85 \\
\textbf{SCSP Computation Time} (s)  & \textbf{0.012} & 0.017 & 0.025 & 0.032 & 0.089 \\
\bottomrule
\end{tabular}
\end{table}

\subsection{Real-world Experiments}

Finally, we validate the robustness of SCSP to in real-world experiments. Most model-based methods perform well in simulation, but in real-world experiments, they often rely on highly accurate perception systems, such as motion capture. The main challenge in replacing exact perception with purely vision-based perception is that errors in pose observation can perturb contact estimation, leading to incorrect inference of contact states and consequently inaccurate contact dynamics. To the best of our knowledge, our method is the first model-based approach that can achieve purely vision-based, complex geometries, model mismatch contact-rich manipulation.

\begin{figure}[!t]
 \centerline{\includegraphics[width=\columnwidth]{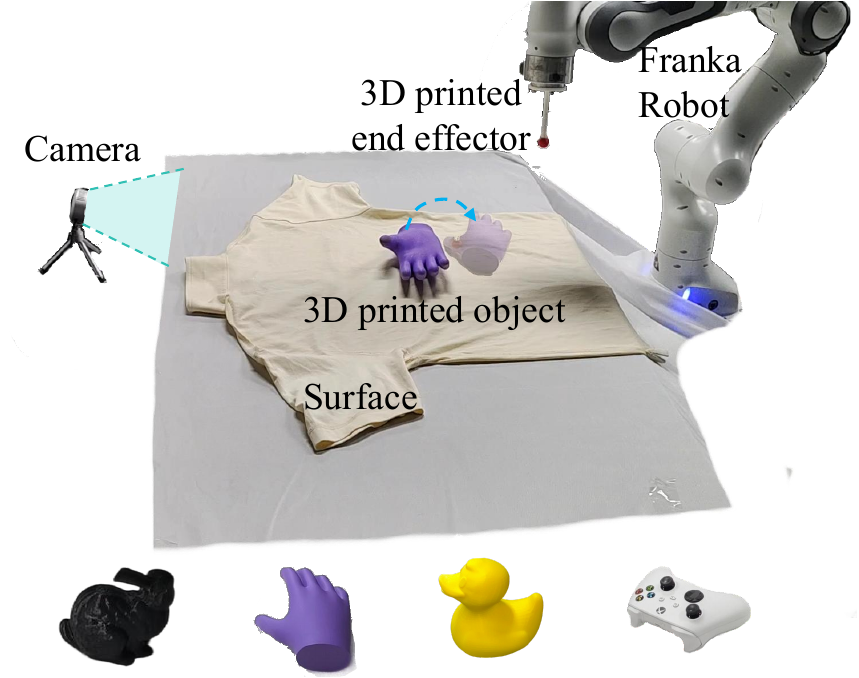}}
\caption{Hardware setup of the real-world experiments.}
\label{fig:illustrative_example_1}
\end{figure}
\subsubsection{Experimental Setup}
We conduct experiments using a Franka Emika Panda equipped with a 3D-printed end-effector, and arbitrarily select 4 irregularly shaped objects: Duck, Bunny, Hand, and Xbox Controller, as shown in Fig. \ref{fig:illustrative_example_1}. We evaluate on-ground rotation on Duck and Bunny, and on-ground flipping on Bunny, Hand, and Xbox Controller. The object CAD models are obtained from the ContactDB dataset and online repositories. To validate the robustness of our method under inaccurate dynamics, we provide only rough estimates of the object’s physical parameters such as mass and friction coefficients. Moreover, instead of relying on precise motion capture for exact perception, we perform state estimation by \texttt{FoundationPose} \cite{wen2024foundationpose} using a single-view camera (Realsense D435i). In real-world experiments, precise object–environment contact locations are not directly observable. To address this, we approximately reconstruct the task environment in \texttt{MuJoCo} for contact estimation, as shown in the Fig. \ref{fig:illustrative_example_i}. We perform a simple calibration procedure to determine the object’s scale parameter and positional offsets in the simulator to eliminate contact estimation misclassification. Specifically, we use Franka’s kinesthetic teaching mode to move the end-effector and make contact with several randomly selected locations on the object, and then tune the object’s scale and the offsets $[\Delta x, \Delta y, \Delta z]$ such that the simulator can approximately predict whether the end-effector is in contact with the object. The goal pose range, success conditions, and maximum execution steps for the task are identical to those in the simulation experiments. Sampling and DyWA use the same settings as in the previous section, and the outputs of DyWA are executed using an impedance controller.
\begin{figure}[!t]
 \centerline{\includegraphics[width=\columnwidth]{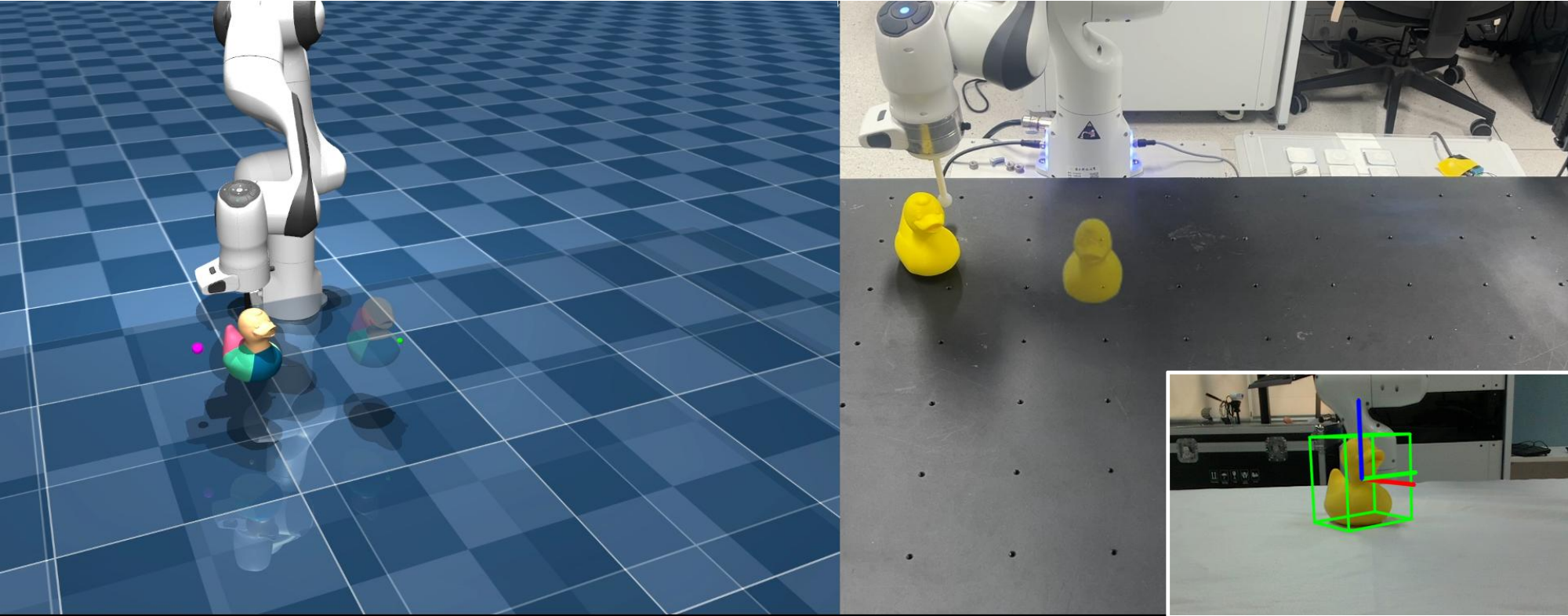}}
\caption{Perception method in the real-world experimental setting. Object pose is detected using \texttt{FoundationPose} and filtered via a Kalman filter. Contact estimation is performed using a \texttt{MuJoCo} implementation that approximates the experimental environment.}
\label{fig:illustrative_example_i}
\end{figure}

\begin{figure*}
 \centerline{\includegraphics[width=\textwidth]{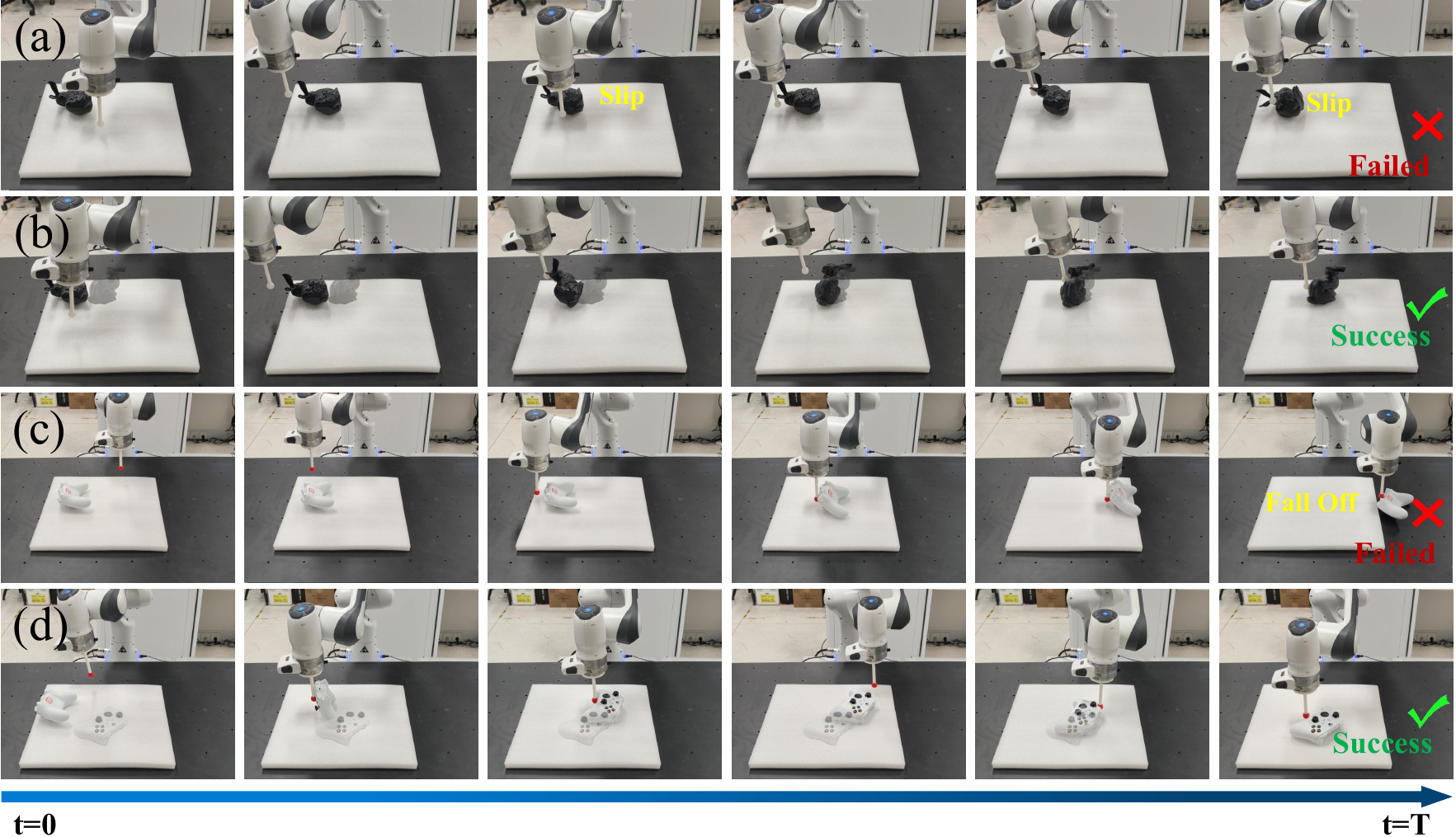}}
\caption{Snapshots of nonprehensile manipulation in real-world experiments. (a) \textbf{DyWA} failed to flip the bunny. (b) \textbf{SCSP} succeeds in flipping the bunny. (c) \textbf{Sampling} failed by pushing the Xbox controller to fall off the foam padding. (d) \textbf{SCSP} succeed in flipping the Xbox controller.}\label{fig:results_real_world_flip}
\end{figure*}
The OSC controller is implemented in \textsf{c++}. Unlike in simulation experiments, control in real-world deployment is subject to disturbances, leading to control lag and steady-state errors. To address this, we designed an adaptive controller for precise control, which we refer to as the Incremental OSC. We introduce an intermediate position vector $\boldsymbol{p}_{\text{inc}} = \boldsymbol{p}_{\text{inc}} + \boldsymbol{K}_{\text{inc}}\boldsymbol{u}$,
and replace $\boldsymbol{e}_{\text{pos}}$  with $\boldsymbol{e}_{\text{pos}}=\boldsymbol{p}_{\text{inc}}-\boldsymbol{p}_{\text{ee}}$. The steady-state error is adaptively eliminated under the high-frequency self-integrating effect. In the experiment, we set
$\boldsymbol{K}_{\text{inc}}=0.01 \boldsymbol{I}_3$, $\boldsymbol{K}_x = \text{diag}(500,500,500,10,10,10)$ and $\boldsymbol{D}_x = \sqrt{2\boldsymbol{K}_x}$.

\subsubsection{Results and Discussion}

\begin{table*}[t]
\centering
\caption{Success rates (\%) for on-ground rotation and flipping on different surfaces and objects.}
\label{tab:real_world_success}
\setlength{\tabcolsep}{5pt}
\begin{tabular*}{\textwidth}{@{\extracolsep{\fill}} lcccccccccc}
\toprule
& \multicolumn{4}{c}{\textbf{On-ground rotation}}
& \multicolumn{6}{c}{\textbf{On-ground flipping}} \\
\cmidrule(lr){2-5} \cmidrule(lr){6-11}
\textbf{Method}
& \multicolumn{2}{c}{Duck}
& \multicolumn{2}{c}{Bunny}
& \multicolumn{2}{c}{Bunny}
& \multicolumn{2}{c}{Hand}
& \multicolumn{2}{c}{Xbox} \\
\cmidrule(lr){2-3} \cmidrule(lr){4-5}
\cmidrule(lr){6-7} \cmidrule(lr){8-9} \cmidrule(lr){10-11}
& Smooth & Rough
& Smooth & Rough
& Smooth & Rough
& Smooth & Rough
& Smooth & Rough \\
\midrule
Sampling
& 2 / 5 & 1 / 5
& 1 / 5 & 0 / 5
& 0 / 5 & 0 / 5
& 0 / 5 & 0 / 5
& 0 / 5 & 0 / 5 \\
DyWA
& 2 / 5 & 2 / 5
& 2 / 5 & 1 / 5
& 0 / 5 & 0 / 5
& 0 / 5 & 0 / 5
& 0 / 5 & 0 / 5 \\
SCSP*
& \textbf{5 / 5} & \textbf{4 / 5}
& \textbf{5 / 5} & \textbf{5 / 5}
& \textbf{2 / 5} & \textbf{3 / 5}
& \textbf{3 / 5} & \textbf{4 / 5}
& \textbf{4 / 5} & \textbf{5 / 5}\\
\bottomrule
\end{tabular*}
\end{table*}

We first evaluate the success rate of SCSP across different objects and tabletop materials to assess its robustness under model inaccuracies. The experimental results are shown in Fig. \ref{fig:results_real_world_flip}, \ref{fig:real_world_continous} and Table \ref{tab:real_world_success}. They indicate that our method maintains strong performance even under perception noise and model mismatch.

We conducted each experiment on two different tabletop materials: smooth and rough as in Fig. \ref{fig:real_world_disturbance}. For the on-ground rotation experiments, the smooth surface denotes a tabletop with low friction, whereas the rough surface denotes cotton clothes. For the on-ground flipping experiments, which require greater friction, we use foam padding as the rough surface and cotton cloth as the smooth surface. Note that our method does not measure friction coefficients, object mass, or other physical parameters, but instead uses coarse estimates. Additionally, biases in contact estimation and noise in object pose estimation can interfere with the decision-making of SCSP. The experiment results suggest that SCSP can effectively cope with model inaccuracies and disturbances, which is typically challenging for model-based methods. 

As shown in Table \ref{tab:real_world_success}, SCSP achieved a 100\% success rate in the on-ground rotation experiments with complex object geometries and also showed strong performance in the flipping tasks, whereas the comparison methods failed in nearly all flipping experiments. We observe that the failures of DyWA mainly arose from contact slippage and violations of the Franka acceleration limits during pressing, while the failures of Sampling were primarily caused by the selection of unstable contact locations. Due to the complex geometry and relatively smooth material of the 3D printed objects, the fingertip sometimes fails to establish stable contact with the object, resulting in relative sliding that causes the execution of the contact plan planned by SCSP to fail. Additionally, surface friction also affects the success rate of flipping. As shown in the results, SCSP achieves a notably higher success rate on the rough surface compared with the smooth surface.

We further conduct continuous manipulation experiments, as shown in Fig. \ref{fig:real_world_continous}. During manipulation, we repeatedly perturb the object pose to evaluate the real-time performance, closed-loop planning capability, and robustness of SCSP under accumulated perception errors. Under repeated perturbations, both the pose estimated from \texttt{FoundationPose} and the contact estimation become biased. However, SCSP exhibits strong replanning capability and consistently completes the continuous manipulation tasks under disturbances.

We combine the preceding experimental results to analyze the source of SCSP’s robustness to model mismatch. We conclude that: 1) the ablation study reveals that optimality of the contact prior is not necessary. Although the prior is important for contact planning, even a coarse contact location selection as a prior can still bring substantial performance improvement. This observation also provides insight into the role of the world model: how to properly connect the prior with the planning framework seems to be more important than the optimality and accuracy of the prior itself. The designs of the CSO and the ranking strategy are both developed with this perspective. 2) CPO has a strong closed-loop nature. The real-time performance and robustness of CIMPC methods have been thoroughly validated in previous work. CPO is a prior-guided contact planning framework built upon the CIMPC formulation. It inherits the robustness of CIMPC while improving manipulation diversity. Therefore, the SCSP framework is able to achieve robust performance.

\begin{table}[t]
\centering
\caption{Computation time of SCSP}
\label{tab:metrics}
\begin{tabular*}{0.8\columnwidth}{@{\extracolsep{\fill}} l c}
\toprule
\textbf{Metric} & \textbf{Value} \\
\midrule
\multicolumn{2}{c}{\textbf{CSO Computation time}} \\
\midrule
Sampling cost time (s)           & $4.82 \pm 3.85$ \\
Exchange cost time (ms)          & $0.002 \pm 0.01$ \\
MIQP cost time (ms)              & $0.23 \pm 0.01$ \\
CSO solving time (ms)            & $16.21 \pm 0.58$ \\
\midrule
\multicolumn{2}{c}{\textbf{CPO Computation time}} \\
\midrule
RS cost time (s)                 & $0.01 \pm 0.00$ \\
Average CPO solving time (ms)  & $8.68 \pm 0.31$ \\
\bottomrule
\end{tabular*}
\end{table}

\begin{figure*}
 \centerline{\includegraphics[width=\textwidth]{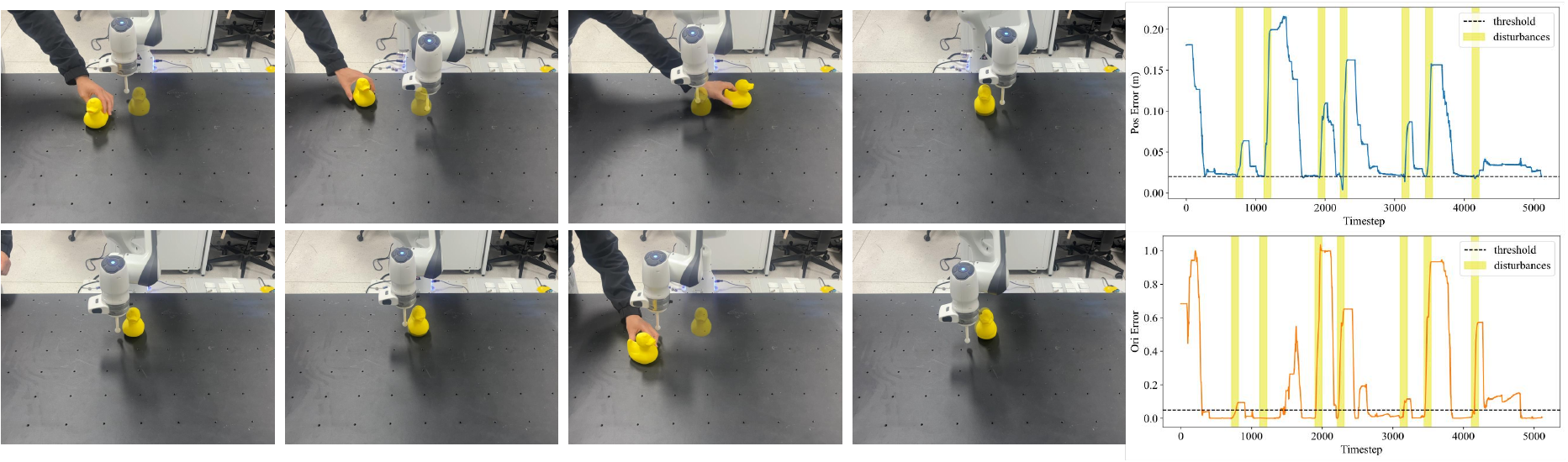}}
\caption{Snapshots of manipulation under external disturbances. The object’s pose is continuously perturbed during manipulation to demonstrate SCSP’s real-time robustness under such disturbances. The panels on the right shows the temporal evolution of position and orientation errors.}\label{fig:real_world_continous}
\end{figure*}

\begin{figure}
 \centerline{\includegraphics[width=\columnwidth]{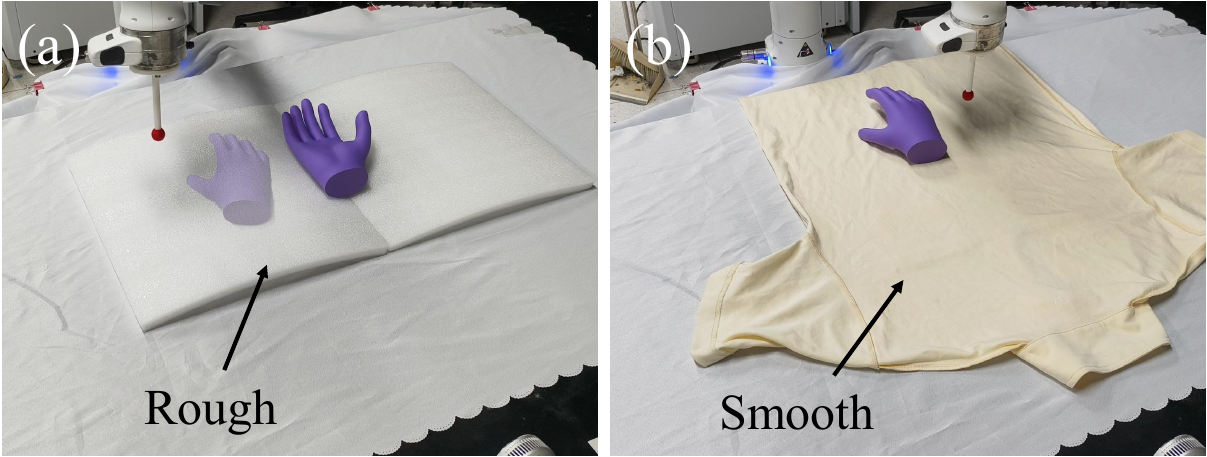}}
\caption{Robustness validation of SCSP on surfaces with different friction. (a) Foam. (b) Cotton clothes.}\label{fig:real_world_disturbance}
\end{figure}

%% file: generalize.tex
\section{Generalization to Multiple Tasks}\label{sec:generalize}

We further implement SCSP in different scenarios to evaluate its generalization. Using multi-contact tasks and long-horizon tasks as examples, we demonstrate that SCSP can perform general physical reasoning and execute more complex manipulation tasks on more underactuated objects.

\subsection{Case Study I: Multi-Contact Task}

SCSP can be easily extended to multi-contact tasks. We illustrate the generalization of SCSP with following example:
\subsubsection{Soccer Dribbling}
We use a soccer dribbling task to demonstrate the capability of SCSP in dynamic scenarios. In this task, SCSP plans contact with the ball using one of the two feet and drives it toward the goal through multi-contact manipulation. The main challenges lie in the highly dynamic behavior of the ball, which makes stable and sustained contact difficult, and in coordinating the two feet effectively. We read the ball position $\boldsymbol{p}_k$ and estimate its velocity $\boldsymbol{v}_k$ from the simulation using a Kalman filter. The 3D model of the ball is then input to the SCSP to compute the actions $\boldsymbol{u} \in \mathbb{R}^{2\times 4}$ for both feet, where the control vector for each foot is $\left [ x,y,z,\theta_{\mathrm{yaw}} \right ] $. We define activation triggers separately for the left and right feet: using the midpoint between both feet as the center, the space is divided into left and right half-spaces. When $\boldsymbol{x}_k$ appears in a half-space, the corresponding left or right foot is activated. Additionally, workspace constraints (represented as semi-transparent cubes) are imposed on each foot to prevent singular gait patterns. The number of CSO sampling points is set to $n_s=70$, and points close to the ground $\left \{ p_i\in\mathcal{I}_{\text{obj}} \ | \ z_i < 0.02\mathrm{m}  \right \}$ are masked out. We randomly selected 10 goal positions for the soccer ball, and SCSP continuously controlled the robot’s feet to dribble the ball to each target, achieving a 100\% success rate. The experimental results are shown in Fig. \ref{fig:soccer}. This experiment serves as a toy example, simulating the approaching and contact phases in humanoid soccer dribbling. In the future, we plan to extend SCSP to more complex motion planners to realize a more complete demonstration \cite{yin2025visualmimic, kong2026learning}.
\begin{figure}[!t]
 \centerline{\includegraphics[width=\columnwidth]{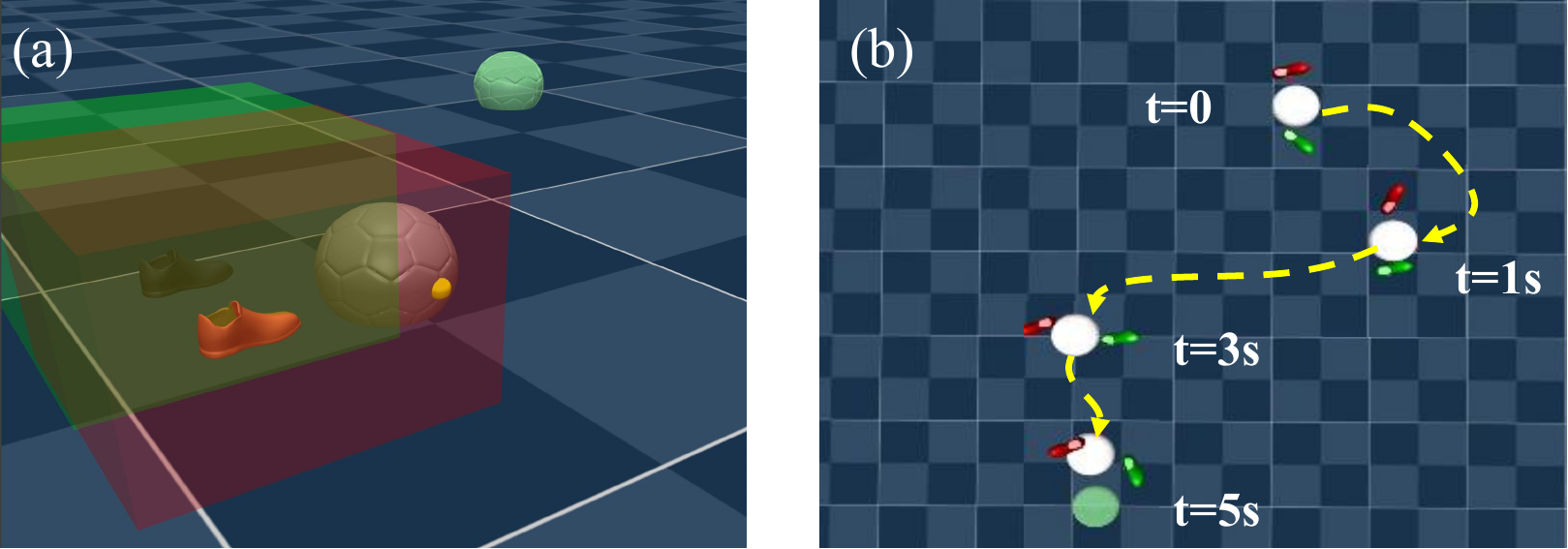}}
\caption{Soccer dribbling task. (a) Experiment setup. The task is to kick the soccer to the target (translucent). Red and green cubes show footworkspaces, and the yellow point indicates CSO-planned contact locations. (b) Gribbling trajectory.}
\label{fig:soccer}
\end{figure}
\subsubsection{Tilted Pushing} 

SCSP can be extended to manipulation tasks in uneven environments, such as tilted pushing. Stable object poses on a slope are inherently fragile, requiring accurate contact location selection and online replanning. We compare open-loop methods, which plan a contact sequence offline, with SCSP, which replans online during execution. As the baseline, we use STOCS-3D to compute an offline sequence of contact locations and contact forces, and execute this sequence using our CPO. Experiments are conducted in \texttt{IsaacGym} with a 15$^\circ$ slope and a friction coefficient of 0.1. For SCSP, we set $w_{\text{stab}}=0$ and $\hat{\lambda}^n_{\text{max}}=0.5$, while keeping all other parameters the same as in Section~\ref{sec:robot_manipulation}. We selected the teapot and duck (used in previous experiments) and randomly set 10 stable poses on the slope as goal poses, and pushed and rotated the objects from below to reach these goals. The results in Table~\ref{tab:generalization_tasks_result} and Fig.~\ref{fig:tilted_push} show that the real-time replanning capability of SCSP provides substantially better robustness.

\begin{figure}[!t]
 \centerline{\includegraphics[width=\columnwidth]{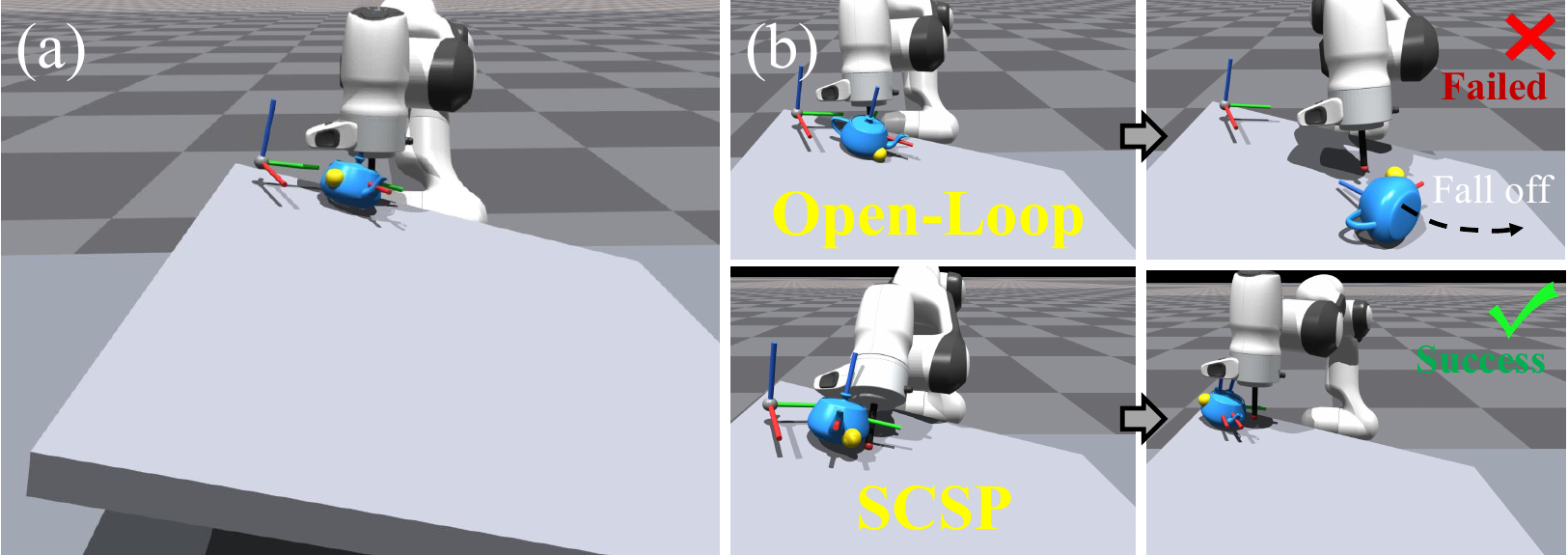}}
\caption{Experiment setup. The task is to push an object on a surface tilted at 15$^\circ$. (a) The open-loop method fails when execution errors occur, causing the teapot to fall off the tilted surface. (b) SCSP can replan online and thus successfully complete the dynamic task.}\label{fig:tilted_push}
\end{figure}
\subsubsection{Grasping \& Dexterous Manipulation} 
SCSP can also be applied to predict grasping poses, such as bimanual grasping and multi-finger grasping. Following \cite{lin2026bidexgrasp}, we use the Grasp Wrench Boundary (GWB) \cite{borst2004grasp} to select grasping regions and identify the top-K pairs of potential valid regions for grasping. Subsequently, sampling of the candidate set $\mathcal{I}_{\text{obj}}$ is performed within these regions. The objective of the CSO inner loop optimization is then transformed to force closure:
\begin{equation}\label{equ:force_closure}
    \begin{aligned}
\min_{\boldsymbol{\hat{\lambda}}_r \in \mathcal{K}}
\quad &
\sum_{j=1}^{N_d} \left\| \xi \boldsymbol{w}^{(j)} - \boldsymbol{J}_o \boldsymbol{\hat{\lambda}}_r^{(j)} \right\|_2^2
\;+\;
w_{\lambda} \sum_{j=1}^{n_d} \left\| \boldsymbol{\hat{\lambda}}_r^{(j)} \right\|_2^2 \\
\text{s.t.}\quad
& \sum_{i=1}^{K} \hat{\lambda}_r^{(j),n} \ge \zeta, \ 
\forall j=1,\dots,n_d .
\end{aligned}
\end{equation}
where $n_c$ is the number of robot contact points, $n_d$ denotes the number of sampled disturbance wrenches, $\boldsymbol{w}^{(j)} \in \mathbb{R}^6$ represents the $j$-th disturbance wrench. $\boldsymbol{\lambda}_r^{(j)}$ denotes the robot contact force corresponding to disturbance $\boldsymbol{w}^{(j)}$. $\xi$ is the disturbance scaling factor, $\zeta$ the minimum total normal force required to maintain an active grasp and $w_{\lambda}=0.1$. During each rollout, the CSO in SCSP solves \eqref{equ:force_closure} for the nearest points $\boldsymbol{p}_{\text{near}}^i$ to the robot end-effector or Allegro fingers additional to the top 5 pairs of potential grasping pairs, thereby searching for grasping poses online. We use an open-loop baseline that computes the optimal grasping locations offline using CSO and executes them online with CPO for comparison. For bi-manual and multi-finger grasping, we selected five objects each, placed them on a pedestal (black), randomly perturbed their yaw angles $\theta_{\text{yaw}}$ and scaled their sizes by a random factor $\varsigma \in \left \{ 0.5,1,1.5,2 \right \} $, conducting 20 trials for each tasks. The results are shown in Table~\ref{tab:generalization_tasks_result} and Fig.~\ref{fig:grasping}. SCSP significantly outperforms the baseline in success rate, but note that it fails on objects with complex geometry due to \texttt{MuJoCo}'s automatic convexification of collision mesh.
\begin{figure}[!t]
 \centerline{\includegraphics[width=\columnwidth]{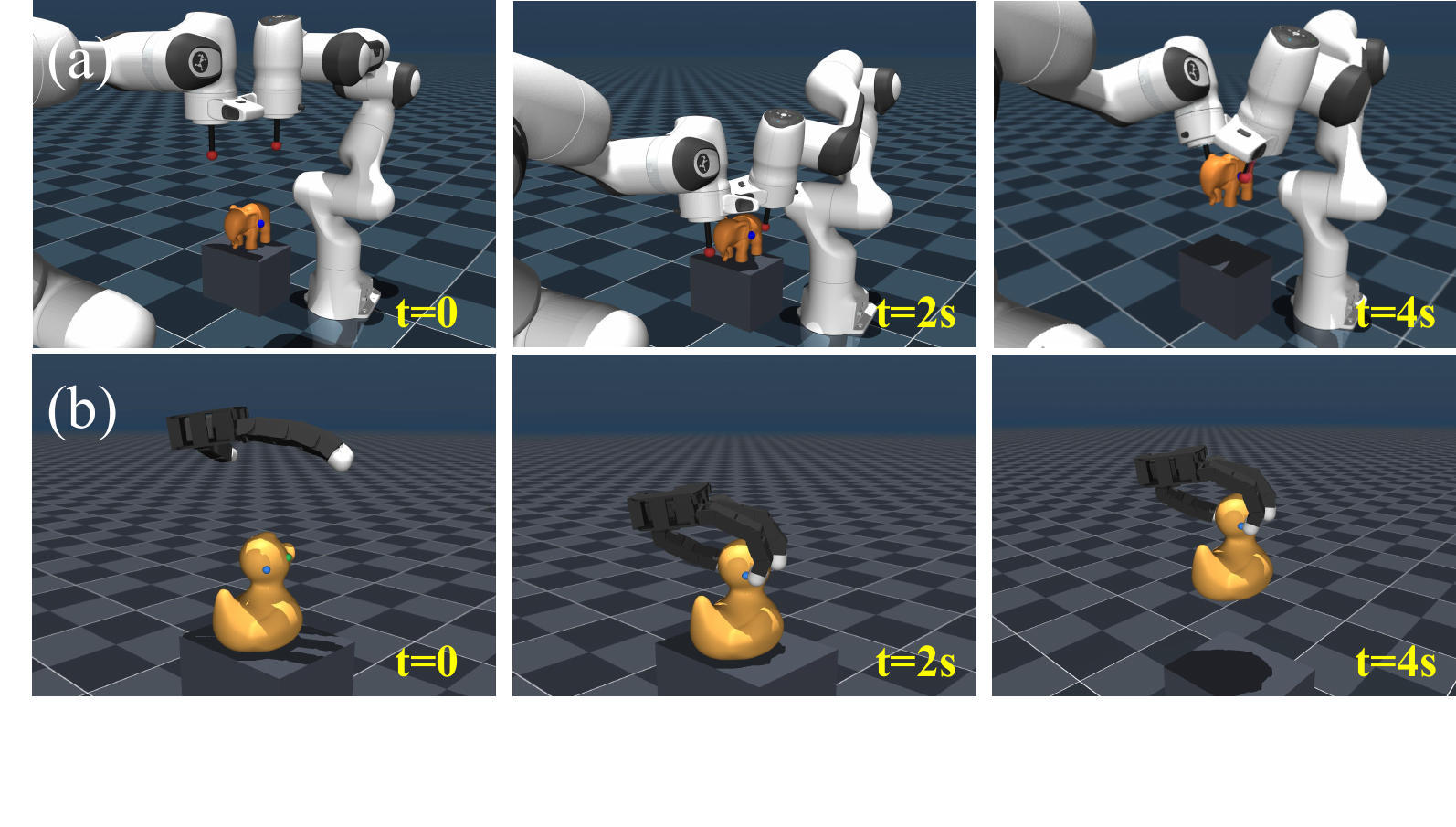}}
\caption{Snapshots of the multi-contact grasping tasks. (a) Bi-manual grasping of an elephant with the size scaling factor $\varsigma=1$. (b) Multi-finger grasping of a duck with $\varsigma=1.5$.}
\label{fig:grasping}
\end{figure}
\subsection{Case Study II: Long-Horizon Task}
We use this experiment to demonstrate that when SCSP has the capability to actively select contact points, the contact planning algorithm exhibits physical inference abilities, enabling it to accomplish long-horizon tasks. The tasks are to pull the drawer to a target displacement and to open the microwave door to a target angle. In both tasks, the object is semantically segmented into different parts, such as handles, while non-manipulable parts are excluded. For the drawer, we directly compute the final pulled-out pose as the goal pose. For the microwave, which involves rotational motion, we set the goal pose of each rollout as the \texttt{SLERP} interpolation between the current state and the desired door state. The parameter and cost settings of SCSP are the same as those in Section~\ref{sec:robot_manipulation}, with $w_{\text{stab}}=0$. The solver for CPO is an MPPI-based motion planner~\cite{bhardwaj2022storm}. As a baseline, we use an open-loop method that plans the contact sequence offline using CSO and executes it online with CPO, in order to demonstrate the effectiveness of online contact selection. We randomly perturbed the distance and yaw angle between the drawer or microwave and the Franka base by $\Delta d \in \left [ -0.1, 0.1 \right ] \mathrm{m}$ and $\Delta \theta_{\text{yaw}} \in \left [ -\frac{\pi}{4} , \frac{\pi}{4}  \right ]$ and conducted 10 trials for each object. The experimental results are shown in Fig.~\ref{fig:long_horizon}, Table~\ref{tab:generalization_tasks_result} and the \href{https://sites.google.com/view/scsp-robot}{website} videos. Although the success rates are comparable, the open-loop method cannot handle certain execution errors, such as contact slippage or the robot approaching singular configurations.
\begin{figure}[!t]
 \centerline{\includegraphics[width=\columnwidth]{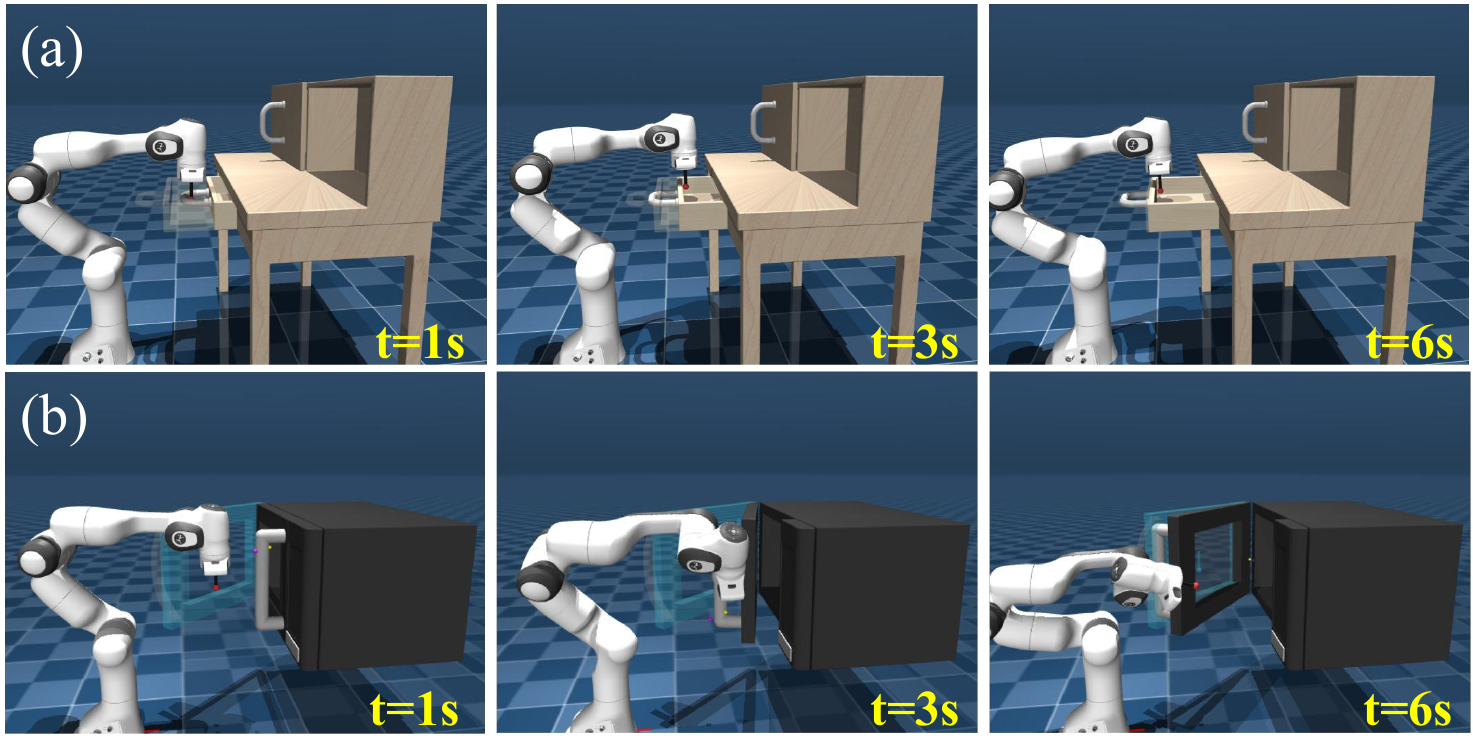}}
\caption{Snapshots of long-horizon tasks. Given only the target pose of the object, SCSP performs online reasoning and selects appropriate contact locations to accomplish the task. (a) Opening a drawer. (b) Opening the microwave door.}
\label{fig:long_horizon}
\end{figure}

\begin{table*}[t]
\centering
\caption{Success rates (\%) across different tasks.}
\label{tab:generalization_tasks_result}
\setlength{\tabcolsep}{5pt}
\begin{tabular*}{\textwidth}{@{\extracolsep{\fill}} lccccccccc}
\toprule
\textbf{Metric}
& \textbf{Soccer Dribbling}
& \multicolumn{2}{c}{\textbf{Tilted Pushing}}
& \multicolumn{2}{c}{\textbf{Bi-Manual Grasping}}
& \multicolumn{2}{c}{\textbf{Multi-finger Grasping}}
& \multicolumn{2}{c}{\textbf{Long-horizon Tasks}} \\
\cmidrule(lr){2-2} \cmidrule(lr){3-4} \cmidrule(lr){5-6} \cmidrule(lr){7-8} \cmidrule(lr){9-10}
& SCSP
& Open-Loop & SCSP
& Open-Loop & SCSP
& Open-Loop & SCSP
& Open-Loop & SCSP \\
\midrule
\textbf{Success Rate}
& \textbf{10 / 10}
& 0 / 20 & \textbf{18 / 20}
& 0 / 20 & \textbf{15 / 20}
& 0 / 20 & \textbf{16 / 20}
& 15 / 20 & \textbf{20 / 20} \\
\bottomrule
\end{tabular*}
\end{table*}

%% file: discussion.tex
\section{Discussion \& Limitations}
\label{sec:discussion}

Through the above theoretical and experimental analyses, we demonstrate the importance of contact selection and the validity of SCSP. However, SCSP also exhibits limitations.

The broader significance of CSO lies in providing a world model that supplies priors for contact locations. Our CSO is implemented using mesh discretization and a lightweight contact model. Although it is computationally efficient, it lacks accuracy in complex contact scenarios and cannot perform sequential state prediction. Furthermore, CSO is restricted to modeling contacts as point contacts, preventing more realistic simulations. Consequently, SCSP faces limitations when applied to contact planning involving robots with complex end-effector geometries or tasks involving deformable objects. Replacing the CSO module with a more expressive world model, such as \cite{huang2026pointworld}, may lead to improved performance.

CPO is an online trajectory optimization method for generating manipulation trajectories in both contact-free and contact-rich stages. Existing CPO solvers are based on MPC, which limits their applicability to more complex tasks such as loco-manipulation. Furthermore, CPO struggles to maintain continuous contact along object surfaces while switching contact locations, which could enable smoother manipulation in certain tasks. The current rule-based ranking strategy could be replaced with a more expressive latent representation, and the trajectory optimization module could be substituted with a generative motion planner such as \cite{xu2026interprior} to support more complex motion generation.

We further discuss insights from the SCSP architecture. The experimental results demonstrate that SCSP can perform online reasoning in complex contact-rich manipulation tasks and exhibits robustness under perceptual noise and inaccurate dynamics surpassing traditional methods. The contact priors produced by CSO enhance the diversity of manipulation strategies, while CPO leverages sub-optimal solutions as candidate contact locations to mitigate instability caused by inaccuracies in CSO outputs due to model errors. A similar observation holds for the world model: when validation tasks are clearly out of distribution, guidance from the world model helps the planner escape local optima. Notably, the accuracy of the world model is less critical than how it interfaces with the underlying planner, which has a greater impact on task success. We believe that replacing CSO with a more general world model and CPO with a sufficiently generative neural motion planner could yield a unified multi-task framework for a wide range of contact-rich manipulation tasks. These directions will be explored in future work.

%% file: conclusion.tex
\section{Conclusion} 
\label{sec:conclusion}

In this work, we propose a novel cascaded optimization framework named SCSP for contact-rich manipulation. Most existing contact planning methods lack the ability to actively select contacts and autonomously generate diverse manipulation trajectories, which limits them to relatively simple tasks. To address this issue, we design a cascaded optimization framework consisting of CSO and CPO to perform simultaneous contact selection and contact planning. CSO enables online optimization of the optimal contact points for objects with arbitrarily complex geometries through sampling and a surrogate contact model. CPO integrates the contact priors produced by CSO using a ranking strategy to evaluate the priors, and online generates the approaching, contact, and contact-switching trajectories. Extensive experimental results demonstrate the superiority of SCSP and the soundness of its design. We further validate the generalization of our method across numerous complex tasks, showing that it can serve as a general framework for autonomous contact-rich manipulation.

%% file: appendix.tex
\subsection{Proof of Closest Approximation}\label{proof:closest_approximation}
Let $\boldsymbol{W}_{ii} \in \mathbb{R}^{n \times n}$ be the original Delassus matrix. We seek a diagonal matrix $D = \operatorname{diag}(d_1, \dots, d_n)$ 
that minimizes the Frobenius norm of the approximation error:
\begin{equation}
    \boldsymbol{D}^* = \arg \min_{\substack{\boldsymbol{D}}} 
    \left\| \boldsymbol{W}_{ii} - \boldsymbol{D} \right\|_{\mathrm{F}}^2.
\end{equation}
Expanding the Frobenius norm, we have
\begin{equation}
    \| \boldsymbol{W}_{ii} - \boldsymbol{D} \|_\text{F}^2 = \sum_{j \neq k} |(\boldsymbol{W}_{ii})_{jk}|^2 + \sum_{j=1}^{n} |(\boldsymbol{W}_{ii})_{jj} - d_j|^2.
\end{equation}
The off-diagonal terms are not affected by choosing $D$, and so the minimization reduces to independent scalar problems for each diagonal entry:
\begin{equation}
d_j^* = \arg \min_{d_j} |(\boldsymbol{W}_{ii})_{jj} - d_j|^2, \quad j = 1, \dots, n.
\end{equation}
It is immediate that the minimizer is $d_j^* = (\boldsymbol{W}_{ii})_{jj}$. 
Hence \eqref{equ:max_decomposition} is the closest diagonal approximation in the Frobenius norm with a small regularization for  numerical stability.
\subsection{Proof of Lemma 3}
\label{proof:lemma3}
The SCM simplification consists of two steps: (i) decomposing the Delassus matrix $W_{\text{env}}$ into $3\times 3$ blocks for each contact, and (ii) taking the diagonal elements (block-diagonal approximation) to compute the closed-form solution. By standard perturbation theory for LCP, the solution is Lipschitz continuous with respect to perturbations in the matrix:
\begin{equation}
\left\| \boldsymbol{\lambda}_{\text{env}} - \hat{\boldsymbol{\lambda}}_{\text{env}} \right\|_2 
\le \sigma \, \left\| \boldsymbol{W}_{\text{env}} - \boldsymbol{D} \right\|_2,
\end{equation}
where $\sigma$ is inversely proportional to the minimal eigenvalue of $W_{\text{env}}$.  

The velocity error induced by this approximation is then
\begin{multline}
\|\boldsymbol{v}^+ - \boldsymbol{\hat{v}}^+\|_2 
= \Bigg\| \boldsymbol{M}_o^{-1} \sum_i \boldsymbol{J}_i^\top 
\big(\boldsymbol{\lambda}_{{\text{env}},i} - \boldsymbol{\hat{\lambda}}_{{\text{env}},i}\big) \Bigg\|_2 \\
\le \|\boldsymbol{M}_o^{-1}\|_2 \sum_i \|\boldsymbol{J}_i^\top\|_2 \, 
\|\boldsymbol{\lambda}_{{\text{env}},i} - \boldsymbol{\hat{\lambda}}_{{\text{env}},i}\|_2.
\end{multline}
Finally, the state update error satisfies
\begin{equation}
\| \boldsymbol{x}_{k+1} - \boldsymbol{\hat{x}}_{k+1} \|_2 = h \|\boldsymbol{v}^+ - \boldsymbol{\hat{v}}^+\|_2 \le \boldsymbol{C} \, \| \boldsymbol{W}_{\text{env}} - \boldsymbol{D} \|_2,
\end{equation}
where $\boldsymbol{C} = h \, \|\boldsymbol{M}_o^{-1}\|_2 \sum_i \|\boldsymbol{J}_i^\top\|_2 \, \sigma$. Since $\boldsymbol{D}$ is the closest approximation of $\boldsymbol{W}_{\text{env}}$, $\| \boldsymbol{W}_{\text{env}} - \boldsymbol{D} \|_2$ is bounded, thus the SCM provides a bounded-error approximation to the true complementarity solution.
\subsection{Proof of Lemma 4}
\label{proof:lemma4}
Consider the lift-and-place cost 
$\ell_{\text{lp}} = (1-\gamma) \, \ell_{\text{lift}} + \gamma\, \ell_{\text{place}}$. 
We analyze the different cases of the switching parameter $\gamma$.

\textbf{Case 1:} When $\gamma = 1$,
$\ell_{\text{lp}} = \ell_{\text{place}}$. 
The placing objective only guides the robot to reference points 
$\boldsymbol{p}_{\text{ref}} \in \{\boldsymbol{\hat{p}}^*_{\text{r}}, \boldsymbol{p}_{\text{near}}\}$ 
on the object, it does not obstruct the first-order improvement of the objective since $\boldsymbol{\hat{p}}^*_{\text{r}}$ and $\boldsymbol{p}_{\text{near}}$ is the optimal or an acceptable suboptimal solution returned by CSO for improving $\ell_{\text{obj}}$.

\textbf{Case 2:} When $\gamma = 0$, $\ell_{\text{lp}} = \ell_{\text{lift}}$ and the lifting phase moves the end-effector away from the object to an intermediate pose 
above the reference contact. Since CPO uses a short horizon, it is usually unable to reach $\mathcal{S}_{\text{contact}}$ while moving away from the object, and thus unable to obtain the gradient of $\ell_{\text{obj}}$. Therefore, $\ell_{\text{obj}}$ does not affect $\ell_{\text{lp}}$.

Since $\ell_{\text{lp}}$ is a convex combination of lift and place costs, $\ell_{\text{lp}}$ and $\ell_{\text{obj}}$ are non-conflicting objectives.